\newcommand{\nosemic}{\renewcommand{\@endalgocfline}{\relax}}
\newcommand{\dosemic}{\renewcommand{\@endalgocfline}{\algocf@endline}}
\let\oldnl\nl
\newcommand{\nonl}{\renewcommand{\nl}{\let\nl\oldnl}}
\newcommand{\oset}[3][0ex]{%
  \mathrel{\mathop{#3}\limits^{
    \vbox to#1{\kern-2\ex@
    \hbox{$\scriptstyle#2$}\vss}}}}
\newcommand{\optimal}[1]{\oset{\scalebox{.5}{$\star$}}{#1}\!}
\LetLtxMacro\orgvdots\vdots
\LetLtxMacro\orgddots\ddots
\newtheorem{corollary}{Corollary}
\newtheorem{problem*}{Problem}
\newtheorem{theorem}{Theorem}
\newtheorem{lemma}{Lemma}
\newtheorem{definition}{Definition}
\newtheorem{proposition}{Proposition}
\newtheorem*{example*}{Example}
\DeclareMathOperator*{\argmax}{argmax}
\DeclareMathOperator*{\argmin}{argmin}
\newcommand*{\defeq}{\stackrel{\text{def}}{=}}
\newcommand{\btheta}{{\bm{\theta}}}
\newcommand{\cA}{\mathcal{A}}
 \newcommand{\cL}{\mathcal{L}}
\newcommand{\cM}{\mathcal{M}} \newcommand{\cN}{\mathcal{N}}
 \newcommand{\cY}{\mathcal{Y}}
 \newcommand{\cX}{\mathcal{X}}
\newcommand{\EE}{\mathbb{E}} \newcommand{\RR}{\mathbb{R}}
\newcommand{\fpx}{p^{\leftrightarrow}_{\bm{x}}}
\newcommand{\fp}[1]{p^{\leftrightarrow}_{#1}}
\newcommand\hugP[1]{\left(#1\right)}
\title{A Fairness Analysis on Private Aggregation of Teacher Ensembles}
\author{Cuong Tran\\
    Syracuse University\\
    \texttt{ctran@syr.edu}\\
  \And
  My H.~Dinh\\
  Syracuse University\\
  \texttt{mydinh@syr.edu}\\
  \And
  Kyle Beiter\\
  Syracuse University\\
  \texttt{kbeiter@syr.edu}\\
  \And
  Ferdinando Fioretto\\
  Syracuse University\\
  \texttt{ffiorett@syr.edu}\\
}
\begin{document}

\maketitle\sloppy\allowdisplaybreaks

\maketitle\sloppy\allowdisplaybreaks
 
\begin{abstract} 
 The Private Aggregation of Teacher Ensembles
 (PATE) \cite{papernot2018scalable} is an important
 private machine learning framework. It combines multiple
 learning models used as teachers for a student model that
 learns to predict an output chosen by noisy voting among the
 teachers. The resulting model satisfies differential privacy and has
 been shown effective in learning high quality private models in
 semisupervised settings or when one wishes to protect the data
 labels.

This paper asks whether this privacy-preserving framework introduces
or exacerbates bias and unfairness and shows that PATE can introduce
accuracy disparity among individuals and groups of individuals. 
The paper analyzes
which algorithmic and data properties are responsible for the
disproportionate impacts, why these aspects are affecting different
groups disproportionately, and proposes guidelines to mitigate these
effects. The proposed approach is evaluated on several datasets and
settings.
\end{abstract}

\section{Introduction}
\label{sec:introduction}

The availability of large datasets and inexpensive computational
resources has rendered the use of machine learning (ML) systems
instrumental for many critical decisions involving individuals, 
including criminal assessment, landing, and hiring, all of which have 
a profound social impact.  
A key concern for the adoption of these system regards how they handle bias and discrimination 
and how much information they leak about the individuals whose data 
is used as input.

Differential Privacy (DP) \cite{dwork:06} is an algorithmic property 
that bounds the risks of disclosing sensitive information of individuals 
participating in a computation.
It has become the paradigm of choice in privacy-preserving machine 
learning systems and its deployments are growing at a fast rate. 
However, it was recently observed that DP systems may induce biased 
and unfair outcomes for different groups of individuals 
\cite{NEURIPS2019_eugene,pujol:20,xu2020removing}. 

The resulting outcomes can have significant societal and economic
impacts on the involved individuals: classification errors may
penalize some groups over others in important determinations
including criminal assessment, landing, and hiring 
\cite{NEURIPS2019_eugene} or can result in disparities regarding
the allocation of critical funds and benefits \cite{pujol:20}.
{\em While these surprising observations are becoming increasingly 
common, their causes are largely understudied and not fully understood.}

This paper makes a step toward this important quest, and studies the 
disparate impacts arising when training a model using \emph{Private Aggregation 
of Teacher Ensembles} (PATE) \cite{papernot2018scalable} an important
and popular privacy-preserving machine learning framework. It combines multiple
agnostic learning models used as teachers for a student model that
learns to predict an output chosen by noisy voting among the
teachers. The resulting model satisfies differential privacy and has
been shown effective in learning high quality private models in
semisupervised settings or when one wishes to protect the data labels.

The paper analyzes which properties of the algorithm and the data are 
responsible for the disproportionate impacts, why these aspects are 
affecting different individuals or groups of individuals disproportionately, and proposes a solution that may aid mitigating these effects. 

In summary, the paper makes the following contributions:
\begin{enumerate}[leftmargin=*, parsep=0pt, itemsep=2pt, topsep=0pt]
\item It uses a fairness notion that relies on the concept of excessive risk, and measures the direct impact of privacy to the model accuracy for individuals or groups.
\item It analyzes this fairness notion in PATE, a state-of-the-art privacy-preserving ML framework. 
\item It isolates key components of the model parameters and the data properties which are responsible for the observed disparate impacts. 
 \item It studies when and why these components affect different  individuals or groups disproportionately. 
 \item Finally, based on these findings, it proposes a method that may aid mitigating these unfairness effects while retaining high accuracy.
\end{enumerate}

To the best of the authors knowledge, this work represents a first effort toward understanding the reasons of the disparate impacts in 
privacy-preserving ensemble models.

\section{Related Work} 
\label{sec:related_work}
The study of the disparate impacts caused by privacy-preserving algorithms has recently seen several important developments.
\citet{ekstrand:18} raise questions about the tradeoffs involved between privacy and fairness. \citet{cummings:19} study the tradeoffs arising between differential privacy and equal opportunity, a fairness notion requiring a classifier to produce equal true positive rates across different groups. They show that there exists no classifier that simultaneously achieves $(\epsilon,0)$-DP, satisfies equal opportunity, and has accuracy better than a constant classifier.  
This development has risen the question of whether one can practically build fair models while retaining sensitive information private. 
To this end, \citet{jagielski:18} presents two algorithms that
satisfy $(\epsilon,\delta)$-differential privacy and equalized odds.
\citet{mozannar2020fair} develops methods to adapt a nondiscriminatory learner to work with privatized protected attributes and \citet{tran2020differentially} proposes a differentially private learning approach to enforce several group fairness notions using a Lagrangian dual method. 

\citet{pujol:20} were seemingly the first to show, empirically, that 
resource allocation decisions made using DP datasets may disproportionately affect some groups of individuals over others. These studies were complemented theoretically by \citet{Cuong:IJCAI21}.
Similar observations were also made in the context of model learning. \citet{NEURIPS2019_eugene} empirically observed that the accuracy of a DP model trained using DP-Stochastic Gradient Descent (DP-SGD) decreases disproportionately across groups causing larger negative impacts to the underrepresented groups. \citet{farrand2020neither,uniyal2021dpsgd} reaches similar conclusions and show that this disparate impact is not limited to highly imbalanced data. 

This paper builds on this body of work and their important empirical observations. It provides an analysis for the reasons of unfairness in the context of semi-supervised private learning ensembles, a commonly adopted scheme in privacy-preserving ML systems as well as introduces mitigating guidelines.

\section{Preliminaries: Differential Privacy}
\label{sec:preliminaries}

Differential privacy (DP) \cite{dwork:06} is a strong privacy notion used to quantify and bound the privacy loss of an individual's participation in a computation. 
 Informally, it  states that the probability of any output does not change much when a record is added or removed from a dataset, limiting the amount of information that the output reveals about any individual.  
The action of adding or removing a record from a dataset $D$, resulting in a new dataset $D'$, defines the notion of \emph{adjacency}, denoted $D \sim D'$.
\begin{definition}
  \label{dp-def}
  A mechanism $\cM \!:\! \mathcal{D} \!\to\! \mathcal{R}$ with domain $\mathcal{D}$ and range $\mathcal{R}$ is $(\epsilon, \delta)$-differentially private, if, for any two adjacent inputs $D \sim D' \!\in\! \mathcal{D}$, and any subset of output responses $R \subseteq \mathcal{R}$:
  \[
      \Pr[\cM(D) \in R ] \leq  e^{\epsilon} 
      \Pr[\cM(D') \in R ] + \delta.
  \]
\end{definition}
\noindent 
Parameter $\epsilon > 0$ describes the \emph{privacy loss} of the algorithm, with values close to $0$ denoting strong privacy, while parameter 
$\delta \in [0,1)$ captures the probability of failure of the algorithm to satisfy $\epsilon$-DP. 
The global sensitivity $\Delta_\ell$ of a real-valued 
function $\ell: \mathcal{D} \to \mathbb{R}$ is defined as the maximum amount 
by which $\ell$ changes  in two adjacent inputs:
\(
  \Delta_\ell = \max_{D \sim D'} \| \ell(D) - \ell(D') \|.
\)
In particular, the Gaussian mechanism, defined by
\(
    \mathcal{M}(D) = \ell(D) + \mathcal{N}(0, \Delta_\ell^2 \, \sigma^2), 
\)
\noindent where $\mathcal{N}(0, \Delta_\ell^2\, \sigma^2)$ is 
the Gaussian distribution with $0$ mean and standard deviation 
$\Delta_\ell^2\, \sigma^2$, satisfies $(\epsilon, \delta)$-DP for 
$\delta \!>\! \frac{4}{5} \exp(-(\sigma\epsilon)^2 / 2)$ 
and $\epsilon \!<\! 1$ \cite{dwork:14}.

\begin{figure*}[tb]
    \centering
    \includegraphics[width=0.75\linewidth]{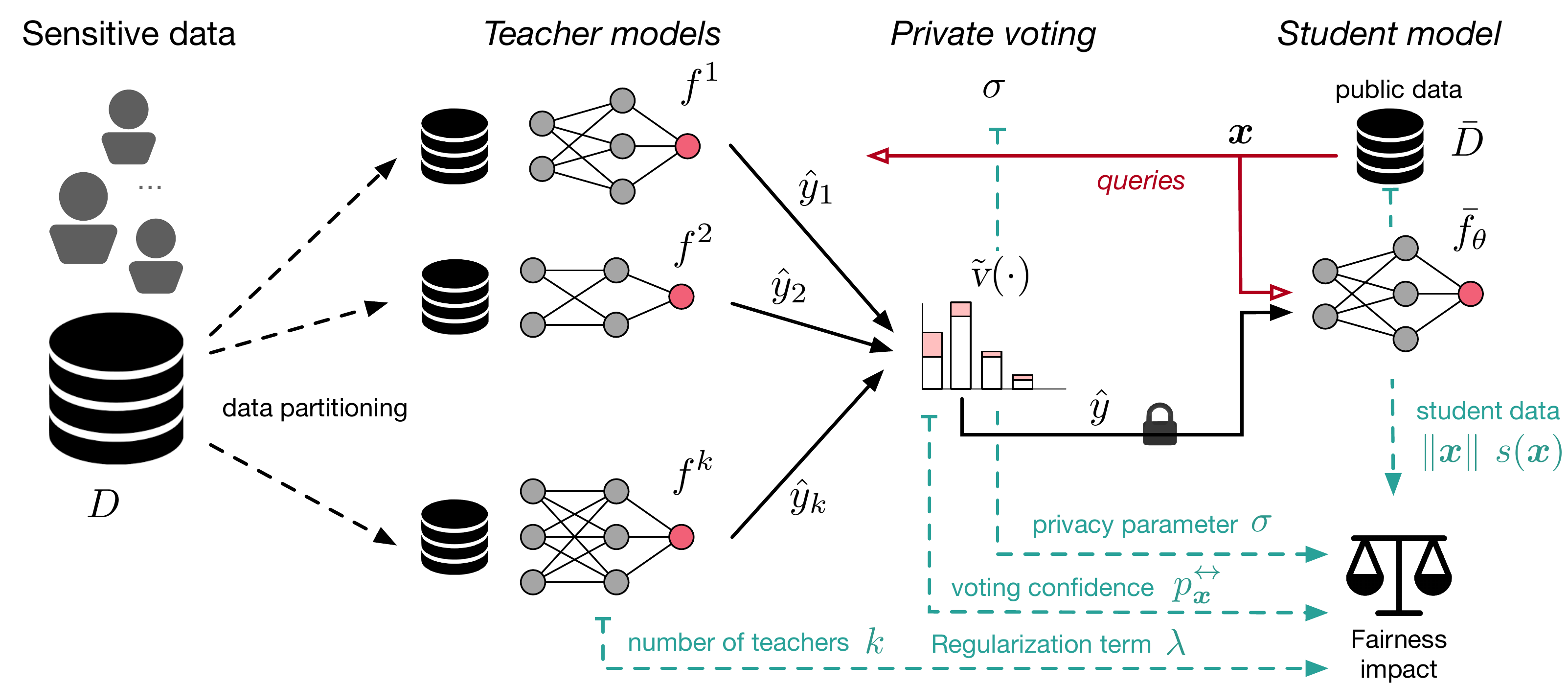}
    \caption{Illustration of PATE and aspects contributing to  fairness impact.}
    \label{fig:scheme}
\end{figure*}

\section{Problem Settings and Goals}
\label{sec:problem}

This paper considers a \emph{private} dataset $D$ consisting of $n$ individuals' data points $(\bm{x}_i, y_i)$, with $i \!\in\! [n]$, drawn i.i.d.~from an unknown distribution $\Pi$. Therein, $\bm{x}_i \!\in\! \cX$ is a feature vector that \emph{may} contain a protected group attribute $\bm{a}_i \!\in\! \cA \!\subset\! \cX$, and $y_i \!\in\! \cY = [C]$ is a $C$-class label. 
{For example, consider a classifier that needs to predict criminal defendant’s recidivism. The training example features $\bm{x}_i$ may describe the individual's demographics, education, occupation, and crime committed, the protected attribute $\bm{a}_i$, if available, may describe the individual's gender or ethnicity, and $y_i$ represents whether or not the individual has high risk to reoffend.} 

This paper studies the fairness implications arising when training privacy-preserving semi-supervised transfer learning models. 
The setting is depicted in Figure \ref{fig:scheme}. We are given an ensemble of \emph{teacher} models $\bm{T} \!=\! \{f^i\}_{i=1}^k$, with each $f^i \!:\! \cX \!\to\! \cY$ trained on a non-overlapping portion $D_i$ of $D$. This ensemble is used to transfer knowledge to a \emph{student} model $\bar{f}_\theta \!:\! \cX \!\to\! \cY$, where $\btheta$ denotes a vector of real-valued parameters associated with model $\bar{f}$. 

The student model $\bar{f}$ is trained using a \emph{public} dataset $\bar{D} \!=\! \{\bm{x}_i\}_{i=1}^m$ with samples drawn i.i.d.~from the same distribution $\Pi$ considered above but whose labels are unrevealed. 
The paper focuses on learning {classifier} $\bar{f}_\theta$ using  knowledge transfer from the teacher model ensemble $\bm{T}$ while guaranteeing the privacy of each individual's data $(\bm{x}_i, y_i) \!\in\! D$. 
The sought model is learned by minimizing the regularized empirical risk function
\begin{align}
\label{eq:ERM}
    \optimal{\btheta} &= \argmin_{\bm{\theta}} \cL\left(\btheta; \bar{D}, \bm{T}\right)
    = \sum_{\bm{x} \in \bar{D}} 
    \ell \left( \bar{f}_\theta(\bm{x}), \textsl{v}\left(\bm{T}(\bm{x}) \right)\right)  
    + \lambda \left\| \theta \right\|^2,
\end{align}
where $\ell \!:\! \cY \times \cY \!\to\! \mathbb{R}_+$ is a loss function and measures the performance of the model,
$\textsl{v} \!:\! \cY^k \!\to\! \cY$ is a \emph{voting scheme} used to
decide the prediction label from the ensemble $\bm{T}$, with 
$\bm{T}(\bm{x})$ used as a shorthand for $\{f^i(\bm{x})\}_{i=1}^k$, and $\lambda>0$ is a regularization parameter. 

The paper focuses on learning classifiers that protect the disclosure of the individual's data using the notion of differnetial privacy and it analyzes the fairness impact (as defined below) of privacy on different groups and individuals. 

\subsection*{Privacy}
\emph{Privacy} is achieved by using a differentially private version $\tilde{\textsl{v}}$ of the voting function $\textsl{v}$, defined 
as 
\begin{equation}
\label{eq:noisy_max}
    \tilde{\textsl{v}} \left( \bm{T}(\bm{x}) \right) \!=\! \argmax_j 
    \left\{ \#_j \left(\bm{T}(\bm{x}) \right) \!+\! 
               \cN \left(0, \sigma^2 \right) \right\},
\end{equation}
which perturbs the reported counts $\#_j(\bm{T}(\bm{x}))\!=\!|\{i\!:\!i \!\in\![k], f^i(\bm{x}) \!=\! j\}|$ associated to label $j \!\in\! \cY$, via additive Gaussian noise of zero mean and standard deviation $\sigma$.  
The overall approach, called \emph{PATE}, guarantees $(\epsilon, \delta)$-differential privacy, with privacy loss scaling with the magnitude of the standard deviation $\sigma$ and the size of the public dataset $\bar{D}$ \cite{papernot2018scalable}. 
A detailed discussion reviewing the privacy analysis of PATE is reported in {Appendix \ref{app:privacy_analysis}}.
Throughout the paper, the privacy-preserving parameters of the model $\bar{f}$ are denoted with $\tilde{\bm{\theta}}$.

\subsection*{Fairness}
The fairness analysis focuses on the notion of \emph{excessive risk} \cite{NIPS2017_f337d999,ijcai2017548}. It defines the difference between the private and non private risk functions: 
\begin{flalign}
\label{def:excessiver_risk}
  R(S, \bm{T}) \defeq \EE_{\tilde{\btheta}}  
  \left[ \cL(\tilde{\btheta}; S, \bm{T}) \right] 
    - \cL(\optimal{\btheta}; S, \bm{T}),
\end{flalign}
where the expectation is defined over the randomness of the private mechanism, $S$ is a subset of $\bar{D}$, and 
$\tilde{\btheta}$ denotes the private student's model parameters while $\optimal{\btheta}\, =\! \argmin_\btheta \cL(\btheta; \bar{D}, \bm{T})$. 
The above definition captures both individual $R(\{\bm{x}\}, \bm{T})$ excessive risk for a sample $\bm{x}$ and group $R(\bar{D}_{\leftarrow a}, \bm{T})$ excessive risk for a group $a$, where $\bar{D}_{\leftarrow a}$ denotes the subset of $\bar{D}$ containing exclusively samples whose group attribute is $a \in \cA$. 
This paper uses shorthands $R(\bm{x})$ and $R(\bar{D}_{\leftarrow a})$ to denote $R(\bm{x}, \bm{T})$ and $R(\bar{D}_{\leftarrow a}, \bm{T})$.

Finally, this paper assumes that the private mechanisms are non-trivial, i.e., they minimize the population-level excessive risk 
$R(\bar{D})$ and the fairness goal is to minimize excessive risk difference among all individuals and/or groups.

\section{PATE Fairness Analysis: Roadmap}
\label{sec:roadmap}

The next sections focus on two orthogonal aspects of PATE: the \emph{algorithm's parameters} and the \emph{public student data distribution characteristics} and analyze their fairness impact.

Within the algorithm's parameters, in addition to the privacy variable $\sigma$, the paper reveals two surprising aspects which have a direct impact on fairness: The size $k$ of the teacher ensemble and the regularization parameter $\lambda$ associated with the student risk function. 
Regarding the public student data's characteristics, the paper shows that the magnitude of the sample input norms $\|\bm{x}\|$ and the distance of a sample to the decision boundary (denoted $s(\bm{x})$) play decisive roles to exacerbate the excessive risk induced by the student model. 
These aspects are illustrated schematically with green dotted lines in Figure \ref{fig:scheme}. 

Several aspects of the analysis in this paper rely on the following definition. 
\begin{definition}[Flipping probability]
Given a data sample $(\bm{x}, y) \!\in\! D$, for an ensemble model $\bm{T}$ and voting scheme $\textsl{v}$, the \emph{flipping probability} of $\bm{T}$ is defined as: 
\begin{equation}
\label{eq:flipping_pr}
    \fp{\bm{x}} \defeq  \Pr\left[ 
    \tilde{\textsl{v}}(\bm{T}(\bm{x})) \neq \textsl{v}(\bm{T}(\bm{x})) 
    \right].
\end{equation}
\end{definition}
\noindent It connects the \emph{voting confidence} of the teacher ensemble with the perturbation induced by the privacy-preserving voting scheme, and will be instrumental in the fairness analysis introduced below.  

The following sections use several standard datasets including UCI Adults, Credit card, Bank, and Parkinsons \cite{UCIdatasets,article, Moro2014ADA} to support the theoretical claims. The results use feed-forward networks with two hidden layers and nonlinear ReLU activations for both the ensemble and student models. All reported metrics are average of 100 repetitions, used to compute the empirical expectations.
When not otherwise stated, the experiments refer to the \textsl{Credit card} dataset. 

The main paper reports a glimpse of the empirical results, which appears in an extended form in the Appendix (\ref{app:exp_ext}). 
Additional description of the dataset and proofs of all theorems are reported in the Appendix.

\section{Algorithm's Parameters}
\label{sec:alg_params}

This section focuses on analyzing the algorithm's parameters that affect the disparate impact of the student model outputs. In more details, it shows that, in addition to the privacy parameter $\sigma$, the 
regularization term $\lambda$ of the empirical risk function 
$\cL(\btheta, \bar{D}, \bm{T})$ (see Equation \eqref{eq:ERM}) and the 
size $k$ of the teacher ensemble $\bm{T}$ largely control the difference between model learned with noisy and clean labels. 
The fairness analysis reported in this section assumes that the student model loss $\ell(\cdot)$ is convex and \emph{decomposable}:

\begin{definition}[Decomposable function]
\label{def:1}
A function $\ell(\cdot) $ is \emph{decomposable} if there exists a parametric function $h_{\btheta} \!:\! \cX \!\to\! \RR$, a constant real number $c$, and a function
$z \!:\! \RR \!\to\! \RR$, such that, for $\bm{x} \!\in\! \cX$, 
and $y \!\in\! \cY$:
\begin{equation}
\label{eq:decomposable}
    \ell(f_{\btheta}(\bm{x}), y) = z(h_{\btheta}(\bm{x})) 
    + c \, y\, h_{\theta}(\bm{x}).
\end{equation}
\end{definition}

Note that a number of loss functions commonly adopted in machine 
learning, including the logistic loss and the least square loss 
function, are decomposable \cite{gao2016risk, patrini2014almost}. 
Additionally, while it is common to impose restrictions on the nature
of the loss function to render the analysis tractable, our findings 
are empirically validated on non-linear models, as shown next.

The following theorem sheds light on the unfairness induced by PATE and the dependency with its parameters.
It provides an upper bound on the expected difference between the 
non-private and private student model parameters. As the paper will 
show in Theorem \ref{thm:3}, this quantity is closely related with the excessive risk. 
Therein, $\optimal{\btheta}$ and $\tilde{\btheta}$ represent the 
parameters of student model $\bar{f}$ which are learned as a result of training, respectively, with a clean or noisy voting scheme. 

\begin{theorem}
\label{thm:1}
Consider a student model $\bar{f}_{\btheta}$ trained with a convex and decomposable loss 
function $\ell(\cdot)$. Then, the expected 
difference between the private and non-private model parameters is 
upper bounded as follows:
\begin{equation}
    \mathbb{E}\left[ \| \optimal{\btheta} - \tilde{\btheta} \| \right] 
    \leq \frac{|c|}{m\lambda} \left[ \sum_{\bm{x} \in \bar{D}} p^{\leftrightarrow}_{\bm{x}} \| g_{\bm{x}}\| \right],
\end{equation}
where $c$ is a real constant and
$g_{\bm{x}}= \max_{\btheta}\| \nabla_{\btheta} h_{\btheta}(\bm{x}) \|$ 
represents the maximum gradient norm distortion introduced by a 
sample $\bm{x}$. Both $c$ and $h$ are defined as in Equation 
\eqref{eq:decomposable}. 
\end{theorem}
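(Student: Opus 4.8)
The plan is to exploit two structural facts. First, decomposability forces the label to enter each per-sample loss only through the \emph{linear} term $c\,y\,h_{\btheta}(\bm{x})$ in \eqref{eq:decomposable}, so the clean objective $\cL(\btheta;\bar D,\bm T)$ (built from the labels $y_{\bm x}\defeq \textsl{v}(\bm T(\bm x))$) and its private counterpart (built from the noisy labels $\tilde y_{\bm x}\defeq \tilde{\textsl{v}}(\bm T(\bm x))$) differ by a term that is linear in $h_\btheta$. Second, the regularizer in \eqref{eq:ERM} makes the clean objective strongly convex, so its minimizer $\optimal{\btheta}$ cannot be far from the private minimizer $\tilde{\btheta}$ whenever the two objectives are close. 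I would begin by writing both objectives with \eqref{eq:decomposable} and subtracting: the $z(h_\btheta(\bm x))$ terms and the regularizer cancel, leaving a difference $\sum_{\bm x\in\bar D} c\,(\tilde y_{\bm x}-y_{\bm x})\,h_\btheta(\bm x)$ whose gradient is $\sum_{\bm x} c\,(\tilde y_{\bm x}-y_{\bm x})\,\nabla_\btheta h_\btheta(\bm x)$.

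Next I would apply the standard perturbation bound for the minimizer of a strongly convex function. Let $F$ and $G$ denote the clean and private objectives. Strong convexity of $F$ together with the stationarity $\nabla F(\optimal{\btheta})=0$ gives $\langle\nabla F(\tilde{\btheta}),\tilde{\btheta}-\optimal{\btheta}\rangle\ge \mu\,\norm{\tilde{\btheta}-\optimal{\btheta}}^2$, where $\mu$ is the strong-convexity modulus supplied by the regularizer; since $\tilde{\btheta}$ minimizes $G$ we have $\nabla G(\tilde{\btheta})=0$, so $\nabla F(\tilde{\btheta})=-\nabla(G-F)(\tilde{\btheta})$, and Cauchy--Schwarz yields
\[
  \norm{\optimal{\btheta}-\tilde{\btheta}}\le\frac{1}{\mu}\,\norm{\nabla(G-F)(\tilde{\btheta})}.
\]
Plugging in the gradient computed above and using the triangle inequality bounds the right-hand side by $\tfrac{|c|}{\mu}\sum_{\bm x}|\tilde y_{\bm x}-y_{\bm x}|\,\norm{\nabla_\btheta h_{\tilde{\btheta}}(\bm x)}$. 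Here the definition $g_{\bm x}=\max_\btheta\norm{\nabla_\btheta h_\btheta(\bm x)}$ is exactly what is needed to replace the gradient at the \emph{unknown} point $\tilde{\btheta}$ by the sample-dependent constant $g_{\bm x}$.

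Finally I would take the expectation over the noise of the private voting scheme, which affects only the labels $\tilde y_{\bm x}$. By linearity, $\EE\norm{\optimal{\btheta}-\tilde{\btheta}}\le \tfrac{|c|}{\mu}\sum_{\bm x} g_{\bm x}\,\EE|\tilde y_{\bm x}-y_{\bm x}|$, and in the binary setting $|\tilde y_{\bm x}-y_{\bm x}|\in\{0,1\}$, so $\EE|\tilde y_{\bm x}-y_{\bm x}|=\Pr[\tilde{\textsl{v}}(\bm T(\bm x))\neq \textsl{v}(\bm T(\bm x))]=\fp{\bm x}$ by \eqref{eq:flipping_pr}. Matching $\mu$ and the loss normalization to the paper's convention (an averaged loss with $\lambda$-strong regularization) then produces the $\tfrac{1}{m\lambda}$ leading factor and gives the stated bound.

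The main obstacle is the minimizer-perturbation step: the bound is valid only because the gradient of the difference is evaluated at one of the two minimizers and because strong convexity is invoked for $F$ alone — $G-F$ is merely linear (hence not strongly convex), but that is harmless. Care is also needed in the expectation step, where the clean identity $\EE|\tilde y_{\bm x}-y_{\bm x}|=\fp{\bm x}$ holds for binary labels; for the general $C$-class case one must either restrict to the binary decomposable regime or absorb the label range into the constant $c$. A final bookkeeping point is to keep the regularizer's strong-convexity constant and the $1/m$ normalization consistent so the leading constant lands on $|c|/(m\lambda)$.
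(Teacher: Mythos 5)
Your proof is correct and follows essentially the same route as the paper's: write both objectives via the decomposability assumption so their difference is linear in $h_\btheta$, apply the strong-convexity variational inequality together with stationarity of both minimizers, use Cauchy--Schwarz and the triangle inequality with $g_{\bm{x}}$, and finish by taking the expectation of the Bernoulli label-flip indicators (whose mean is $\fp{\bm{x}}$), with the same implicit binary-label caveat the paper's own proof carries. The only, immaterial, difference is that you invoke strong convexity of the clean objective and evaluate the perturbation gradient at $\tilde{\btheta}$, whereas the paper does the mirror image (strong convexity of the private objective $\tilde{\cL}$, gradient of $\Delta_{\cL}$ evaluated at $\optimal{\btheta}$); the maximum over $\btheta$ in the definition of $g_{\bm{x}}$ absorbs either choice.
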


\noindent 
The proof relies on $\lambda$-strong convexity of the loss function ${\cL(\cdot)}$ (see Appendix \ref{app:missing_proofs}).
Theorem \ref{thm:1} relates the difference in the expected private  and non-private student  parameters  with  three  key factors: {\bf (1)} the regularization term $\lambda$, {\bf (2)} the flipping probability $\fpx$, and {\bf (3)} the the maximum gradient norm distortion $g_{\bm{x}}$ induced by a sample $\bm{x}$. 
The former two factors are mechanisms-dependent components and the subject of study of this section. As it will be shown next, they are controlled by the size $k$ of the teacher ensemble and the noise parameter $\sigma$. The discussion about data dependent components, including those related with the gradient norms, is delegated to  Section \ref{sec:data_prop}.

Throughout the paper, the quantity $\|\optimal{\btheta} - \tilde{\btheta}\|$ is referred to as \emph{model sensitivity to privacy}, or simply \emph{model sensitivity}, as it captures the effect of the private teacher voting on the student learned model.  

\begin{figure}[!bt]
    \centering
    \includegraphics[width=0.75\linewidth]{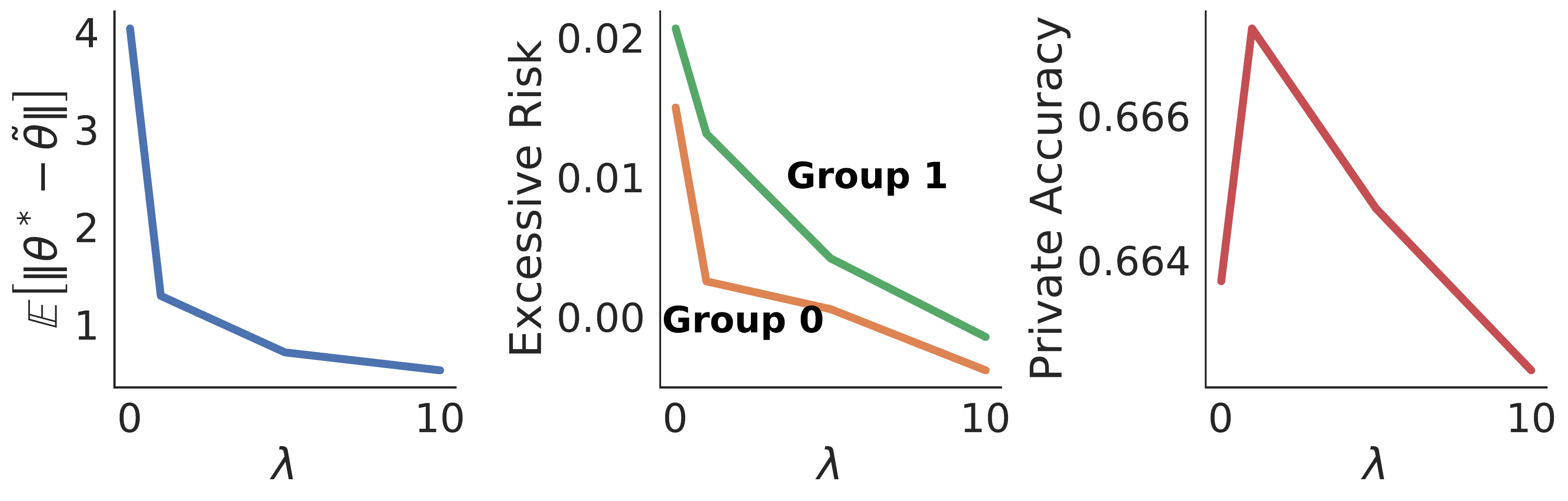}
    \caption{
    Credit-card dataset with $\sigma \!=\! 50, k\!=\!150$. 
    Model sensitivity (left), empirical risk (middle), and model accuracy (right) as a function of the regularization term.}
    \label{fig:lambda_effect}
\end{figure}

\subsection{The impact of the regularization term $\lambda$} 
The first immediate observation of Theorem \ref{thm:1} is that variations of
the regularization term $\lambda$ can reduce or magnify the
difference between the private and non-private student model
parameters. 
Since the model sensitivity $\EE\| \optimal{\btheta} - \tilde{\btheta}\|$ relates directly to the excessive risk (see Theorem \ref{thm:3}), the regularization term affects the  disparate impact of the privacy-preserving student model. 

These effects are further illustrated in Figure \ref{fig:lambda_effect}. The figure shows how increasing $\lambda$ reduces the empirical expected difference between the privacy-preserving and original model parameters $\EE\|\optimal{\btheta} - \tilde{\btheta}\|$ (left),  as well as the excessive risk $R(\bar{D}_{\leftarrow a})$ 
difference between groups $a=0$ and $a=1$ (middle). 
Note, however, that while larger $\lambda$ values may reduce the model unfairness, they can hurt the resulting model accuracy, as shown in the right plot. 
The latter is an intuitive and recognized effect of large regularizers factors. 

\subsection{The impact of the teachers ensemble size $k$} 
The second aspect considered in this section is the relation between the ensemble size $k$ and the resulting private model fairness. The
following result relates the size of the ensemble with its voting confidence.
\begin{theorem}
\label{thm:2}
For a sample $\bm{x} \!\in\! \bar{D}$ assume that the teacher models 
outputs $\hat{y}_i = f^i(\bm{x})$ $(i \in [k])$ are all in agreement. That is, $\hat{y}_i = \hat{y}_j$ for all $i,j \in [k]$. 
Then, the flipping probability $\fpx$ is given by:
\begin{equation}
     \fpx = 1 - \Phi\left(\frac{k}{\sqrt{2} \sigma}\right),
\end{equation}
where $\Phi(\cdot)$ is the CDF of the standard normal distribution 
and $\sigma$ is the standard deviation in the Gaussian mechanism.
\end{theorem}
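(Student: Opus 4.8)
The plan is to compute the flip probability directly from the definition of the noisy voting scheme in Equation \eqref{eq:noisy_max}, exploiting the unanimity assumption to pin down the histogram of teacher votes. Under the hypothesis that all teachers agree, there is a single label---call it $j^\star$---receiving all the votes, so the clean counts are $\#_{j^\star}(\bm{T}(\bm{x})) = k$ and $\#_j(\bm{T}(\bm{x})) = 0$ for every $j \neq j^\star$. Consequently the clean vote is $\textsl{v}(\bm{T}(\bm{x})) = j^\star$, and the flipping event $\tilde{\textsl{v}}(\bm{T}(\bm{x})) \neq \textsl{v}(\bm{T}(\bm{x}))$ is precisely the event that, after adding independent noise $\eta_j \sim \cN(0,\sigma^2)$ to each count, some label $j \neq j^\star$ attains a strictly larger perturbed count than $j^\star$.

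Second, I would specialize to the binary setting (consistent with the datasets used in the paper), where there is exactly one competitor $j' \neq j^\star$. The flip then occurs iff $0 + \eta_{j'} > k + \eta_{j^\star}$, i.e., iff $\eta_{j'} - \eta_{j^\star} > k$. The key observation is that $\eta_{j'} - \eta_{j^\star}$ is a difference of two independent zero-mean Gaussians, each with variance $\sigma^2$, hence itself Gaussian with mean $0$ and variance $2\sigma^2$, i.e., standard deviation $\sqrt{2}\,\sigma$. Standardizing gives $\Pr[\eta_{j'} - \eta_{j^\star} > k] = \Pr[Z > k/(\sqrt{2}\sigma)] = 1 - \Phi\!\left(k/(\sqrt{2}\sigma)\right)$, which is exactly the claimed expression. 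The only routine loose end is the treatment of ties in the $\argmax$: since the perturbed counts are continuous random variables, the boundary event $\{\eta_{j'} - \eta_{j^\star} = k\}$ has probability zero and does not affect the computation, so the tie-breaking rule is irrelevant.

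The main obstacle is conceptual rather than computational, and concerns the restriction to two classes. For $C > 2$ labels the flip event becomes a union $\bigcup_{j \neq j^\star}\{\eta_j - \eta_{j^\star} > k\}$, equivalently $\{\max_{j \neq j^\star}\eta_j > k + \eta_{j^\star}\}$, whose probability is a max-of-Gaussians tail that does not collapse to the single-term expression $1 - \Phi(k/(\sqrt{2}\sigma))$. The clean identity therefore relies on $C = 2$; for general $C$ one would either report this quantity as a per-competitor flip probability or invoke a union bound to obtain the upper bound $\fpx \le (C-1)\,[1-\Phi(k/(\sqrt{2}\sigma))]$. I would make the binary assumption explicit in the write-up and note that the same Gaussian-difference argument immediately yields these multiclass bounds.

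Beyond correctness, the computation is worth framing for its fairness consequences: since $\Phi$ is increasing, $\fpx$ is strictly decreasing in $k$ and increasing in $\sigma$, so a larger ensemble sharpens voting confidence and suppresses flips, which is the qualitative dependence the surrounding discussion attributes to the ensemble size $k$ and the noise parameter $\sigma$ via Theorem \ref{thm:1}.
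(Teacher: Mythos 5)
Your proof is correct and follows essentially the same route as the paper's: both reduce the flip event under unanimity to the probability that the difference of two independent $\cN(0,\sigma^2)$ noises added to counts $k$ and $0$ exceeds $k$, note that this difference is $\cN(0,2\sigma^2)$, and conclude $\fpx = 1 - \Phi\bigl(k/(\sqrt{2}\,\sigma)\bigr)$. The paper also restricts the proof to binary classification (while asserting the extension to $C$ classes is trivial), so your explicit caveat that the exact identity genuinely requires $C=2$ and only a per-competitor expression or a union bound $\fpx \le (C-1)\bigl[1-\Phi\bigl(k/(\sqrt{2}\,\sigma)\bigr)\bigr]$ survives in the multiclass case is a sharper and correct account of that limitation.
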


\noindent 
The proof is based on the properties of independent Gaussian random variables.

The analysis above sheds light on the outcome of the teachers voting 
scheme and its relation with the ensemble size $k$ (as well as the 
privacy parameter $\sigma$). It indicates that larger $k$ values correspond to smaller flipping probability $\fpx$. Combined with Theorem~\ref{thm:1}, the result suggests that the difference between the private and non-private model parameters is inversely proportional to $k$. 

While for simplicity of analysis Theorem \ref{thm:2} requires the decision 
of all teachers to agree on a given sample $\bm{x}$, our empirical 
analysis supports this result for the more general scenario where 
different teachers have different agreements on a sample. 
Figure \ref{fig:flipping_pr_k} (left) illustrates the relation between 
the number $k$ of teachers and the flipping probability $\fpx$ of the ensemble. The plot shows a clear trend indicating that larger ensembles result in smaller flipping probabilities. 
\begin{figure}[!tb]
    \centering
    \includegraphics[width=0.35\linewidth, height=100pt,valign=M]{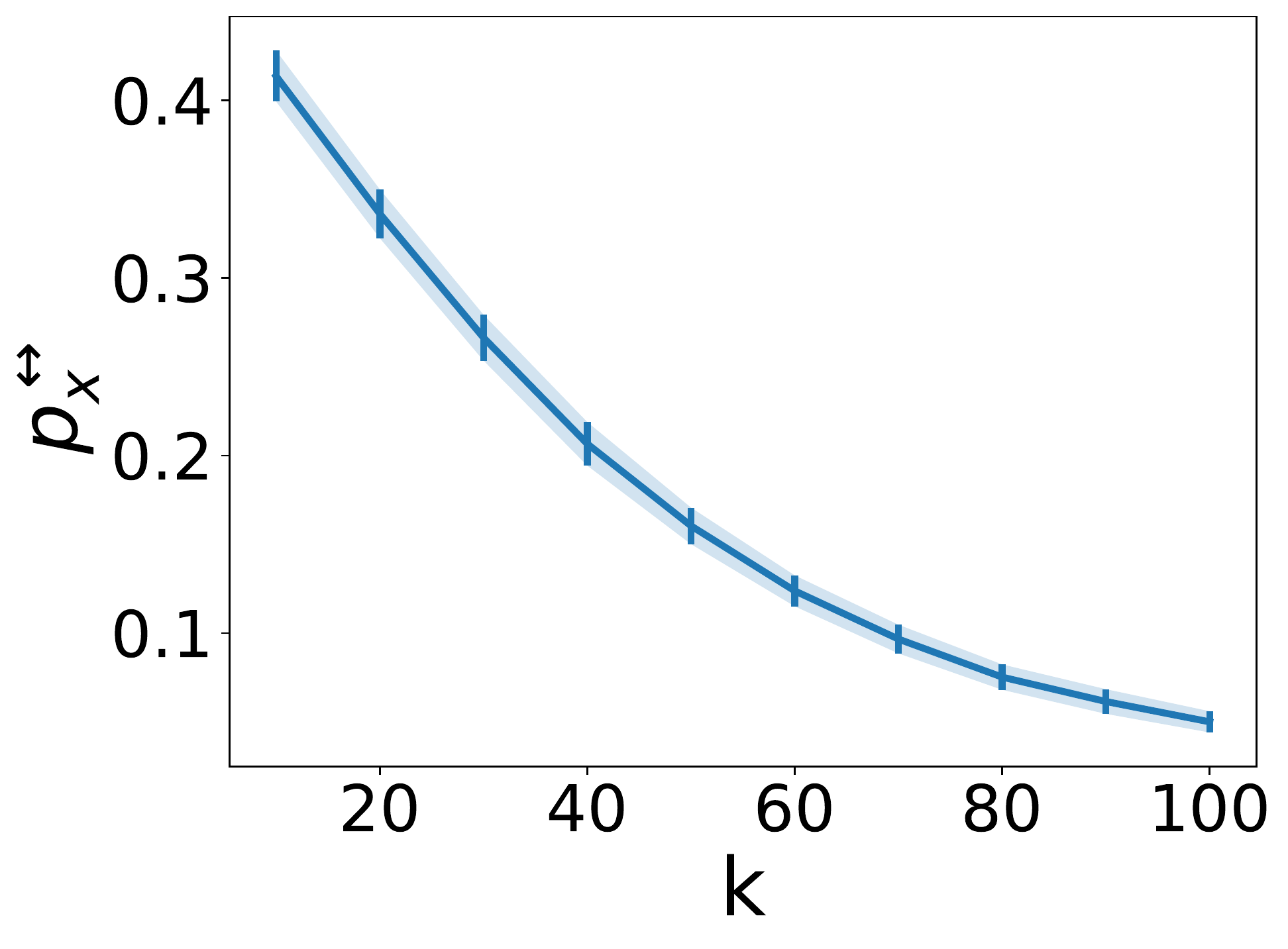}
    \includegraphics[width=0.35\linewidth, height=104pt,valign=M]{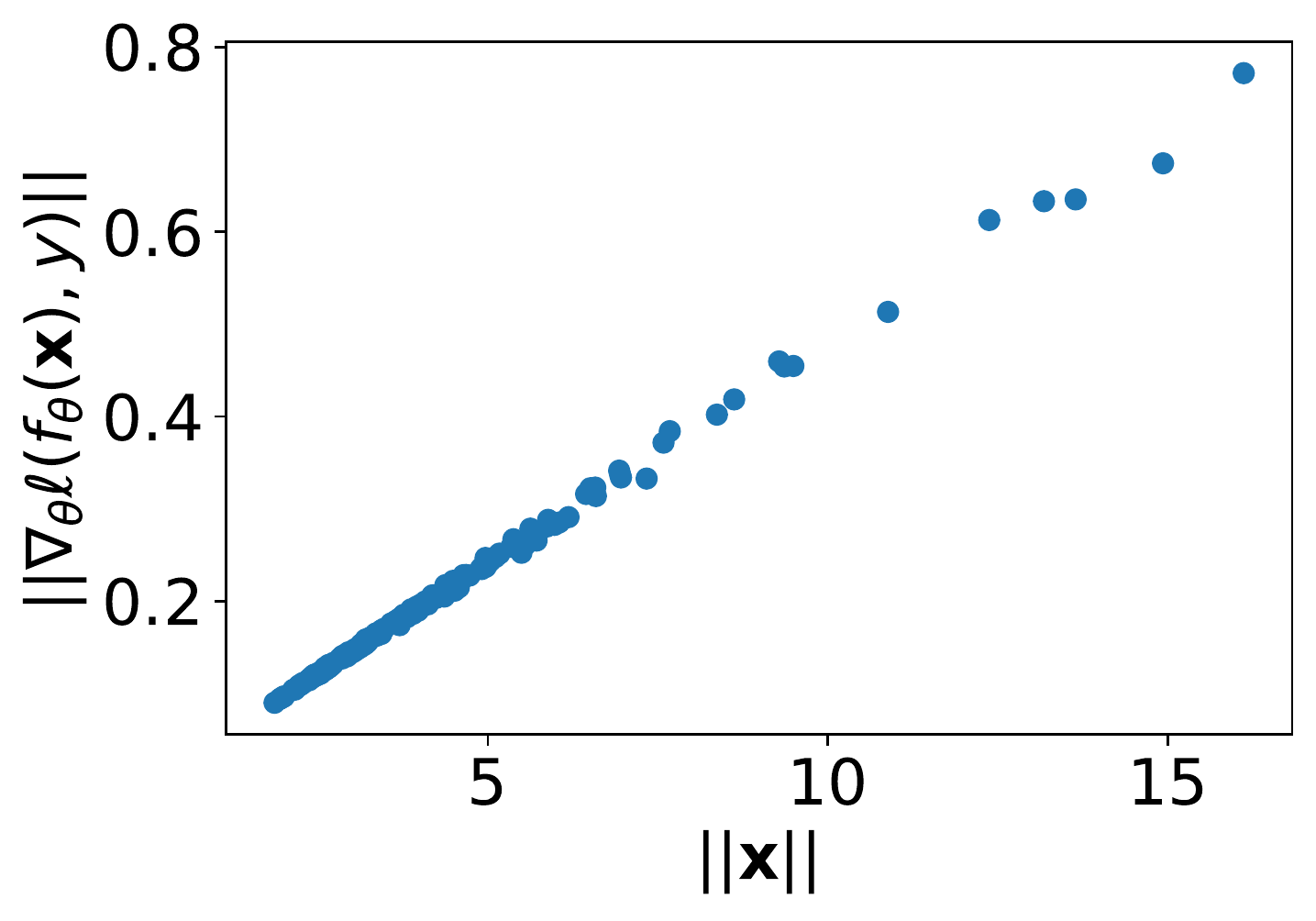}
    \caption{Credit card dataset: Average flipping probability $\fpx$ 
    for samples $\bm{x} \in \bar{D}$ as a function of the ensemble size $k$ (left)
    and 
    relation between gradient and input norms (right).}
    \label{fig:flipping_pr_k}
    \label{fig:grad_inp_corr}
\end{figure}

\begin{figure}[!tb]
    \centering
    \includegraphics[width=0.75\linewidth]{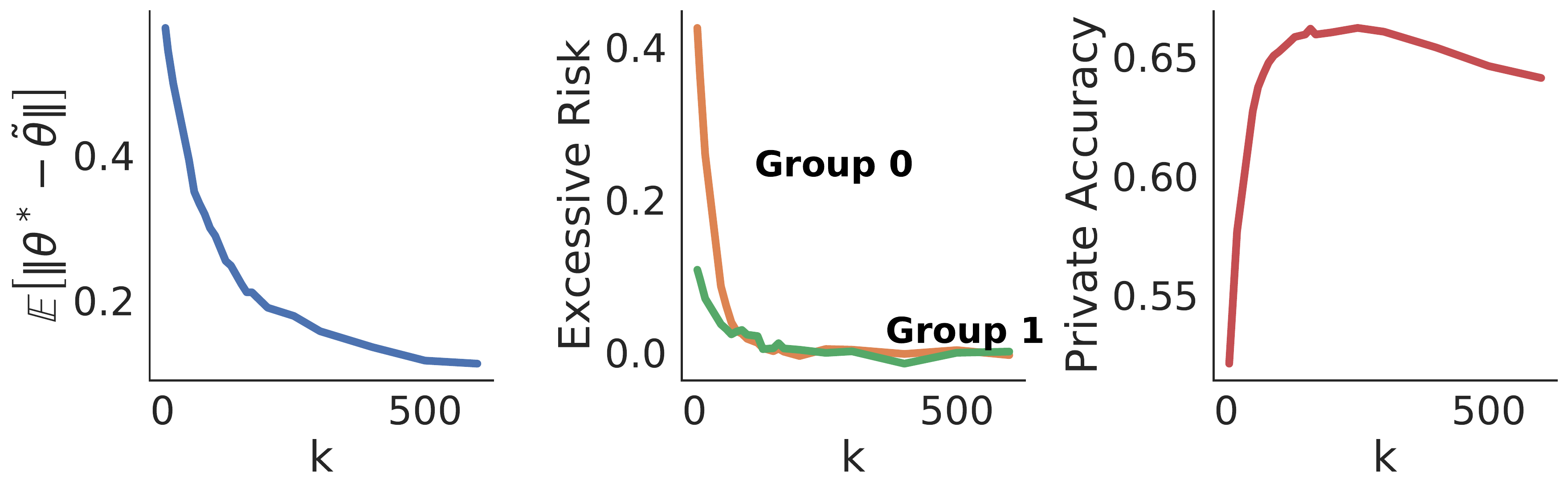}
    \caption{Income dataset with $\sigma\!=\!50, \lambda\!=\!100$. Expected model sensitivity (left), 
    empirical risk (middle), and model accuracy (right) as a function 
    of the ensemble size.}
    \label{fig:effect_k}
\end{figure}

Next, analogously to what is reported in Figure \ref{fig:lambda_effect}, Figure \ref{fig:effect_k} shows that increasing $k$ reduces the difference in the expected private and non-private model parameters (left), reduces the 
group excessive risk difference (middle), and increases the model $\bar{f}$
accuracy (right). 
However, similarly as for the regularization term 
$\lambda$, there is also a downside of using very large ensembles: 
large values $k$ can reduce the accuracy of the (private and non-private)
models. 
While studying these tradeoffs goes beyond the scope of this work, 
we believe this behavior is related with the bias-variance tradeoff imposed on the growing ensemble: The larger the ensemble the less data each teacher is given to train their models, thus affecting their voting accuracy.
We believe this is an interesting and important direction for future work.

\noindent 
This section concludes with a useful corollary of Theorem~\ref{thm:1}. 
\begin{corollary}[Theorem~\ref{thm:1}]
\label{cor:1}
Let $\bar{f}_{\theta}$ be a \emph{logistic regression} classifier. Its 
expected model sensitivity is upper bounded as:
\begin{equation}
\label{eq:8}
 \mathbb{E}\left[ \| \optimal{\btheta} - \tilde{\btheta} \| \right] 
    \leq 
    \frac{1}{m\lambda} \left[ \sum_{\bm{x} \in \bar{D}} \fp{\bm{x}} \| \bm{x} \| \right]. 
 \end{equation}
\end{corollary}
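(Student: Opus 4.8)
The plan is to specialize Theorem~\ref{thm:1} to the logistic loss by exhibiting its decomposable structure and then evaluating the resulting constants. First I would cast the logistic regression classifier into the form demanded by Definition~\ref{def:1}. Taking the linear score $h_{\btheta}(\bm{x}) = \btheta^{\top}\bm{x}$ and labels $y \in \{0,1\}$, the binary logistic loss can be written as
\[
\ell\!\left(\bar{f}_{\btheta}(\bm{x}), y\right) = \log\!\left(1 + e^{\btheta^{\top}\bm{x}}\right) - y\,\btheta^{\top}\bm{x},
\]
which matches Equation~\eqref{eq:decomposable} with $z(u) = \log(1+e^{u})$, constant $c = -1$, and $h_{\btheta}(\bm{x}) = \btheta^{\top}\bm{x}$. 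In particular $|c| = 1$, so the constant factor appearing in Theorem~\ref{thm:1} disappears.

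The second step exploits the linearity of $h$. Since $\nabla_{\btheta} h_{\btheta}(\bm{x}) = \nabla_{\btheta}\!\left(\btheta^{\top}\bm{x}\right) = \bm{x}$ is independent of $\btheta$, the maximum gradient norm distortion collapses to
\[
g_{\bm{x}} = \max_{\btheta}\,\|\nabla_{\btheta} h_{\btheta}(\bm{x})\| = \|\bm{x}\|.
\]
Substituting $|c| = 1$ and $g_{\bm{x}} = \|\bm{x}\|$ into the bound of Theorem~\ref{thm:1} immediately yields inequality~\eqref{eq:8}.

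Because the statement is a direct specialization, I do not anticipate a substantive obstacle. The only points requiring care are (i) verifying that the adopted label encoding reproduces $|c| = 1$ exactly, since a different convention such as $y \in \{-1,+1\}$ would merely rescale $c$ and be absorbed into the constant, and (ii) confirming that the decomposition isolates a \emph{linear} $h_{\btheta}$, which is precisely what allows $g_{\bm{x}}$ to reduce to the plain input norm $\|\bm{x}\|$ rather than a $\btheta$-dependent expression. Neither step requires computation beyond the differentiation above, so the corollary follows with almost no additional work once the decomposable form is identified.
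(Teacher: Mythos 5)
Your proposal is correct and takes essentially the same route as the paper: both exhibit the decomposable structure of the binary logistic loss and then apply Theorem~\ref{thm:1} with $|c|=1$ and $g_{\bm{x}}=\max_{\btheta}\|\nabla_{\btheta}h_{\btheta}(\bm{x})\|=\|\bm{x}\|$. The only difference is a sign convention --- you take $h_{\btheta}(\bm{x})=\btheta^{\top}\bm{x}$ with $c=-1$, while the paper takes $h_{\btheta}(\bm{x})=-\btheta^{\top}\bm{x}$ with $c=1$ --- which is immaterial since only $|c|$ and the gradient norm of $h$ enter the bound.
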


The result above highlights several interesting points. 
First, in logistic regression, samples with large input norms can 
have a non negligible impact on fairness. This place emphasis on an 
nontrivial aspect of the student data properties which may affect 
fairness and is subject of study of the next section. 
Next, notice the similarities between Equation \eqref{eq:8} and Equation \eqref{eq:6}; In the former, gradient norms $\|\bm{x}\|$ multiply the associated flipping probabilities $\fpx$ in place of the gradient norms $\|g_{\bm{x}}\|$. 
Thus the result above indicates the presence of a relation between gradient norms and input norms, which is further highlighted in 
Figure~\ref{fig:grad_inp_corr} (right). The plot illustrates the strong correlation between input norms and their associated gradient norms.

\section{Student's Data Properties}
\label{sec:data_prop}

Having examined the algorithmic properties of PATE affecting fairness, 
this section turns on analyzing a set of properties concerning 
the student data which regulate the disproportionate impacts of the algorithm. 
The subsequent set of results shows that the norms of the student's data samples and their distance to the decision boundary are two key factor tied to the exacerbation of excessive risk in PATE. 

The following is a corollary of Theorem \ref{thm:1} and bounds the 
second order statistics of the model sensitivity to privacy.
\begin{corollary}[Theorem~\ref{thm:1}]
\label{cor:2}
Given the same settings and assumption of Theorem \ref{thm:1}, it follows:
\begin{equation}
\label{eq:9}
    \mathbb{E}\left[ \| \optimal{\btheta} - \tilde{\btheta} \|^2 \right] 
    \leq \frac{|c|^2}{m \lambda^2} \left[ \sum_{\bm{x} \in \bar{D}} p^{\leftrightarrow 2}_{\bm{x}} \| g_{\bm{x}}\|^2 \right].
\end{equation}
\end{corollary}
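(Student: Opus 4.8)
The plan is to derive \eqref{eq:9} directly from Theorem~\ref{thm:1}, since its right-hand side is precisely the square of the Theorem~\ref{thm:1} bound once the sum has been redistributed by Cauchy--Schwarz. The first step is to recover the \emph{pathwise} inequality underlying Theorem~\ref{thm:1}. Using the $\lambda$-strong convexity of $\cL$ and first-order optimality of $\optimal{\btheta}$ and $\tilde{\btheta}$, together with the decomposable structure \eqref{eq:decomposable} — so that replacing the clean label $\textsl{v}(\bm{T}(\bm{x}))$ by the noisy label $\tilde{\textsl{v}}(\bm{T}(\bm{x}))$ perturbs $\nabla_{\btheta}\cL$ only through the term $c\,(\tilde{\textsl{v}}-\textsl{v})\,\nabla_{\btheta}h_{\btheta}(\bm{x})$ — one obtains, for each fixed realization of the Gaussian voting noise,
\[
\| \optimal{\btheta} - \tilde{\btheta} \| \;\le\; \frac{|c|}{m\lambda}\sum_{\bm{x}\in\bar{D}} b_{\bm{x}}\,\|g_{\bm{x}}\|,
\]
where $b_{\bm{x}}\in\{0,1\}$ is the indicator that the label flips at $\bm{x}$, so that $\EE[b_{\bm{x}}]=\fpx$ for binary labels; taking expectations recovers Theorem~\ref{thm:1}.

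The second step squares this pathwise bound and applies Cauchy--Schwarz across the $m=|\bar{D}|$ summands, writing each term as $b_{\bm{x}}\|g_{\bm{x}}\|\cdot 1$, to get
\[
\Big(\sum_{\bm{x}\in\bar{D}} b_{\bm{x}}\,\|g_{\bm{x}}\|\Big)^2 \;\le\; m\sum_{\bm{x}\in\bar{D}} b_{\bm{x}}^2\,\|g_{\bm{x}}\|^2.
\]
Combined with the $1/(m\lambda)^2$ prefactor, this produces the $1/(m\lambda^2)$ scaling and the per-sample squared norms $\|g_{\bm{x}}\|^2$ of \eqref{eq:9}, leaving only the flipping factors to be handled by the expectation.

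The step I expect to be the main obstacle is the passage between the genuine second moment $\EE[\|\optimal{\btheta}-\tilde{\btheta}\|^2]$ and the squared first moment $(\EE\|\optimal{\btheta}-\tilde{\btheta}\|)^2$: Jensen runs as $(\EE\|\cdot\|)^2\le\EE[\|\cdot\|^2]$, i.e.\ opposite to a naive squaring of Theorem~\ref{thm:1}, so to bound the true second moment one must square \emph{before} taking the expectation, as above. Done this way the per-sample expectation is $\EE[b_{\bm{x}}^2]=\fpx$ rather than $\fpx^2$, giving $\tfrac{|c|^2}{m\lambda^2}\sum_{\bm{x}}\fpx\|g_{\bm{x}}\|^2$, which dominates \eqref{eq:9} since $\fpx\le 1$; the tighter $\fpx^2$ form is exactly what squaring the first-moment bound and applying Cauchy--Schwarz to $\big(\sum_{\bm{x}}\fpx\|g_{\bm{x}}\|\big)^2\le m\sum_{\bm{x}}\fpx^2\|g_{\bm{x}}\|^2$ returns. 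I would therefore make explicit which reading is intended, and — if the independence of the per-query noise is invoked to decorrelate the cross terms arising from $\EE\big[(\sum_{\bm{x}}b_{\bm{x}}\|g_{\bm{x}}\|)^2\big]$ — verify that those cross terms are nonnegative and hence cannot simply be discarded to reach $\sum_{\bm{x}}\fpx^2\|g_{\bm{x}}\|^2$.
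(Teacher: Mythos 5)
Your closing paragraph identifies exactly the right issue, and it is in fact the flaw in the paper's own proof. The paper's proof (Appendix \ref{app:missing_proofs}) proceeds by asserting, ``by Theorem \ref{thm:1},''
\begin{equation*}
\EE\left[ \| \optimal{\btheta} - \tilde{\btheta} \|^2 \right]
\leq \frac{c^2}{\lambda^2} \left[ \frac{1}{m}\sum_{\bm{x} \in \bar{D}} \fpx \| g_{\bm{x}}\| \right]^2,
\end{equation*}
and then applying the sum-of-squares (Cauchy--Schwarz) inequality to the right-hand side to reach the stated $p^{\leftrightarrow 2}_{\bm{x}}$ form. The second step is fine; the first is precisely the Jensen-direction error you flag. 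Theorem \ref{thm:1} controls the first moment, so squaring it bounds only $\left(\EE \| \optimal{\btheta} - \tilde{\btheta} \|\right)^2$, which by Jensen is a \emph{lower} bound on $\EE[\| \optimal{\btheta} - \tilde{\btheta} \|^2]$, not an upper bound. The paper's argument therefore establishes Equation \eqref{eq:9} only with the left-hand side read as the squared first moment.

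Your pathwise route --- squaring the per-realization inequality $\| \optimal{\btheta} - \tilde{\btheta} \| \leq \frac{|c|}{m\lambda}\sum_{\bm{x}} b_{\bm{x}} \|g_{\bm{x}}\|$ \emph{before} taking expectations, applying Cauchy--Schwarz across the sum, and using $\EE[b_{\bm{x}}^2] = \EE[b_{\bm{x}}] = \fpx$ --- is the sound way to bound the genuine second moment, and it is faithful to the paper's proof of Theorem \ref{thm:1}, which does establish that pathwise inequality. As you note, it yields $\frac{|c|^2}{m\lambda^2}\sum_{\bm{x}} \fpx \|g_{\bm{x}}\|^2$, i.e.\ $\fpx$ in place of $\fpx^2$: a valid but weaker bound than \eqref{eq:9}. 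To your final question: independence of the per-sample voting noise does not rescue the $\fpx^2$ form. Expanding $\EE\bigl[(\sum_{\bm{x}} b_{\bm{x}}\|g_{\bm{x}}\|)^2\bigr]$ with independent $b_{\bm{x}}$ gives the diagonal $\sum_{\bm{x}} \fpx \|g_{\bm{x}}\|^2$ plus nonnegative cross terms; the diagonal alone already dominates $\sum_{\bm{x}} \fpx^2 \|g_{\bm{x}}\|^2$ and is linear, not quadratic, in the flipping probabilities, so no rearrangement of the pathwise argument can produce \eqref{eq:9} (for small $\fpx$ the quadratic form is strictly below what a single-flip event contributes). In short: your proof proves the $\fpx$ version; the paper's $\fpx^2$ version rests on the invalid squaring step, and your write-up, which keeps the two readings separate, is the correct account.
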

\noindent
Note that, similarly to as shown by Corollary \ref{cor:1}, when $\bar{f}_\theta$ is a logistic regression model, the gradient norm $ \| g_{\bm{x}}\|$ in Equation \eqref{eq:9}
can be substituted with the input norm $\|\bm{x}\|$. 

The result above is useful to derive an upper bound on the excessive risk, 
as illustrated in the following theorem. 

\begin{theorem}
\label{thm:3}
Let $\ell(\cdot)$ be a $\beta_{\bm{x}}$-smooth loss function. 
The excessive risk $R(\bm{x})$ of a sample $\bm{x}$ is upper 
bounded as:
\begin{equation}
\label{eq:ER_ub}
R(\bm{x}) \leq \| \nabla_{\optimal{\btheta}}  
        \ell(\bar{f}_{\optimal{\btheta}}(\bm{x}),y)\|U_1 
        + \frac{1}{2} \beta_x U_2,
\end{equation} 
where, $U_1 = \mathbb{E}\left[ \| \optimal{\btheta} - \tilde{\btheta} \| \right]$ 
and  $U_2 = \mathbb{E}\left[ \| \optimal{\btheta} - \tilde{\btheta} \|^2 \right]$
capture the first and second order statistics of the model sensitivity. 
\end{theorem}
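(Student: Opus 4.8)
The plan is to read the per-sample excessive risk as the expected suboptimality of the \emph{random} private parameter $\tilde{\btheta}$ on the smooth map $\btheta \mapsto \ell(\bar{f}_{\btheta}(\bm{x}),y)$, and to control it by a first-order expansion around $\optimal{\btheta}$ with a quadratic smoothness remainder. Write $\phi(\btheta) \defeq \ell(\bar{f}_{\btheta}(\bm{x}),y)$, so that by the definition of excessive risk $R(\bm{x}) = \EE_{\tilde{\btheta}}[\phi(\tilde{\btheta})] - \phi(\optimal{\btheta})$, where the expectation is taken only over the randomness of the private voting that produces $\tilde{\btheta}$, while $\optimal{\btheta}$ is deterministic.

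First I would invoke $\beta_{\bm{x}}$-smoothness of $\phi$, i.e.\ that $\nabla\phi$ is $\beta_{\bm{x}}$-Lipschitz. The standard descent lemma (which needs only smoothness, not convexity) then gives, for the realized $\tilde{\btheta}$,
\begin{equation}
\phi(\tilde{\btheta}) \le \phi(\optimal{\btheta}) + \langle \nabla\phi(\optimal{\btheta}),\, \tilde{\btheta}-\optimal{\btheta}\rangle + \tfrac{\beta_{\bm{x}}}{2}\|\tilde{\btheta}-\optimal{\btheta}\|^2 .
\end{equation}
Next I would take the expectation over $\tilde{\btheta}$ and subtract $\phi(\optimal{\btheta})$, which isolates $R(\bm{x})$ on the left and leaves a linear term $\langle \nabla\phi(\optimal{\btheta}),\, \EE[\tilde{\btheta}-\optimal{\btheta}]\rangle$ together with the quadratic term $\tfrac{\beta_{\bm{x}}}{2}\EE\|\tilde{\btheta}-\optimal{\btheta}\|^2 = \tfrac12\beta_{\bm{x}}U_2$. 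The key step is to bound the linear term: by Cauchy--Schwarz it is at most $\|\nabla\phi(\optimal{\btheta})\|\,\|\EE[\optimal{\btheta}-\tilde{\btheta}]\|$, and then Jensen's inequality moves the norm inside the expectation, $\|\EE[\optimal{\btheta}-\tilde{\btheta}]\| \le \EE\|\optimal{\btheta}-\tilde{\btheta}\| = U_1$. Substituting $\nabla\phi(\optimal{\btheta}) = \nabla_{\optimal{\btheta}}\ell(\bar{f}_{\optimal{\btheta}}(\bm{x}),y)$ then yields exactly \eqref{eq:ER_ub}.

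I expect the argument to be essentially mechanical, with no single hard obstacle. The one place requiring care is the linear term: one must not discard it, since $\optimal{\btheta}$ minimizes the full regularized objective over $\bar{D}$ rather than the single-sample loss, so $\nabla\phi(\optimal{\btheta})\neq 0$ in general; it is precisely the two-step bound (Cauchy--Schwarz followed by Jensen) that converts it into the first-order statistic $U_1$ of the theorem. I would also state explicitly that $\optimal{\btheta}$ is deterministic so the expectation passes only through $\tilde{\btheta}$, and that smoothness is used only through its upper quadratic bound, so convexity of $\ell$ is not needed for this step.
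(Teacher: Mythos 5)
Your proposal is correct and follows essentially the same route as the paper's proof: a second-order smoothness (descent-lemma) expansion of $\ell(\bar{f}_{\btheta}(\bm{x}),y)$ around $\optimal{\btheta}$, expectation over the private randomness, and a Cauchy--Schwarz bound on the linear term to produce $U_1$, with the quadratic term giving $\tfrac{1}{2}\beta_{\bm{x}}U_2$. Your explicit two-step treatment of the linear term (Cauchy--Schwarz followed by Jensen to pass the norm inside the expectation) is slightly more careful than the paper, which compresses both into a single ``Cauchy--Schwarz on vectors'' step, but the argument is the same.
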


\noindent The proof of the above theorem relies on Theorem \ref{thm:1} and 
Corollary \ref{cor:2}, which provide bounds for the first and second 
order statistics of the model sensitivity, and on the 
properties of smooth functions.

Theorem \ref{thm:3} provides an upper bound on the
(individual) excessive risk. It shows the presence of three central 
factors controlling this excessive risk: {\bf (1)} \emph{the gradient norm} 
$\| \nabla_{\optimal{\btheta}}  \ell(\bar{f}_{\optimal{\btheta}}(\bm{x}),y)\|$ for a sample $\bm{x}$, 
{\bf (2)} \emph{the smoothness parameter} $\beta_{\bm{x}}$ associated with a sample $\bm{x}$, 
and {\bf (3)} the \emph{model sensitivity} (captured by 
terms $U_1$ and $U_2$).
As the paper shows next, these seemingly unrelated factors are 
controlled \emph{indirectly} by two key data aspects: the samples 
\emph{input norms} and their \emph{distance to the decision boundary}.  

The rest of the section focuses on logistic regression models, however, as our experimental results illustrate, the observations extend to complex nonlinear models as well.

\subsection{The impact of the data input norms}
First notice that the norm $\| \bm{x} \|$ of a sample $\bm{x}$ strongly influences the quantities $U_1$ and $U_2$, as already observed by 
Corollary~\ref{cor:1}.  This aspect is further illustrated in 
Figure \ref{fig:impact_norm_2_exp_diff} (left), which shows a strong 
correlation between the input norms and the expected model sensitivity. 
\begin{figure}[!tb]
    \centering
    \includegraphics[width=0.35\linewidth,height=100pt]{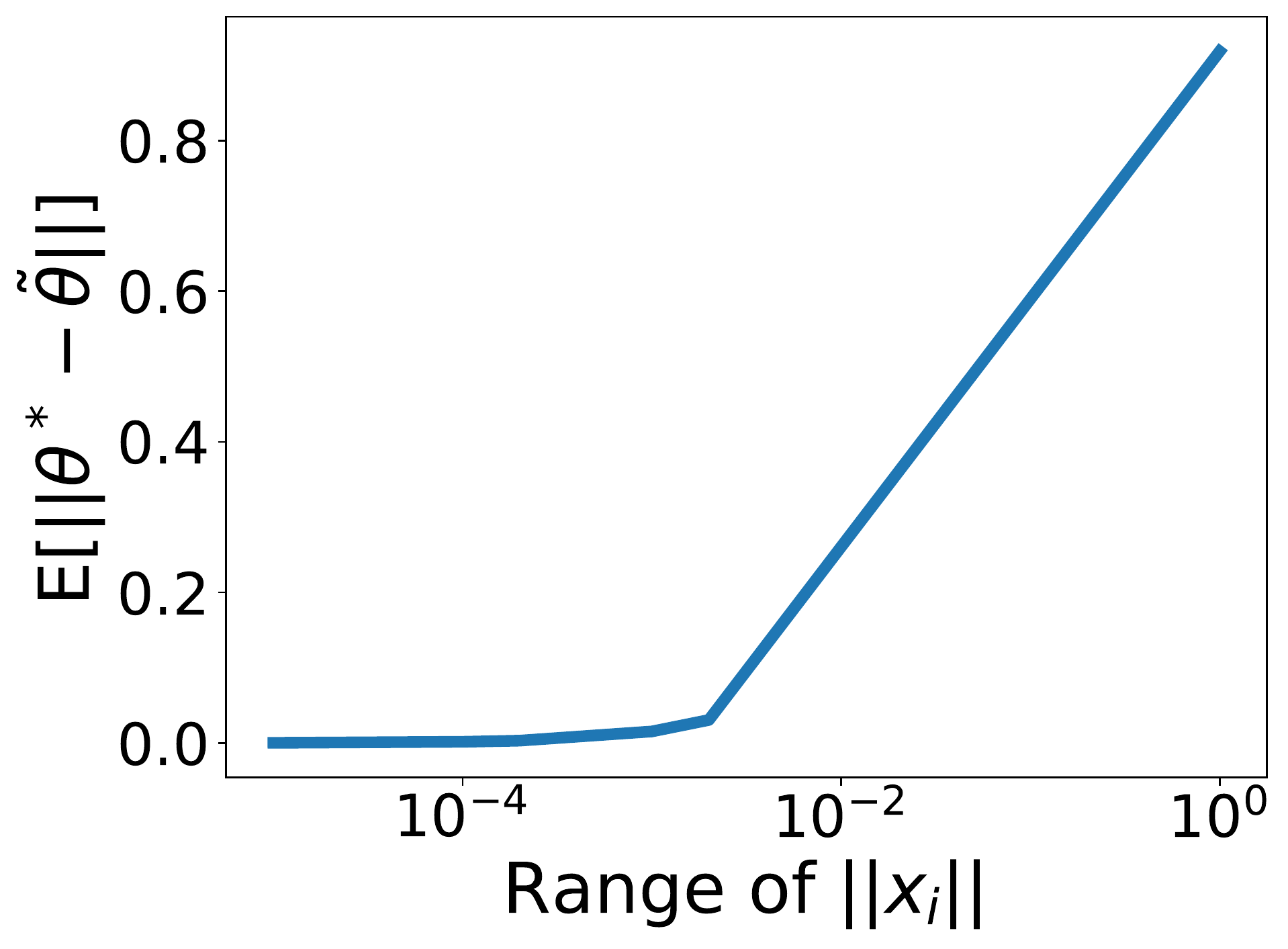}
    \includegraphics[width=0.35\linewidth, height=100pt]{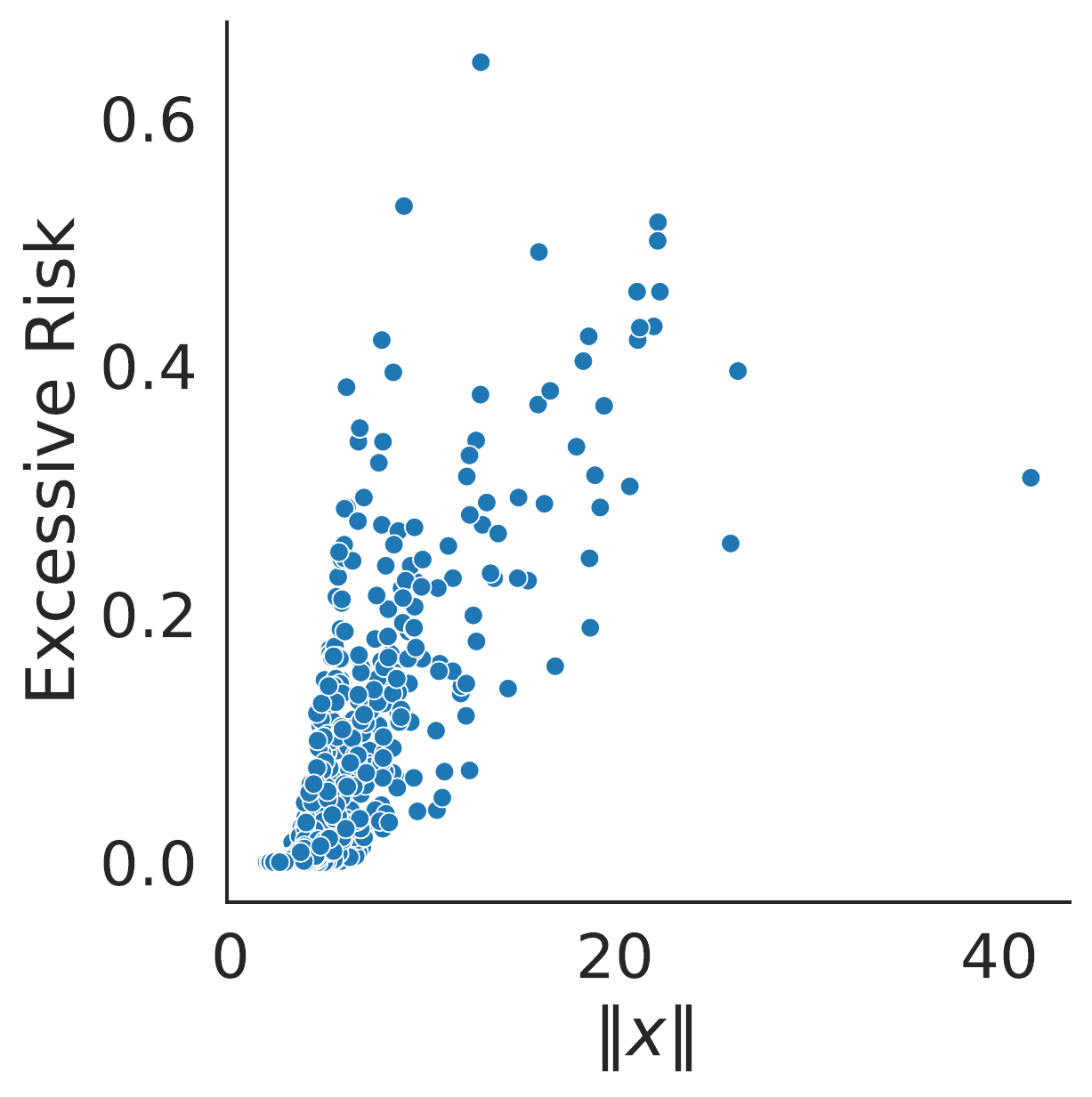}
    \caption{Credit-card data: Relation between input norms and model sensitivity (left)
    and Spearman correlation between input norms and excessive risk (right). }
    \label{fig:impact_norm_2_exp_diff}
\end{figure}
Thus, samples with higher input norms may have a nontrivial impact 
to the model sensitivity and, in turn, to the private model disparate impacts.

\smallskip
Next, the following proposition sheds light on the relation between 
the norm of a sample $\bm{x}$ and its associated gradient norm 
$\| \nabla_{\optimal{\btheta}}\ell(\bar{f}_{\optimal{\btheta}}(\bm{x}),y)\|$.
\begin{proposition}\label{ex:grad_logreg}
Let $\bar{f}_{\btheta}$ be a logistic regression binary classifier 
with cross entropy loss function $\ell( \bar{f}_{\btheta}(\bm{x}, y) ) =- y \log( \bar{f}_{\btheta}(\bm{x}))$. 
For a given sample $(\bm{x}, y) \in \bar{D}$, the gradient
 $\nabla_{\optimal{\btheta}}\ell(\bar{f}_{\optimal{\btheta}}(\bm{x}),y)$ is given by:
\begin{equation}
\label{eq:11}
\nabla_{\optimal{\btheta}}\ell(\bar{f}_{\optimal{\btheta}}(\bm{x}),y) 
= (\bar{f}_{\optimal{\btheta}}(\bm{x}) -y \big) \bm{x}.
\end{equation}
\end{proposition}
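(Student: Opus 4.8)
The plan is to treat this as a direct gradient computation for logistic regression, exploiting the special structure of the sigmoid link. First I would make the model explicit: for a binary logistic regression classifier, $\bar{f}_{\btheta}(\bm{x}) = \sigma(\btheta^\top \bm{x})$, where $\sigma(z) = (1 + e^{-z})^{-1}$ is the logistic sigmoid, and I would read the displayed loss as the standard binary cross-entropy $\ell(\bar{f}_{\btheta}(\bm{x}), y) = -y\log \bar{f}_{\btheta}(\bm{x}) - (1-y)\log(1 - \bar{f}_{\btheta}(\bm{x}))$, so that the stated identity holds exactly (the one-term expression in the statement being read as shorthand for this full objective). Writing $z = \btheta^\top \bm{x}$ and $\hat{p} = \sigma(z)$, I would then decompose the gradient through the chain rule as $\nabla_{\btheta}\ell = \frac{\partial \ell}{\partial \hat{p}}\,\sigma'(z)\,\nabla_{\btheta} z$.

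The key step is the classical sigmoid derivative identity $\sigma'(z) = \sigma(z)\,(1 - \sigma(z)) = \hat{p}(1-\hat{p})$, combined with the elementary facts $\frac{\partial \ell}{\partial \hat{p}} = -\frac{y}{\hat{p}} + \frac{1-y}{1-\hat{p}}$ and $\nabla_{\btheta} z = \bm{x}$. Substituting these three pieces gives $\nabla_{\btheta}\ell = \left(-\frac{y}{\hat{p}} + \frac{1-y}{1-\hat{p}}\right)\hat{p}(1-\hat{p})\,\bm{x}$.

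The remaining work is a short algebraic simplification: distributing the $\hat{p}(1-\hat{p})$ factor cancels the denominators, yielding $-y(1-\hat{p}) + (1-y)\hat{p} = \hat{p} - y$, so that $\nabla_{\btheta}\ell = (\hat{p} - y)\bm{x} = (\bar{f}_{\btheta}(\bm{x}) - y)\bm{x}$. Evaluating at $\btheta = \optimal{\btheta}$ then reproduces Equation \eqref{eq:11} verbatim. I do not anticipate a genuine obstacle here, since the computation is routine; the only point requiring care is the cancellation that makes the $\hat{p}(1-\hat{p})$ factor collapse against the $1/\hat{p}$ and $1/(1-\hat{p})$ terms to the clean residual $\hat{p}-y$, which is precisely the feature that makes the sigmoid/cross-entropy pairing convenient. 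If one instead insists on the literal single-term loss $-y\log\bar{f}_{\btheta}(\bm{x})$, the same computation returns $-y(1-\hat{p})\bm{x}$ rather than the stated form, which is why I would interpret the loss as the complete binary cross-entropy.
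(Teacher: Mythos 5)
Your derivation is correct and is exactly the routine chain-rule computation the paper itself relies on (the paper gives no separate proof of this proposition, treating it as standard; its appendix proof of Corollary 1 handles the same logistic model). Your closing observation is also well taken: the stated single-term loss $-y\log(\bar{f}_{\btheta}(\bm{x}))$ would literally yield $-y(1-\bar{f}_{\btheta}(\bm{x}))\bm{x}$, and the paper's own appendix treatment of logistic regression indeed writes out the full binary cross-entropy $-y\log(\bar{f}_{\btheta}(\bm{x})) - (1-y)\log(1-\bar{f}_{\btheta}(\bm{x}))$, confirming that your reading of the loss is the intended one.
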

\noindent 
Recall that gradient norms have a proportional effect  
on the upper bound of the excessive risk (Equation \eqref{eq:ER_ub}).
Notice further how applying the norm on both side of Equation \eqref{eq:11} illustrates the relation between the gradients and inputs norms.
Thus, the relation above sheds further light on the weight that samples with large norms may have in controlling their associated excessive risk. 
This aspect can be appreciated in Figure \ref{fig:impact_norm_2_exp_diff} (right), which shows a strong correlation between these two quantities. 

The result above can be generalized to multi-class classifiers, as shown in Appendix \ref{app:sec:input_norms}.

Finally, the discussion notes that the smoothness parameter $\beta_{\bm{x}}$ 
captures the local flatness of the loss function at a point $\bm{x}$. 
A derivation of $\beta_{\bm{x}}$ for logistic regression classifier 
is provided below. 
\begin{proposition}\label{ex:hessian_logreg}
Consider again a binary logistic regression as in Proposition 
\ref{ex:grad_logreg}. The smoothness parameter $\beta_{\bm{x}}$ for a 
sample $\bm{x}$ is given by \cite{shi2021aisarah}:
$
    \beta_{\bm{x}} = 0.25 \| \bm{x} \|^2.
$
\end{proposition}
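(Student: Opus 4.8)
The plan is to identify the smoothness parameter $\beta_{\bm{x}}$ with a uniform bound on the spectral norm of the Hessian $\nabla^2_{\btheta}\,\ell(\bar{f}_{\btheta}(\bm{x}),y)$ of the per-sample loss in $\btheta$, since a twice-differentiable loss is $\beta_{\bm{x}}$-smooth exactly when its Hessian has operator norm at most $\beta_{\bm{x}}$. First I would write the binary logistic model explicitly as $\bar{f}_{\btheta}(\bm{x}) = \sigma(\btheta^{\top}\bm{x})$, with $\sigma(t) = (1+e^{-t})^{-1}$ the sigmoid, and recall the elementary identity $\sigma'(t) = \sigma(t)\,(1-\sigma(t))$.

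Next I would differentiate twice. Starting from the gradient computed in Proposition \ref{ex:grad_logreg}, namely $\nabla_{\btheta}\,\ell = \big(\bar{f}_{\btheta}(\bm{x}) - y\big)\bm{x} = \big(\sigma(\btheta^{\top}\bm{x}) - y\big)\bm{x}$, a second application of the chain rule together with the sigmoid identity yields the rank-one Hessian
\begin{equation}
\nabla^2_{\btheta}\,\ell(\bar{f}_{\btheta}(\bm{x}),y) = \sigma(\btheta^{\top}\bm{x})\,\big(1 - \sigma(\btheta^{\top}\bm{x})\big)\,\bm{x}\bm{x}^{\top}.
\end{equation}
Writing $p = \sigma(\btheta^{\top}\bm{x}) \in (0,1)$, this is a positive semidefinite matrix whose only nonzero eigenvalue is $p(1-p)\,\|\bm{x}\|^2$, attained in the direction of $\bm{x}$; hence its spectral norm equals $p(1-p)\,\|\bm{x}\|^2$.

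Finally I would take the supremum over $\btheta$. Since $p(1-p)$ is maximized over $p \in [0,1]$ at $p = \tfrac{1}{2}$, with maximal value $\tfrac{1}{4}$, the Hessian operator norm is bounded above by $0.25\,\|\bm{x}\|^2$ uniformly in $\btheta$, and the bound is tight (e.g.\ for any $\btheta$ with $\btheta^{\top}\bm{x}=0$). This gives $\beta_{\bm{x}} = 0.25\,\|\bm{x}\|^2$, as claimed.

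The calculation is routine; the only point requiring care is conceptual rather than computational. Because the smoothness constant feeding into Theorem \ref{thm:3} must hold along the entire segment between $\optimal{\btheta}$ and $\tilde{\btheta}$, not merely at a single optimum, I must bound the Hessian \emph{uniformly} over $\btheta$; the clean constant $0.25$ is precisely what makes this uniform bound available, since the sigmoid-variance factor $p(1-p)$ is controlled independently of $\btheta$. I would also note that repeating the derivation with the full binary cross-entropy $-y\log\bar{f}_{\btheta}(\bm{x}) - (1-y)\log\!\big(1-\bar{f}_{\btheta}(\bm{x})\big)$ produces the identical Hessian, so the result is insensitive to the shorthand loss expression used in Proposition \ref{ex:grad_logreg}.
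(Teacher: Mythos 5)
Your proof is correct, and it is worth noting that the paper itself contains no proof of this proposition at all: Proposition \ref{ex:hessian_logreg} is stated with a citation to an external reference (\cite{shi2021aisarah}), and the appendix of missing proofs skips it entirely. Your derivation therefore supplies what the paper only imports: you compute the per-sample Hessian $\nabla^2_{\btheta}\,\ell = \sigma(\btheta^{\top}\bm{x})\bigl(1-\sigma(\btheta^{\top}\bm{x})\bigr)\bm{x}\bm{x}^{\top}$, observe that this rank-one positive semidefinite matrix has spectral norm $p(1-p)\|\bm{x}\|^2$ with $p=\sigma(\btheta^{\top}\bm{x})$, and bound $p(1-p)\le \tfrac14$ uniformly in $\btheta$, with equality when $\btheta^{\top}\bm{x}=0$; this is the standard argument and it is exactly what the cited constant encodes. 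Two of your side remarks add genuine value relative to the paper's treatment. First, you correctly insist that $\beta_{\bm{x}}$ must bound the Hessian uniformly over $\btheta$ (not merely at $\optimal{\btheta}$), since the smoothness inequality invoked in the proof of Theorem \ref{thm:3} is applied between $\optimal{\btheta}$ and the random $\tilde{\btheta}$; the factor $\tfrac14$ is precisely what makes such a uniform constant available. Second, you correctly flag that the loss written in Proposition \ref{ex:grad_logreg}, $-y\log \bar{f}_{\btheta}(\bm{x})$, is shorthand: the gradient formula $(\bar{f}_{\btheta}(\bm{x})-y)\bm{x}$ is that of the full binary cross-entropy, and it is the full loss whose Hessian is $\sigma(1-\sigma)\bm{x}\bm{x}^{\top}$ independently of $y$ (the shorthand alone would give a vanishing Hessian when $y=0$), so your derivation is the one consistent with the rest of the paper's analysis.
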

\noindent
The above clearly illustrates the relationship between input norms $\|\bm{x} \|$ and 
the smoothness parameters $\beta_{\bm{x}}$. 

To summarize, propositions \ref{ex:grad_logreg} and \ref{ex:hessian_logreg} 
illustrate that individuals $\bm{x}$ with large (small) input norms 
tends to have large (small) gradient norm and smoothness parameters, 
thus controlling the model sensitivity and, in turn, the excessive risk $R(\bm{x})$. 
An extended analysis of the above claim is provided in Appendix \ref{app:exp_ext}.

\subsection{The impact of the distance to decision boundary}

As mentioned in the previous section, the flipping probability $\fpx$associated with a sample $\bm{x} \in \bar{D}$ directly controls the model sensitivity  $\mathbb{E}[ \| \optimal{\btheta} - \tilde{\btheta} \|]$. Beside the discussed factors, this section further studies which characteristics of sample $\bm{x}$ can causes it to have a high flipping probability. 

Intuitively, samples close to the decision boundary are associated 
to small ensemble voting confidence and vice-versa. To illustrate this 
intuition the paper borrows the concept of \emph{closeness to 
the decision boundary} from \citet{tran2021differentially}.
\begin{definition}[Closeness to decision boundary]
\label{def:dist_boundary}
Let $f_{\btheta}$ be a $C$-classes classifier trained using 
data $\bar{D}$ with its true labels. The closeness to the decision boundary $s(\bm{x})$ is defined as:
$$  s(\bm{x}) \defeq 1- \sum_{c=1}^C f_{\btheta, c} (\bm{x})^2,$$
where  $f_{\btheta,c}$ denotes the softmax probability for class $c$.
\end{definition}
\noindent The above, (together with Theorem 5 of \cite{tran2021differentially}) 
illustrate that large (small) $s(\bm{x})$ values are associated
to close (distant) projections of point $\bm{x}$ to the model decision boundary. 
The concept of closeness to the decision boundary gives a way to indirectly quantify the flipping probability of a sample. Empirically, the correlation between the distance to decision boundary of sample $\bm{x}$ and its flipping probability $\fpx$ 
is illustrated in Figure \ref{fig:corr_a_s} (left). 
The plots are once again generated using a neural network with nonlinear objective and the relation holds for all datasets analyzed. 
Notice the strong positive correlation between these two quantities. The plot indicates that the samples that are close to the decision boundary will have a higher probability of ``flipping'' their label, thus resulting in worse excessive risks. Finally, the proportional effect of the flipping probability on the excessive risks is illustrated in Figure \ref{fig:corr_a_s} (right).
Once again, the plot clearly illustrates that large flipping probabilities $\fpx$ imply large excessive risks.
 
\begin{figure}[tb]
    \centering
    \includegraphics[width=0.35\linewidth, height=100pt]{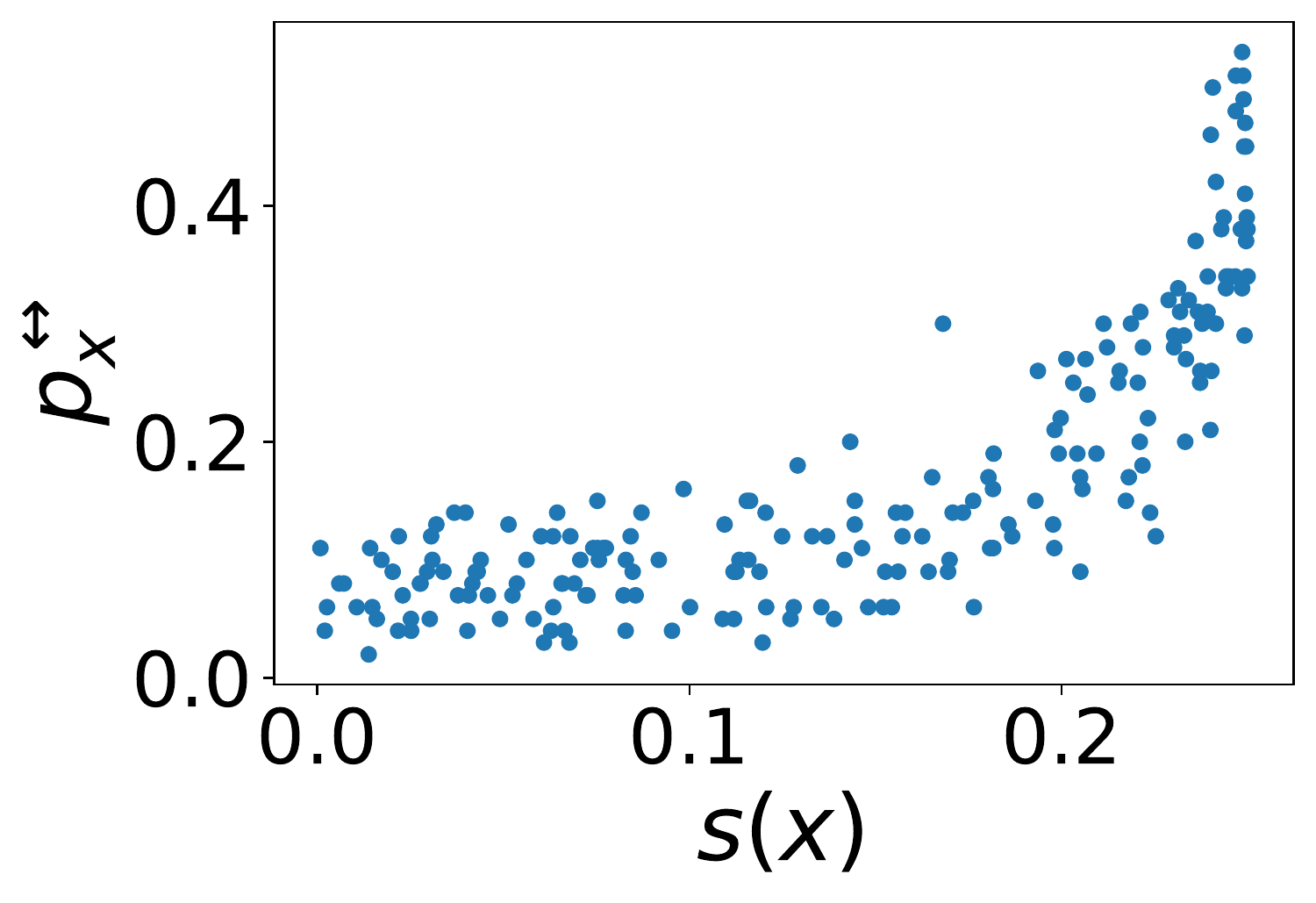}
    \includegraphics[width=0.35\linewidth,height=100pt]{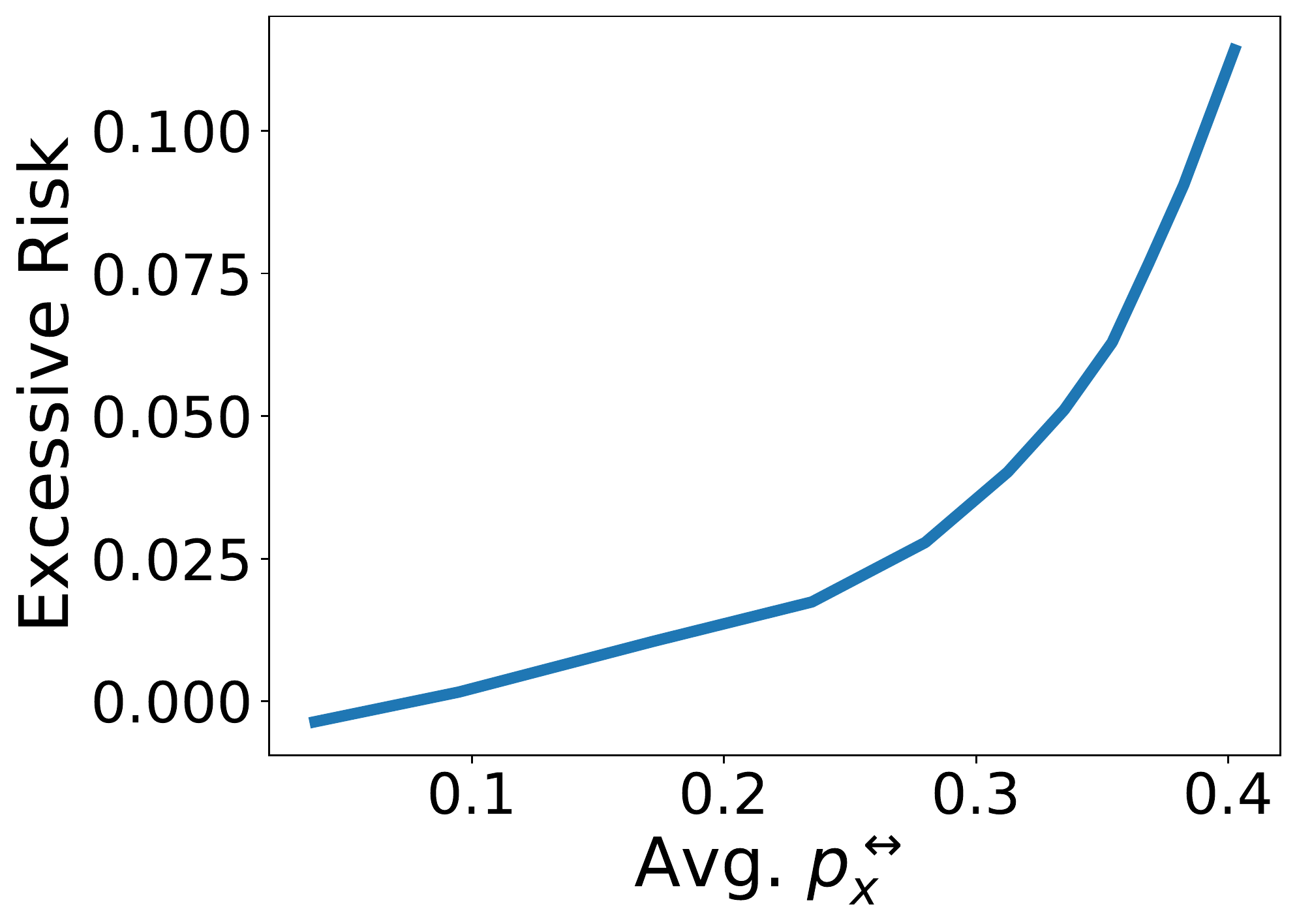}
\caption{Credit-card data: Spearman correlation between the closeness 
to decision boundary $s(\bm{x})$ and the flipping probability $\fpx$ (left)
and relation between input norms and excessive risk (right).}
\label{fig:corr_a_s}
\end{figure}

\section{Mitigation solution}
\label{sec:mitigation}

The previous sections highlighted the presence of several algorithmic 
and data-related factors which affect the disparate impact of the 
student model. 
A common role of these factors was their effects on the model 
sensitivity $\EE\| \optimal{\btheta} - \tilde{\btheta}\|$ 
which, in turn, is related with the excessive risk of different groups, whose difference we would like to minimize. 

Motivated by these observations, this section proposes a mitigating 
strategy that aims at reducing the sensitivity of the private model 
parameters. 
To do so, the paper exploits the idea of \emph{soft labels} (as 
defined below).  
When using the traditional voting process (denoted \emph{hard labels} 
in this section), in low voting confidence regimes small
perturbations (aka additive noise) may significantly affect the result of the voting scheme. 
Consider, for example, the case of a binary classifier where for a sample
$\bm{x}$, $\nicefrac{k}{2}+1$ teachers vote for label $0$ and 
$\nicefrac{k}{2}-1$ for label $1$, for some even ensemble size $k$. 
When perturbations are induced 
to these counts to guarantee privacy, the process can report the incorrect 
label ($\hat{y}=1$) with high probability. 
As a results, the private student model parameters obtained from private training with hard labels can be sensitive to the noisy voting, and may deviate significantly from the non-private one.
This issue can be partially addressed by the introduction of soft labels:
\begin{definition}[Soft label]
\label{def:soft_labels} The \emph{soft label} of a sample $\bm{x}$ is: 
$$ \bm{\alpha}(\bm{x})= \left( \frac{ \#_c(\bm{T}(\bm{x}))}{k}\right)^C_{c=1},$$
\end{definition}
and their privacy-preserving counterparts: 
$$ \tilde{\bm{\alpha}}(\bm{x})= \left( \frac{ \#_c(\bm{T}(\bm{x})) +\mathcal{N}(0, \sigma^2)}{k}\right)^C_{c=1}.$$

\begin{figure}
\centering
\includegraphics[width=0.20\linewidth,height=80pt,valign=M]{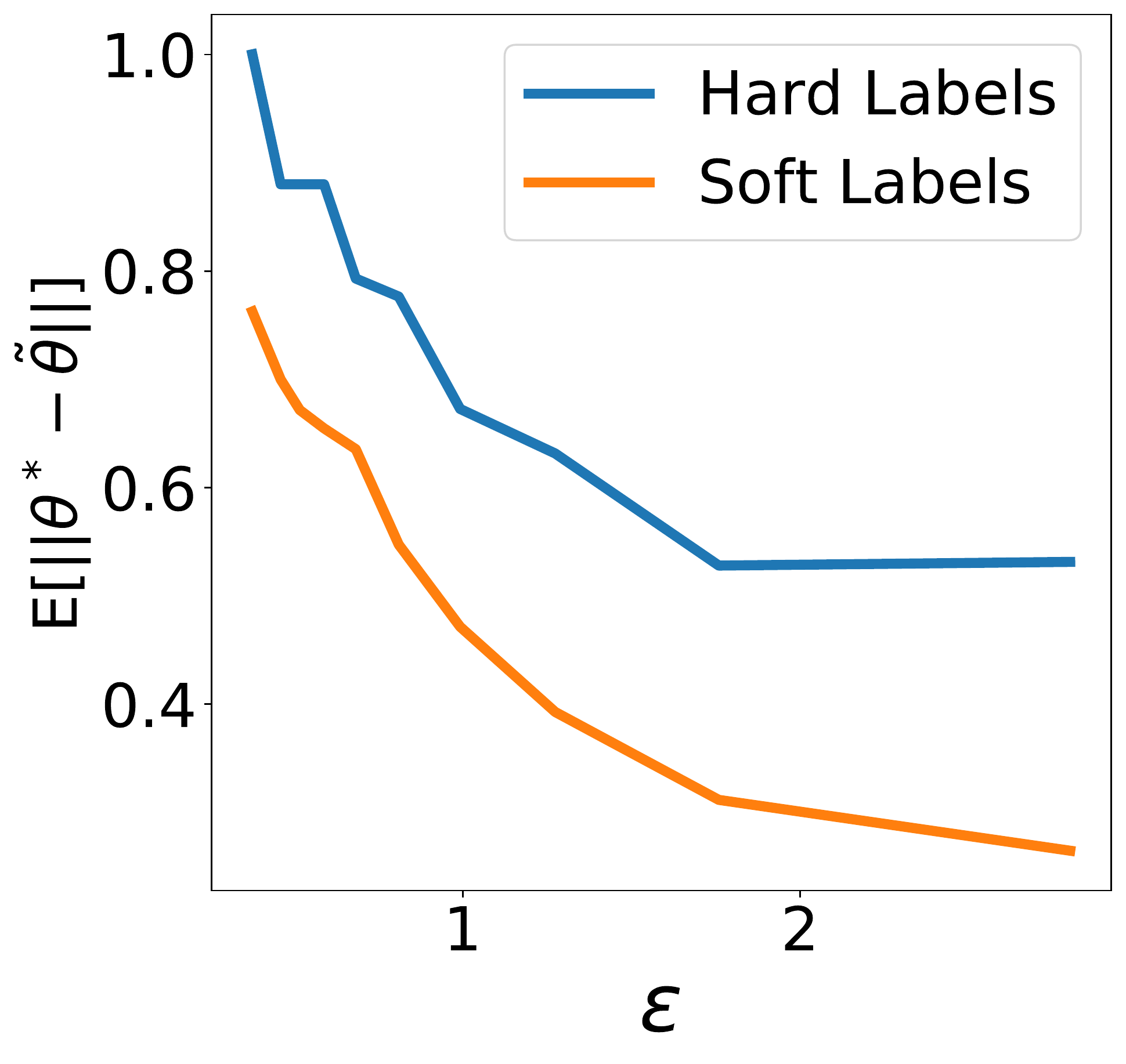}
\includegraphics[width=0.35\textwidth,height=130pt,valign=M]{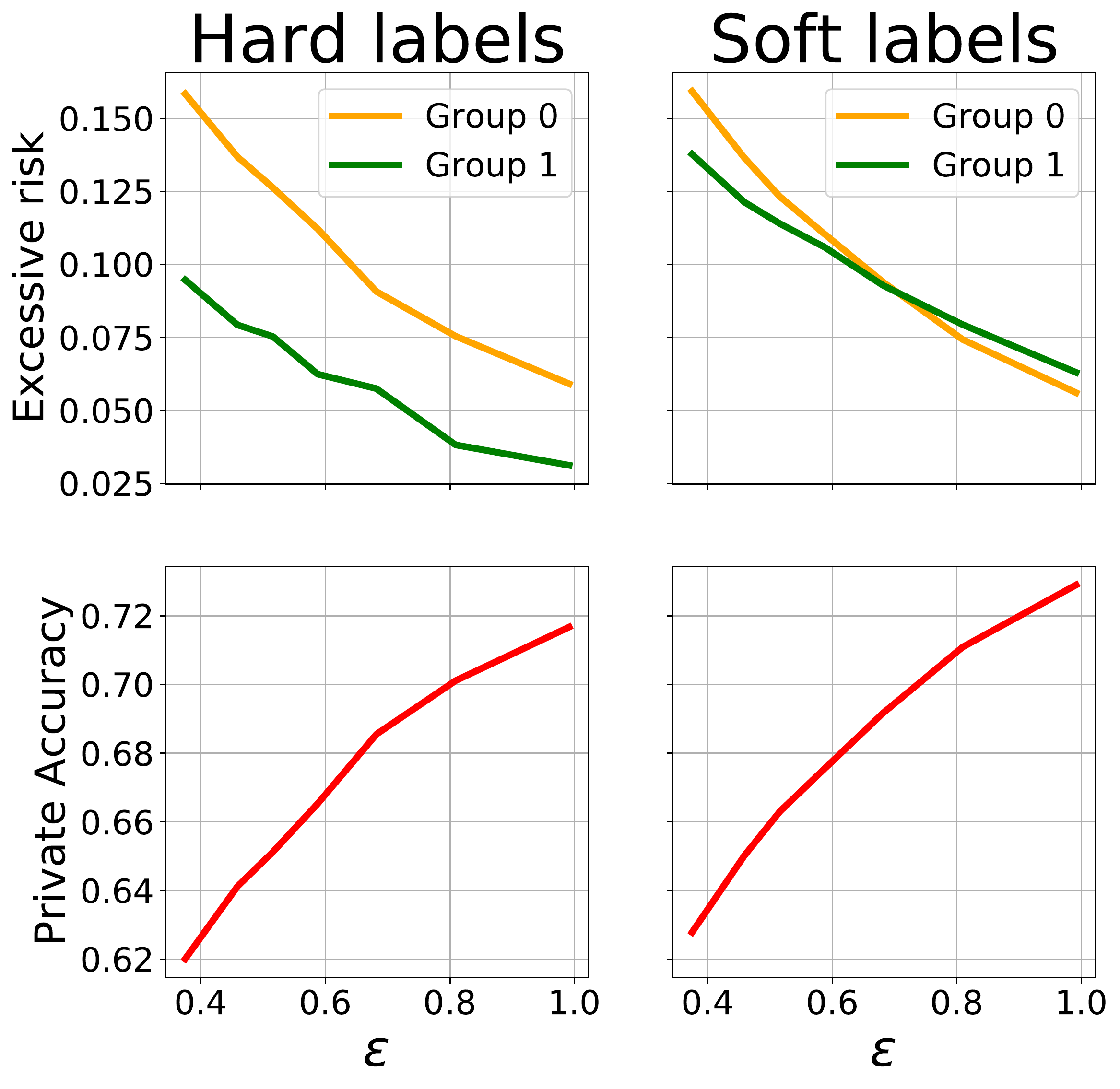}
\includegraphics[width=0.35\linewidth,height=130pt,valign=M]{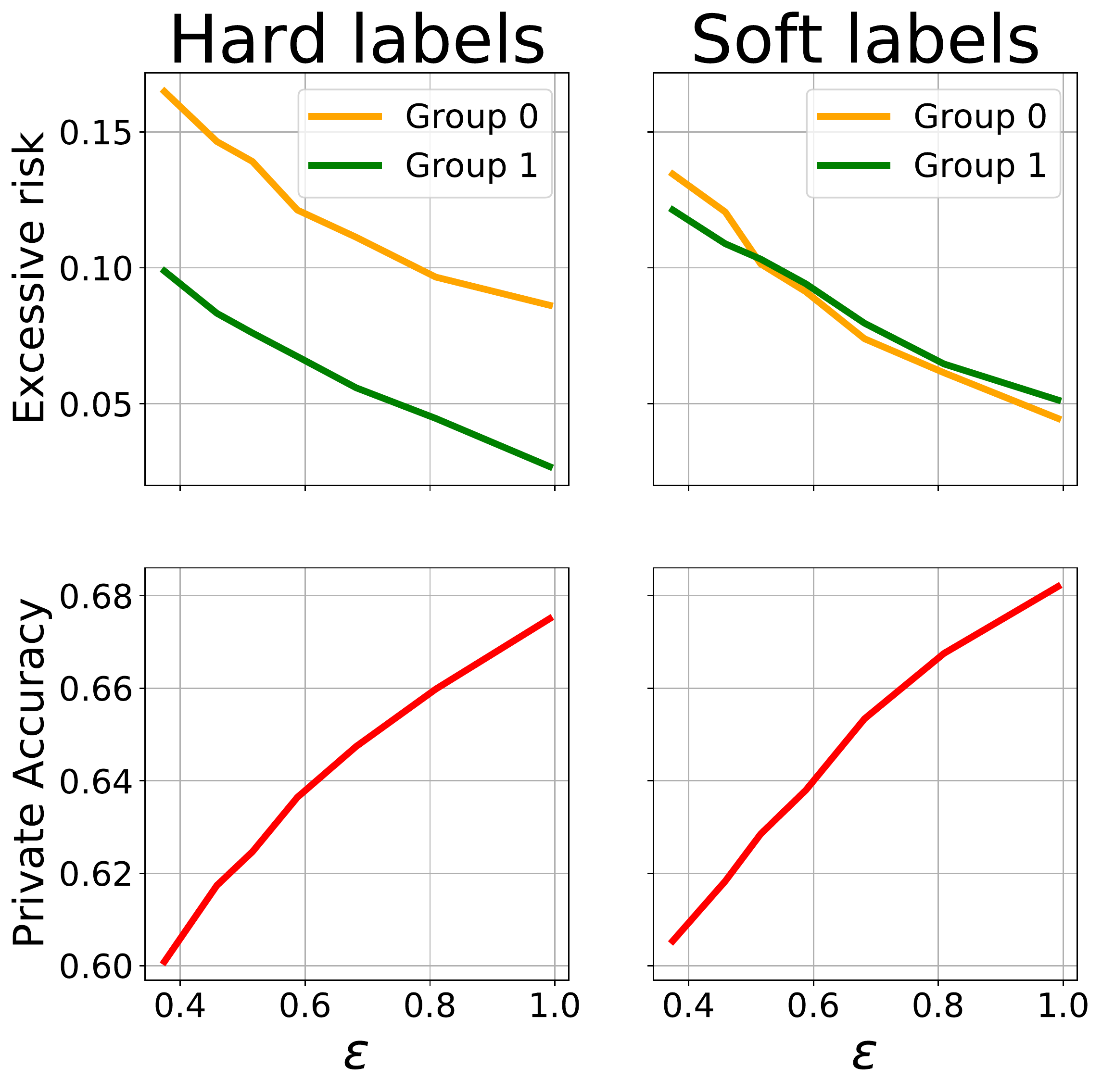}

\caption{Training privately PATE with hard and soft labels: 
Model sensitivity at varying of the privacy loss (left) on Parkinson dataset and excessive risk at varying of the privacy loss for Bank (middle) and Parkinson (right) datasets. 
}
\label{fig:mitigation_solution}
\end{figure}

To exploit soft labels, the training step of the student model is 
altered to use the following loss function:
\begin{equation}
    \ell'(\hat{f}_{\btheta}(\bm{x}), \bm{\tilde{\alpha}}) = \sum_{c=1}^C \tilde{\alpha}_c \ell(f_{\btheta}(\bm{x}), c),
\end{equation}
which can be considered as a weighted version of the original loss function 
$\ell(\hat{f}_{\btheta}(\bm{x}), c)$ on class label $c$, whose weight 
is its confidence $\tilde{\alpha}_c $. 
Note that $\ell'(\hat{f}_{\btheta}(\bm{x}), \bm{\tilde{\alpha}}) 
= \ell(\hat{f}_{\btheta}(\bm{x}))$ 
when all teachers in the ensemble chose the same label.
{The privacy analysis for this model is similar that of classical 
PATE and is reported in Appendix \ref{app:privacy_analysis}}.

The effectiveness of this scheme is demonstrated in 
Figure \ref{fig:mitigation_solution}. 
The experiment settings are reported in details in the {Appendix} and reflect those described at the end of Section \ref{sec:roadmap}.
The left subplot shows the relation between the model sensitivity 
$\mathbb{E}\left[ \| \optimal{\btheta} - \tilde{\btheta} \| \right]$ 
at varying levels of the privacy loss $\epsilon$ (dictated by the noise level 
$\sigma$). Notice how the student models trained using soft labels  
reduce their sensitivity to privacy when compared to the counterparts
that use hard labels. 

The middle and right plots of Figure \ref{fig:mitigation_solution} 
illustrate the effects of the proposed mitigating solution in terms 
of utility/fairness tradeoff on the private student model.
The top subplots illustrate the group excessive risks $R(\bar{D}_{\leftarrow 0})$
and $R(\bar{D}_{\leftarrow 1})$ associated with {minority (0) and majority (1)
groups} while the bottom subplot illustrate the accuracy of the model 
at increasing values of the privacy loss $\epsilon$. 
Notice how soft labels can reduce the disparate impacts in private 
training (top), which consistently reduces the difference in excessive risks 
between two groups, suggesting an improvement in fairness. 
Finally, notice that while fairness is improved there is seemingly no 
cost in accuracy. On the contrary, using soft labels produces comparable 
or better models to the counterparts produced with the hard labels. 

Additional experiments, including illustrating the behavior of the 
mitigating solution at varying of the number $k$ of teachers are 
reported in the appendix and the general message is consistent with 
what described above.
Finally, an important benefit about the proposed solution is that it \emph{does not} require the protected group information ($a \in \cA$) to be part of the training data. Thus, it is applicable in challenging situations when it is not feasible to collect or use protected features (e.g., under the General Data Protection Regulation (GDPR)  \cite{lahoti2020fairness}). 

\emph{These results are significant. They suggest that this mitigating 
solution can be an effective strategy for improving the disparate impact of private model ensembles without sacrificing accuracy.}

\section{Discussion}
We note that the proposed mitigating solution relates to 
concepts explored in robust machine learning. In particular, 
\citet{papernot2016distillation} noted that training a classifiers 
with soft labels can increase its robustness against adversarial samples. 
This connection is not coincidental. Indeed, the model sensitivity
is affected by the voting outcomes of the teacher ensemble (Theorems \ref{thm:1} and \ref{thm:3}). 
Similarly to robust ML models being insensitive to input perturbations, strongly agreeing ensemble will be less sensitive to noise and vice-versa. 

Finally, we notice that the use of more advanced voting schemes, such as the interactive GNMAX \cite{papernot2018scalable}, may produce different fairness results. While this is an interesting avenue for extending our analysis, sophisticated voting schemes may introduce sampling bias (e.g., interactive GNMAX may exclude samples with low ensemble voting agreement). Such bias may trigger some nontrivial unfairness issues on its own. 

\section{Conclusions}
This work was motivated by the recent observations regarding the 
effects of differential privacy to the disparate impacts of machine 
learning models. The paper introduced a notion of fairness that relies 
on the concept of excessive risk and analyzed this notion in the 
Private Aggregation of Teacher Ensembles (PATE) \cite{papernot2018scalable}, 
an important privacy-preserving machine learning framework used in 
semisupervised settings or when one wishes to protect the data
labels. 
This paper isolated key components related with the algorithms parameters
and the public training data characteristics which are responsible 
for exacerbating the disparate impacts, it studied the factors affecting
these components, and introduced a mitigation solution. 

Given the increasing presence of privacy-preserving data-driven algorithms 
in consequential decisions, we believe that this work may represents an 
important and broadly applicable step toward understanding the sources
of disparate impacts observed in differentially private learning systems.

\bibliographystyle{abbrvnat}
\bibliography{lib}

\newpage
\appendix

\setcounter{theorem}{3}
\setcounter{corollary}{2}
\setcounter{lemma}{1}
\setcounter{proposition}{2}

\section{Privacy Analysis}
\label{app:privacy_analysis}
This section provides the privacy analysis for the proposed mitigation solution.  In PATE with the noisy-max scheme presented in Equation \eqref{eq:noisy_max} of the main paper (also called GNMAX), the privacy budget is used for releasing the voting labels $ \tilde{\textsl{v}}(\bm{T}(\bm{x}_i))$ (a.k.a.~hard labels) for each of the $m$ public data samples $\bm{x}_i \in \bar{D}$ according to:

\begin{equation}
    \tilde{\textsl{v}}\left(\bm{T}(\bm{x}_i)\right) 
    = 
    \argmax_c \left\{ \#_c\left(\bm{T}(\bm{x}_i)\right) + 
    \cN\left(0, \sigma^2\right) \right\}.
\end{equation}

The proposed mitigation solutions, instead, releases privately the voting counts $(\#_c(\bm{T}(\bm{x}_i)) \!+\! \cN(0, \sigma^2))_{c=1}^C$ and use these noisy counts to construct the \emph{soft-labels}, see Equation \eqref{def:soft_labels}.  

Using an analogous analysis as that provided in \cite{papernot2018scalable}, adding or removing one individual sample $\bm{x}$ from any disjoint partition $D_i$ of $D$ can change the voting count vector by at most two. This value of the query sensitivity is obtained by GNMAX \cite{papernot2018scalable}. 
Therefore the privacy cost for releasing hard labels or soft-labels is equivalent. 

Next, this section provides the privacy computation $\epsilon$ given by Gaussian mechanism which adds Gaussian noise with standard deviation $\sigma$ to the voting counts.

The privacy analysis of PATE with hard or soft-labels is based on the concept of Renyi differential privacy (RDP) \cite{Mironov_2017}. In either implementations, the process uses the Gaussian mechanism to add independent Gaussian noise to the voting counts. The following Proposition \ref{prop:3} (from \cite{papernot2018scalable}) derives the privacy guarantee for GNMAX.

\begin{proposition} 
\label{prop:3}
The GNMAX aggregator with private Gaussian noise $\mathcal{N}(0, \sigma^2)$ satisfies $(\gamma,\nicefrac{\gamma}{\sigma^2}) $-RDP for all $\gamma \geq 1$.
\end{proposition}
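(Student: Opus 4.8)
The plan is to derive the claim from the Renyi divergence of the Gaussian mechanism together with the post-processing property of Renyi differential privacy. Recall that a randomized mechanism $\cM$ is $(\gamma, \tau)$-RDP if, for every pair of adjacent datasets $D \sim D'$, the Renyi divergence of order $\gamma$ satisfies $D_\gamma(\cM(D) \,\|\, \cM(D')) \le \tau$. The strategy is to first analyze the release of the full noisy count vector and then obtain the GNMAX guarantee by post-processing, since $\tilde{\textsl{v}}(\bm{T}(\bm{x}))$ is a deterministic ($\argmax$) function of those noisy counts.

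First I would bound the $\ell_2$-sensitivity of the vote-count map $\bm{x} \mapsto (\#_c(\bm{T}(\bm{x})))_{c=1}^C$ with respect to adjacency on the private data $D$. Adding or removing a single individual affects the training data of exactly one teacher $f^i$ (since the partitions $D_i$ are disjoint), hence changes at most one teacher's vote. Such a change moves a single unit from one coordinate of the count vector to another, so the vector shifts by $+1$ in one entry and $-1$ in another, and its change has $\ell_2$ norm at most $\sqrt{1^2 + 1^2} = \sqrt{2}$; thus the $\ell_2$-sensitivity is $\Delta_2 = \sqrt{2}$.

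Next I would apply the closed-form Renyi divergence between two spherical Gaussians of common variance $\sigma^2$ and means $\bm{\mu}, \bm{\mu}'$, namely $D_\gamma(\cN(\bm{\mu}, \sigma^2 I) \,\|\, \cN(\bm{\mu}', \sigma^2 I)) = \frac{\gamma \, \| \bm{\mu} - \bm{\mu}' \|^2}{2\sigma^2}$. Instantiating $\bm{\mu}, \bm{\mu}'$ with the clean count vectors on adjacent inputs and using $\| \bm{\mu} - \bm{\mu}' \| \le \Delta_2 = \sqrt{2}$ yields $D_\gamma \le \frac{\gamma \cdot 2}{2\sigma^2} = \frac{\gamma}{\sigma^2}$, so releasing the noisy counts is $(\gamma, \nicefrac{\gamma}{\sigma^2})$-RDP for all $\gamma \ge 1$. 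Finally, since the GNMAX output is obtained by applying a data-independent $\argmax$ to the noisy counts, the data-processing inequality for Renyi divergence implies that the $\argmax$ output inherits the same bound, establishing the claim.

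The main obstacle is the sensitivity step: one must argue carefully that disjointness of the partitions limits the effect of a single individual to one teacher, and that the resulting change is an $\ell_2$ displacement of $\sqrt{2}$ rather than the $\ell_1$ value of $2$, since it is precisely the $\ell_2$ sensitivity that governs the Gaussian-mechanism Renyi bound. The two remaining ingredients — the Gaussian Renyi-divergence identity and post-processing — are standard.
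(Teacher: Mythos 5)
Your proof is correct. Note that the paper does not actually prove this proposition: it imports it from \citet{papernot2018scalable}, remarking only that adding or removing one individual ``can change the voting count vector by at most two.'' Your argument is, in substance, the proof given in that cited source: disjointness of the partitions $D_i$ limits the effect of one individual to a single teacher's vote, the Gaussian R\'enyi-divergence identity $D_\gamma\bigl(\cN(\bm{\mu},\sigma^2 I)\,\|\,\cN(\bm{\mu}',\sigma^2 I)\bigr) = \gamma\|\bm{\mu}-\bm{\mu}'\|^2/(2\sigma^2)$ bounds the divergence of the noisy histograms, and post-processing by $\argmax$ preserves the RDP guarantee. One point where your write-up is sharper than the paper's surrounding text: you correctly identify that the quantity governing the Gaussian RDP bound is the $\ell_2$ sensitivity $\sqrt{2}$ (one vote moving between two coordinates), not the $\ell_1$ value $2$ that the paper quotes; plugging $\Delta_2 = 2$ into the same formula would only yield the weaker $(\gamma, 2\gamma/\sigma^2)$-RDP, so this distinction is precisely what makes the stated constant $\nicefrac{\gamma}{\sigma^2}$ come out.
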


Since the GNMAX mechanism is applied on $m$ public data samples from $\bar{D}$, the total privacy loss spent to provide the private labels is derived by the following composition theorem.

\begin{theorem}[Composition for RDP]
\label{app:rdp_comp}
If a mechanism $\cM$ consists of a sequence of adaptive mechanisms $\cM_1, \cM_2, \ldots, \cM_m$ such that for any $i \in [m]$, $\cM_i$ guarantees $(\gamma, \epsilon_i)$-RDP, then $\cM$ guarantees $(\gamma, \sum_{i=1}^m \epsilon_i)$-RDP.
\end{theorem}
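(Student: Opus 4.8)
The plan is to prove the statement by induction on the number $m$ of composed mechanisms, working directly from the definition of RDP in terms of the R\'enyi divergence. Recall that $\cM_i$ being $(\gamma, \epsilon_i)$-RDP means that for every pair of adjacent datasets $D \sim D'$ the R\'enyi divergence of order $\gamma$ satisfies $D_\gamma(\cM_i(D) \,\|\, \cM_i(D')) \le \epsilon_i$, where $D_\gamma(P \,\|\, Q) = \frac{1}{\gamma - 1} \log \EE_{y \sim Q}[(P(y)/Q(y))^\gamma]$. Because the composition is adaptive, each $\cM_i$ receives as input the dataset together with the realized outputs $y_1, \ldots, y_{i-1}$ of the earlier mechanisms, and the RDP guarantee is required to hold uniformly over every such history.

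First I would fix an arbitrary pair of adjacent datasets $D \sim D'$ and write $p$ and $q$ for the joint output densities of $\cM(D)$ and $\cM(D')$ over $(y_1, \ldots, y_m)$. Adaptivity gives the factorizations $p(y) = \prod_{i=1}^m p_i(y_i \mid y_{<i})$ and $q(y) = \prod_{i=1}^m q_i(y_i \mid y_{<i})$, where $p_i, q_i$ are the conditional densities produced by $\cM_i$ under $D$ and $D'$ respectively. The key quantity to control is $\exp((\gamma-1) D_\gamma(\cM(D) \,\|\, \cM(D'))) = \int p(y)^\gamma q(y)^{-(\gamma-1)} \, dy$, which, upon substituting the factorizations, becomes a nested integral whose integrand is the product over $i$ of $p_i(y_i \mid y_{<i})^\gamma \, q_i(y_i \mid y_{<i})^{-(\gamma-1)}$.

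The central step is to evaluate this nested integral from the innermost coordinate outward. Integrating over $y_m$ alone yields exactly $\exp((\gamma-1) D_\gamma(\cM_m(D, y_{<m}) \,\|\, \cM_m(D', y_{<m})))$, which by the uniform $(\gamma, \epsilon_m)$-RDP guarantee of $\cM_m$ is bounded by $\exp((\gamma-1)\epsilon_m)$ regardless of the history $y_{<m}$. Since this bound is a constant, it factors out of the remaining integral, leaving precisely that constant times $\exp((\gamma-1) D_\gamma(\cM_{1:m-1}(D) \,\|\, \cM_{1:m-1}(D')))$. Applying the induction hypothesis to the first $m-1$ mechanisms, then taking logarithms and dividing by $\gamma - 1$, gives $D_\gamma(\cM(D) \,\|\, \cM(D')) \le \sum_{i=1}^m \epsilon_i$; the base case $m=1$ is the hypothesis itself, and since $D \sim D'$ was arbitrary the claim follows.

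I expect the main obstacle to be the careful handling of adaptivity: one must ensure that the $(\gamma, \epsilon_m)$-RDP bound for the last mechanism holds uniformly over every possible prefix $y_{<m}$, so that it can legitimately be pulled outside the outer integral as a constant. This uniformity is exactly what the adaptive RDP definition provides, and it is the property that makes the clean additive composition possible; without it, the divergence bound would depend on the history and could not be factored out. The remaining manipulations are routine, relying only on Fubini's theorem to justify the iterated integration and on the monotonicity of $\exp$ and $\log$ to transfer the bounds.
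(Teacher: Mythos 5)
Your proof is correct. Note that the paper itself does not prove this theorem at all: it states it as a known result imported from the R\'enyi differential privacy literature \cite{Mironov_2017}, where it appears (with the same statement) as the adaptive composition property of RDP. Your argument is essentially a faithful reconstruction of that canonical proof: write the joint output density of the adaptive composition as a product of conditionals, observe that $\exp\left((\gamma-1)D_\gamma\right)$ becomes a nested integral, bound the innermost integral by $\exp\left((\gamma-1)\epsilon_m\right)$ \emph{uniformly over histories} using the per-mechanism RDP guarantee, pull that constant out, and induct on $m$. You also correctly identify the one place where the argument could silently break --- the bound on the last mechanism must hold for every prefix $y_{<m}$, which is exactly what the adaptive RDP definition supplies. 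The only caveat worth flagging is the order $\gamma = 1$: your closed-form expression for $D_\gamma$ and the final division by $\gamma - 1$ require $\gamma > 1$, whereas the paper invokes RDP guarantees for all $\gamma \geq 1$ (Proposition 3); at $\gamma = 1$ the R\'enyi divergence is the KL divergence and additivity follows instead from the KL chain rule, or by taking the limit $\gamma \downarrow 1$, so the statement is unaffected.
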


Based on Theorem \ref{app:rdp_comp} and Proposition \ref{prop:3}, PATE satisfies $(\gamma, \nicefrac{m\gamma}{\sigma^2})$-RDP.  PATE also satisfies $(\epsilon, \delta)$-DP by the following theorem.

\begin{theorem}[From RDP to DP]
\label{thm:rdp_2_dp}
If a mechanism $\cM$ guarantees $(\gamma, \epsilon)$-RDP, then $\cM$ guarantees $(\epsilon +  \frac{\log \nicefrac{1}{\delta}}{\gamma-1}, \delta)$-DP for any $\delta \in (0,1)$.
\end{theorem}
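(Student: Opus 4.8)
The plan is to prove this standard conversion by passing through the \emph{privacy loss random variable} and applying a Markov-type tail bound. First I would recall that $(\gamma,\epsilon)$-RDP means that for every pair of adjacent inputs $D \sim D'$ the R\'enyi divergence of order $\gamma$ between the output distributions is at most $\epsilon$; writing $p$ and $q$ for the densities of $\cM(D)$ and $\cM(D')$, this reads $\tfrac{1}{\gamma-1}\log \EE_{x\sim q}\!\left[(p(x)/q(x))^\gamma\right] \le \epsilon$. I fix adjacent $D \sim D'$ and abbreviate $\epsilon' = \epsilon + \frac{\log \nicefrac{1}{\delta}}{\gamma-1}$, the target privacy loss.

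The first key step is to rewrite the RDP hypothesis as a moment bound for the privacy loss $L(x) = \log\frac{p(x)}{q(x)}$ taken under $p$. A change of measure gives the identity $\EE_{x\sim p}[e^{(\gamma-1)L(x)}] = \EE_{x\sim q}[(p(x)/q(x))^\gamma] \le e^{(\gamma-1)\epsilon}$. I would then apply Markov's inequality to the nonnegative random variable $e^{(\gamma-1)L(x)}$ under $p$, obtaining $\Pr_{x\sim p}[L(x) \ge \epsilon'] \le e^{(\gamma-1)(\epsilon-\epsilon')}$. By the choice of $\epsilon'$ the exponent $(\gamma-1)(\epsilon-\epsilon')$ collapses to $\log\delta$, so this tail probability is at most $\delta$.

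The final step is a case split on the ``bad'' set $B = \{x : L(x) \ge \epsilon'\}$, on which $p(x) \le e^{\epsilon'} q(x)$ fails and whose $p$-mass we have just bounded by $\delta$. For any measurable response set $R \subseteq \cR$ I would write $\Pr[\cM(D)\in R] = p(R\cap B) + p(R\cap B^c) \le p(B) + e^{\epsilon'} q(R\cap B^c)$, using $p \le e^{\epsilon'} q$ pointwise on $B^c$. Bounding $p(B) \le \delta$ and $q(R\cap B^c)\le q(R) = \Pr[\cM(D')\in R]$ yields exactly $\Pr[\cM(D)\in R] \le e^{\epsilon'}\Pr[\cM(D')\in R] + \delta$, which is $(\epsilon',\delta)$-DP.

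The main obstacle is less in the arithmetic than in handling it cleanly: the change-of-measure identity and the exponent in Markov's inequality require $\gamma > 1$ (so the denominator $\gamma-1$ is nonzero), and one must check that the argument is symmetric in $D$ and $D'$, using that adjacency is symmetric so the RDP bound holds with the roles reversed, to get the two-sided conclusion DP demands. The tail bound via Markov is precisely what forces the additive $\frac{\log(1/\delta)}{\gamma-1}$ term, and verifying that this choice makes the exponent equal $\log\delta$ is the crux tying the failure probability $\delta$ to the RDP order $\gamma$.
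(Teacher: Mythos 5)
Your proof is correct. The change-of-measure identity $\EE_{x\sim p}\left[e^{(\gamma-1)L(x)}\right]=\EE_{x\sim q}\left[(p(x)/q(x))^{\gamma}\right]$, the Markov bound giving $\Pr_{x\sim p}[L(x)\ge \epsilon'] \le e^{(\gamma-1)(\epsilon-\epsilon')}=\delta$ for your choice of $\epsilon'$, and the split over the bad set $B$ assemble into exactly $\Pr[\cM(D)\in R]\le e^{\epsilon'}\Pr[\cM(D')\in R]+\delta$, and you rightly flag that $\gamma>1$ is what makes both the exponentiation step and the division by $\gamma-1$ legitimate. Be aware, though, that the paper itself does not prove this statement: it is quoted in the appendix as a known result from the R\'enyi-DP literature (Mironov, 2017), so there is no in-paper argument to compare against. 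Your route also differs from Mironov's original proof, which never introduces the privacy-loss random variable; instead it derives the probability-preservation inequality $P(A)\le \left(e^{\epsilon}Q(A)\right)^{(\gamma-1)/\gamma}$ from H\"older's inequality and then does a case analysis on the size of $Q(A)$: either $Q(A)$ is so small that this bound already forces $P(A)\le\delta$, or it is large enough that the bound is dominated by $e^{\epsilon'}Q(A)$. Both arguments yield the same constants; yours makes transparent where the additive $\frac{\log(1/\delta)}{\gamma-1}$ term comes from (a tail bound on the privacy loss), while Mironov's works directly with set probabilities and avoids the loss variable altogether. One small point worth adding to yours: absolute continuity of $p$ with respect to $q$ (so that $L$ is well defined $p$-almost everywhere) is implicit in the finiteness of the order-$\gamma$ R\'enyi divergence and should be stated when you first write $L(x)=\log\frac{p(x)}{q(x)}$.
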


Thus, based on Theorem \ref{thm:rdp_2_dp}, PATE (with either hard or soft labels) satisfies $( \nicefrac{m\gamma}{\sigma^2} +\frac{\log \nicefrac{1}{\delta}}{\gamma -1}, \delta)$-DP.

\setcounter{theorem}{0}
\setcounter{corollary}{0}
\setcounter{lemma}{0}
\setcounter{proposition}{0}

\section{Missing Proofs}
\label{app:missing_proofs}
This section contains the missing proofs associated with the theorems
presented in the main paper. The theorems are restated for completeness.

\begin{theorem}
\label{app:thm:1}
Consider a student model $\bar{f}_{\btheta}$ trained with a convex and decomposable loss 
function $\ell(\cdot)$. Then, the expected 
difference between the private and non-private model parameters is 
upper bounded as follows:
\begin{equation}
    \mathbb{E}\left[ \| \optimal{\btheta} - \tilde{\btheta} \| \right] 
    \leq \frac{|c|}{m\lambda} \left[ \sum_{\bm{x} \in \bar{D}} p^{\leftrightarrow}_{\bm{x}} \| g_{\bm{x}}\| \right],
\end{equation}
where $c$ is a real constant and
$g_{\bm{x}}= \max_{\btheta}\| \nabla_{\btheta} h_{\btheta}(\bm{x}) \|$ 
represents the maximum gradient norm distortion introduced by a 
sample $\bm{x}$. Both $c$ and $h$ are defined as in Equation 
\eqref{eq:decomposable}. 
\end{theorem}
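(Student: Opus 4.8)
The plan is to run a standard stability-of-the-minimizer argument driven by the $\lambda$-strong convexity that the regularizer $\lambda\|\btheta\|^2$ confers on \emph{both} the clean objective $\cL(\cdot;\bar D,\bm T)$ (built from the labels $\textsl{v}(\bm T(\bm x))$) and its noisy counterpart, which I will call $\tilde{\cL}$ (built from the labels $\tilde{\textsl{v}}(\bm T(\bm x))$). Writing $\optimal{\btheta}$ and $\tilde{\btheta}$ for the two minimizers, the first-order optimality conditions give $\nabla\cL(\optimal{\btheta})=0$ and $\nabla\tilde{\cL}(\tilde{\btheta})=0$. Since $\tilde{\cL}$ is $\lambda$-strongly convex with the same modulus for every realization of the Gaussian noise (the regularizer is deterministic), the strong-convexity/gradient inequality gives
\begin{equation*}
\lambda \, \| \optimal{\btheta} - \tilde{\btheta} \| \;\le\; \| \nabla \tilde{\cL}(\optimal{\btheta}) - \nabla \tilde{\cL}(\tilde{\btheta}) \| \;=\; \| \nabla \tilde{\cL}(\optimal{\btheta}) \|,
\end{equation*}
and, because $\nabla\cL(\optimal{\btheta})=0$, the right-hand side equals $\|\nabla\tilde{\cL}(\optimal{\btheta})-\nabla\cL(\optimal{\btheta})\|$. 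Thus the whole gap is controlled by how much the gradient moves when the clean labels are swapped for the noisy ones.

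The second step I would carry out is to evaluate this gradient gap via the decomposable structure. From Definition~\ref{def:1}, $\nabla_{\btheta}\ell(\bar f_{\btheta}(\bm x),y)=z'(h_{\btheta}(\bm x))\nabla_{\btheta}h_{\btheta}(\bm x)+c\,y\,\nabla_{\btheta}h_{\btheta}(\bm x)$. The label-independent term $z'(h_{\btheta}(\bm x))\nabla_{\btheta}h_{\btheta}(\bm x)$ is identical in $\cL$ and $\tilde{\cL}$ and so cancels, leaving per sample only the contribution $c\,(\tilde y_{\bm x}-y_{\bm x})\,\nabla_{\btheta}h_{\optimal{\btheta}}(\bm x)$, where $y_{\bm x}=\textsl{v}(\bm T(\bm x))$ and $\tilde y_{\bm x}=\tilde{\textsl{v}}(\bm T(\bm x))$. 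Summing over $\bar D$ (and absorbing the $\nicefrac{1}{m}$ normalization of the empirical risk) gives an exact expression for $\nabla\tilde{\cL}(\optimal{\btheta})-\nabla\cL(\optimal{\btheta})$ as a sum of these per-sample terms.

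The final step passes to the expectation over the privacy noise. Taking $\EE$ of the pointwise bound, applying the triangle inequality inside the norm, and using that $\optimal{\btheta}$ (hence $\nabla_{\btheta}h_{\optimal{\btheta}}(\bm x)$) is deterministic, I obtain
\begin{equation*}
\lambda \, \EE\!\left[ \| \optimal{\btheta} - \tilde{\btheta} \| \right] \;\le\; \frac{|c|}{m}\sum_{\bm x \in \bar D} \EE\!\left[\, | \tilde y_{\bm x} - y_{\bm x} | \,\right] \, \| \nabla_{\btheta} h_{\optimal{\btheta}}(\bm x) \|.
\end{equation*}
It then remains to identify $\EE[\,|\tilde y_{\bm x}-y_{\bm x}|\,]$ with the flipping probability $\fp{\bm x}$ (for a binary $\{0,1\}$-valued label the mismatch indicator is exactly $|\tilde y_{\bm x}-y_{\bm x}|$, so its expectation is $\Pr[\tilde y_{\bm x}\neq y_{\bm x}]=\fp{\bm x}$) and to bound $\|\nabla_{\btheta}h_{\optimal{\btheta}}(\bm x)\|$ by its uniform maximum $g_{\bm x}=\max_{\btheta}\|\nabla_{\btheta}h_{\btheta}(\bm x)\|$; rearranging yields the claim.

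I expect the delicate points to be twofold. First, the clean identification of $\EE[\,|\tilde y_{\bm x}-y_{\bm x}|\,]$ with $\fp{\bm x}$ is transparent for binary labels but needs care (or a bounded-label assumption) in the multiclass case. Second, the constant bookkeeping — tracking the strong-convexity modulus together with the $\nicefrac{1}{m}$ normalization so that the prefactor comes out as $\nicefrac{|c|}{(m\lambda)}$ — is where the argument is easiest to get wrong. The use of the \emph{uniform} gradient bound $g_{\bm x}$ is the device that lets the proof sidestep having to locate $\optimal{\btheta}$ explicitly.
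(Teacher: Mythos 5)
Your proposal is correct and follows essentially the same route as the paper's proof: both exploit $\lambda$-strong convexity of the regularized objectives together with the first-order optimality conditions, use the decomposable structure so that only the label-dependent term $c\,(\tilde y - \hat y)\,\nabla_{\btheta} h_{\btheta}(\bm{x})$ survives in the gradient gap at $\optimal{\btheta}$, and conclude via the triangle inequality, the uniform bound $g_{\bm{x}}$, and the identification $\EE\left[\,|\tilde y_{\bm{x}} - \hat y_{\bm{x}}|\,\right] = \fp{\bm{x}}$. The binary-label caveat you flag is in fact present (silently) in the paper's own proof, where $|\rho_i| = |\hat y_i - \tilde y_i|$ is asserted to be a $\{0,1\}$-valued Bernoulli variable, so your explicit acknowledgment of it is, if anything, more careful than the original.
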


Proof of Theorem \ref{app:thm:1} requires the following Lemma \ref{app:lem:1} from \cite{opt_paper} on the property of strongly convex functions.
\begin{lemma}[\citet{opt_paper}]
\label{app:lem:1}
Let $\cL(\btheta)$ be a differentiable function. 
Then $\cL(\btheta)$ is $\lambda$-strongly convex \emph{iff} for all vectors
 $\btheta, \btheta'$: 
\begin{equation}
    \hugP{\nabla_{\btheta}\cL  - \nabla_{\btheta'} \cL}^T 
    \hugP{\btheta - \btheta'} \geq \lambda 
    \left\| \btheta -\btheta' \right\|^2.
\end{equation}
\end{lemma}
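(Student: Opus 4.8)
The plan is to reduce the claimed equivalence to the classical fact that a differentiable function is convex if and only if its gradient is a monotone operator, by shifting out the quadratic term. Concretely, I would introduce the auxiliary function $g(\btheta) \defeq \cL(\btheta) - \frac{\lambda}{2}\norm{\btheta}^2$, whose gradient is $\nabla g(\btheta) = \nabla_{\btheta}\cL - \lambda\btheta$. By definition, $\cL$ is $\lambda$-strongly convex precisely when $g$ is convex, so the left-hand side of the biconditional becomes the statement ``$g$ is convex.''

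For the right-hand side, I would expand the claimed monotonicity inequality using $\nabla_{\btheta}\cL = \nabla g(\btheta) + \lambda\btheta$. A direct computation gives $\hugP{\nabla_{\btheta}\cL - \nabla_{\btheta'}\cL}^T \hugP{\btheta - \btheta'} = \hugP{\nabla g(\btheta) - \nabla g(\btheta')}^T\hugP{\btheta - \btheta'} + \lambda\norm{\btheta - \btheta'}^2$, so the inequality asserted in the lemma is \emph{exactly} equivalent to the plain monotonicity $\hugP{\nabla g(\btheta) - \nabla g(\btheta')}^T\hugP{\btheta - \btheta'} \geq 0$. Thus the whole statement collapses to the single biconditional: $g$ convex $\iff$ $\nabla g$ monotone.

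It then remains to prove this standard equivalence in both directions. For the forward direction (convex $\Rightarrow$ monotone), I would invoke the first-order characterization of convexity, writing $g(\btheta') \geq g(\btheta) + \nabla g(\btheta)^T\hugP{\btheta' - \btheta}$ together with the symmetric inequality obtained by swapping $\btheta$ and $\btheta'$, and then add the two so that the function values cancel, leaving $\hugP{\nabla g(\btheta) - \nabla g(\btheta')}^T\hugP{\btheta - \btheta'} \geq 0$. For the converse (monotone $\Rightarrow$ convex), I would restrict $g$ to the segment joining $\btheta'$ and $\btheta$, defining $\phi(t) = g\hugP{\btheta' + t(\btheta - \btheta')}$ for $t \in [0,1]$, so that $\phi'(t) = \nabla g\hugP{\btheta' + t(\btheta - \btheta')}^T\hugP{\btheta - \btheta'}$; monotonicity of $\nabla g$ forces $\phi'$ to be nondecreasing, hence $\phi$ is convex on $[0,1]$, and since this holds along every segment, $g$ is convex.

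The main obstacle, and the only step carrying real content, is the converse direction: deducing convexity of $g$ from monotonicity of its gradient. Reducing to the one-dimensional $\phi$ is the clean way to handle it, but it relies on verifying that a nondecreasing derivative $\phi'$ implies convexity of $\phi$ (via integration of $\phi'$ or the mean value theorem) and that convexity along all line segments yields joint convexity of $g$. The forward direction and the two algebraic reductions above are routine, so I would present them briefly and concentrate the argument on this converse.
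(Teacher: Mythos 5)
Your proof is correct, but there is nothing in the paper to compare it against: the paper never proves this lemma, importing it wholesale by citation from the optimization literature and using it as a black box in the proof of Theorem 1. Your argument is the standard textbook route and it is complete: shifting by the quadratic, $g(\btheta) \defeq \cL(\btheta) - \frac{\lambda}{2}\left\|\btheta\right\|^2$, checking the algebraic identity $\hugP{\nabla_{\btheta}\cL - \nabla_{\btheta'}\cL}^T\hugP{\btheta-\btheta'} = \hugP{\nabla g(\btheta) - \nabla g(\btheta')}^T\hugP{\btheta-\btheta'} + \lambda\left\|\btheta-\btheta'\right\|^2$, and then establishing the classical equivalence between convexity of $g$ and monotonicity of $\nabla g$, with the only substantive step (monotone $\Rightarrow$ convex) handled correctly by restriction to segments and the fact that a differentiable one-variable function with nondecreasing derivative is convex. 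One point worth making explicit in a final write-up: you adopt as the \emph{definition} of $\lambda$-strong convexity that $g$ is convex. If the intended definition is instead the first-order inequality $\cL(\btheta') \geq \cL(\btheta) + \nabla_{\btheta}\cL^T\hugP{\btheta'-\btheta} + \frac{\lambda}{2}\left\|\btheta'-\btheta\right\|^2$, the same shift disposes of the discrepancy in one line, since that inequality is precisely the first-order convexity condition for $g$; adding this remark would make the proof robust to whichever convention the cited reference uses.
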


\begin{proof}[Proof of Theorem 1]
Denote with $\hat{y}_i =   \textsl{v}(\bm{T}(\bm{x}_i))$ the non-private voting label associated with $\bm{x}_i$ and $\tilde{y}_i =   \tilde{\textsl{v}}(\bm{T}(\bm{x}_i))$ for the private voting label counterpart.
The regularized empirical risk function (Equation \eqref{eq:ERM}) that uses the non-private voting labels can be rewritten as follows:
\begin{align}
\cL & = \frac{1}{m} \sum_{i=1}^m \ell 
\hugP{\bar{f}_{\btheta}(\bm{x}_i),\hat{y}_i}  +\lambda
 \left\| \btheta \right\|\\
& = \frac{1}{m} \sum_{i=1}^m \left[ z(h_{\btheta}(\bm{x}_i)) + 
c \hat{y}_i h_{\btheta}(\bm{x}_i) \right] +\lambda \left\| \btheta \right\|^2,
\label{eq:hat_L}
\end{align}
where the second equality is due to the decomposable loss assumption.
Likewise, define $\tilde{\cL}$ to be the regularized empirical risk function  with  private voting labels $\tilde{y}_i$:
\begin{align}
\label{eq:tilde_L}
\tilde{\cL}  = \frac{1}{m} \sum_{i=1}^m \left[z(h_{\btheta}(\bm{x}_i)) + c \tilde{y}_i h_{\btheta}(\bm{x}_i) \right] +\lambda 
\left\| \btheta \right\|^2,
\end{align}

\noindent Based on Equation \eqref{eq:hat_L}  and Equation\eqref{eq:tilde_L}, it follows that: $\tilde{\cL} = \cL + \Delta_{\cL}$ where 
$$
\Delta_{\cL} = \frac{c}{m} \sum_{i=1}^m(\tilde{y}_i -\hat{y}_i) h_{\btheta}(\bm{x}_i).
$$

\noindent 
Furthermore, since each individual loss function  $\ell(\bar{f}_{\btheta}(\bm{x}_i),\tilde{y}_i)$ and $\ell(\bar{f}_{\btheta}(\bm{x}_i),\hat{y}_i)$ is convex  for all $i \in [m]$, by assumption, then $\tilde{\cL}$ and $\cL$ both are $\lambda$-strongly convex. 

Next, from the definition of $\tilde{\btheta} = \argmin_{\btheta} \tilde{\cL}$,  and  $\optimal{\btheta}\ = \argmin_{\btheta} \cL$ it follows that: 

\begin{equation}
\nabla_{\tilde{\btheta}} \tilde{\cL} =\boldsymbol{0} \ \mbox{and }  \nabla_{\optimal{\btheta}} \cL =\boldsymbol{0}. 
\label{eq:4}
\end{equation}

By Lemma \ref{app:lem:1}, it follows that: 

\begin{equation}
    \left( \nabla_{\tilde{\btheta}}\tilde{\cL} - \nabla_{\optimal{\btheta}} \tilde{\cL} \right)^T 
    \left(\tilde{\btheta} - \optimal{\btheta}\right) \geq 
    \lambda \left\| \tilde{\btheta} - \optimal{\btheta} \right\|^2. 
    \label{eq:5}
\end{equation}

Now since $\nabla_{\tilde{\btheta}}\tilde{\cL} = \bm{0}$ by Equation \eqref{eq:4}, we can rewrite Equation \ref{eq:5} as

\begin{equation}
    \left( - \nabla_{\optimal{\btheta}} \tilde{\cL} \right)^T 
    \left(\tilde{\btheta} - \optimal{\btheta}\right) \geq 
    \lambda \left\| \tilde{\btheta} - \optimal{\btheta} \right\|^2,
    \label{eq:5b}
\end{equation}
since $ \nabla_{\optimal{\btheta}} \tilde{\cL}  = \nabla_{\optimal{\btheta}} \cL + \nabla_{\optimal{\btheta}} \Delta_{\cL}  = \boldsymbol{0} + \nabla_{\optimal{\btheta}} \Delta_{\cL}  =  \nabla_{\optimal{\btheta}} \Delta_{\cL}   $. In addition, by applying the Cauchy-Schwartz inequality to the L.H.S of Equation \eqref{eq:5b} we obtain
\begin{equation}
  \left\|\nabla_{\optimal{\btheta}}\Delta_{\cL} \right\|  
  \left\|(\optimal{\btheta} - \tilde{\btheta})  \right\| \geq  
  -\left(\nabla_{\optimal{\btheta}}\Delta_{\cL}\right)^T 
  \left(\tilde{\btheta} - \optimal{\btheta}\right)  
  \geq 
  \lambda \left\| \tilde{\btheta} - \optimal{\btheta} \right\|^2,
  \label{eq:6}
\end{equation}
and thus,
\begin{equation}
\left\|\nabla_{\optimal{\btheta}}\Delta_{\cL} \right\|  
\geq \lambda 
\left\| \tilde{\btheta} - \optimal{\btheta} \right\|.
\label{eq:7}
\end{equation}

By definition of $\nabla_{\optimal{\btheta}}\Delta_{\cL}$ we can rewrite the above inequality as follows:
\begin{align}
\label{eq:7b}
   \left\|  \nabla_{\optimal{\btheta}}\Delta_{\cL} \right\| 
   &= 
   \left\| \frac{c}{m} \sum_{i=1}^m (\tilde{y}_i -\hat{y}_i) 
    \nabla_{\optimal{\btheta}} h_{\optimal{\btheta}}(\bm{x}_i) \right\| 
    \geq \lambda 
    \left\| \tilde{\btheta} -\optimal{\btheta} \right\|^2.
\end{align}

Next, let $\rho_i = \hat{y}_i - \tilde{y}_i$, applying this substitution to the above and by triangle inequality it follows that
\begin{align}
   \frac{|c|}{m} \sum^m_{i=1}| \rho_i| \|g_i\|   &\geq \frac{|c|}{m} 
   \sum^m_{i=1}| \rho_i| 
   \left\| \nabla_{\tilde{\btheta}} h_{\tilde{\btheta}}(\bm{x}_i) \right\|\\
   &\geq 
   \left\|\frac{c}{m}\sum^m_{i=1}\rho_i \nabla_{\tilde{\btheta}} h_{\tilde{\btheta}}(\bm{x}_i) \right\|
   \geq \lambda 
   \left\| \tilde{\btheta} - \optimal{\btheta} \right\|,
\end{align}
where the first inequality is due to definition of $g_{\bm{x}_i} = \max_{\btheta}\| \nabla_{\btheta} h_{\btheta}(\bm{x}_i) \|$ and the second inequality is due to the general triangle inequality .
Since $| \rho_i |$ is a Bernoulli random variable, in which $|\rho_i| = 1 $ w.p.~$p^{\leftrightarrow}_{\bm{x}_i}$  and $|\rho_i| = 0$ w.p.~$1-p^{\leftrightarrow}_{\bm{x}_i}$. Therefore $\mathbb{E}[ |\rho_i|]=p^{\leftrightarrow}_{\bm{x}_i}$. Thus, it follows that:

\begin{align}
   \mathbb{E}
   \left[ \frac{|c|}{m} \sum^m_{i=1}| \rho_i| \|g_{\bm{x}_i}\| \right]
    = 
    \frac{|c|}{m} \sum^m_{i=1} p^{\leftrightarrow}_{\bm{x}_i} 
    \|g_{\bm{x}_i}\|  
    \geq 
    \lambda 
    \mathbb{E} 
    \left[ \| \tilde{\btheta} - \optimal{\btheta} \| \right],
\end{align}
which concludes the proof.
\end{proof}

\begin{theorem}
\label{app:thm:2}
For a sample $\bm{x} \!\in\! \bar{D}$ assume that the teacher models 
outputs $f^i(\bm{x})$ are in agreement for all $i \in [k]$. 
Then, the flipping probability $\fpx$ is given by:
\begin{equation}
     \fpx = 1 - \Phi\left(\frac{k}{\sqrt{2} \sigma}\right),
\end{equation}
where $\Phi(\cdot)$ is the CDF of the standard normal distribution, 
and $\sigma$ is the standard deviation in the Gaussian mechanism.
\end{theorem}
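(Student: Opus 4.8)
The plan is to reduce the flipping event to a single Gaussian tail probability. First I would invoke the agreement assumption: since all $k$ teachers output the same label, I can relabel the classes so that this common label is class $1$. Consequently the clean voting output is $\textsl{v}(\bm{T}(\bm{x})) = 1$, and the vote histogram is $\#_1(\bm{T}(\bm{x})) = k$ with $\#_j(\bm{T}(\bm{x})) = 0$ for every competing class $j$. In the binary setting there is exactly one competitor, which is the regime in which the stated closed form is exact.

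Next I would write out the perturbed counts according to Equation~\eqref{eq:noisy_max}: the winning class receives $k + Z_1$ and the competitor receives $Z_2$, where $Z_1, Z_2 \sim \mathcal{N}(0,\sigma^2)$ are independent. By definition of the $\argmax$, the private vote disagrees with the clean vote precisely when the competitor's noisy count exceeds that of class $1$, i.e.\ when $Z_2 > k + Z_1$. Hence $\fp{\bm{x}} = \Pr[Z_2 - Z_1 > k]$.

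The key probabilistic step is to identify the law of $W := Z_2 - Z_1$. Since $Z_1$ and $Z_2$ are independent zero-mean Gaussians, $W$ is Gaussian with mean $0$ and variance $\sigma^2 + \sigma^2 = 2\sigma^2$. Standardizing then gives
\[
\fp{\bm{x}} = \Pr\!\left[\frac{W}{\sqrt{2}\,\sigma} > \frac{k}{\sqrt{2}\,\sigma}\right] = 1 - \Phi\!\left(\frac{k}{\sqrt{2}\,\sigma}\right),
\]
as claimed.

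I do not expect a serious obstacle; the argument is elementary once the reduction is in place. The one point requiring care is that the stated formula holds exactly only when there is a single competing class: for $C > 2$ the no-flip event is $\{k + Z_1 > Z_j \text{ for all } j \neq 1\}$, whose probability is $\mathbb{E}_{Z_1}\!\left[\Phi((k+Z_1)/\sigma)^{C-1}\right]$ and collapses to $\Phi(k/(\sqrt{2}\,\sigma))$ only when $C = 2$. Thus the main conceptual step is recognizing that the agreement assumption turns the flipping question into a comparison of two independent Gaussians, after which the additivity of variances delivers the result.
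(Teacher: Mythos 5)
Your proof is correct and follows essentially the same route as the paper's: both reduce the flipping event under full agreement to the tail probability $\Pr[Z_2 - Z_1 > k]$ of a difference of two independent $\mathcal{N}(0,\sigma^2)$ variables, identify its law as $\mathcal{N}(0, 2\sigma^2)$, and standardize to get $1 - \Phi\bigl(k/(\sqrt{2}\,\sigma)\bigr)$. Your closing caveat about $C > 2$ is a worthwhile sharpening of the paper's remark that the argument ``trivially extends'' to $C$ classes: as you note, the exact closed form holds only in the binary case, while the general case yields $\mathbb{E}_{Z_1}\bigl[\Phi\bigl((k+Z_1)/\sigma\bigr)^{C-1}\bigr]$.
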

For simplicity of exposition Theorem 2 considers binary classifiers, i.e., $\mathcal{Y} = \{0,1\}$. The argument, however, can be trivially extended to generic $C$-classifiers. 

\begin{proof}
By assumption, for any given sample $\bm{x}$, all teachers agree in their predictions, so w.l.o.g., assume $k$ teachers output label $0$, while none of them outputs  label $1$.  
Next, let $\psi, \psi' \sim \mathcal{N}(0, \sigma^2)$ be two independent Gaussian random variables which are added to true voting counts, $k$ and $0$, respectively. 
The associated flipping probability is:
\begin{align}
     \fp{\bm{x}} & = \Pr\left( 
    \tilde{\textsl{v}}\left(\bm{T}(\bm{x})\right) 
    \neq \textsl{v}\left(\bm{T}(\bm{x})\right)\right)
    = 
    \Pr\left( k+ \psi \leq 0 + \psi' \right)
    = 
    \Pr\left(\psi' - \psi \geq k\right) \\  
    &= 1 - \Pr\left(\psi - \psi'\leq k\right),
\end{align}
since $\psi, \psi'$ are two independent Gaussian random variable with zero mean and  standard deviation of $\sigma$. 
Therefore, $\psi' - \psi \sim \mathcal{N}(0, 2\sigma^2)$. Thus:

$$\Pr\left(\psi - \psi' \leq k\right) 
= \Pr\left(\mathcal{N}(0, 2\sigma^2) \leq k\right) 
= \Phi\left(\frac{k}{\sqrt{2}\sigma}\right).$$

Hence, the flipping probability will be: $    \fp{\bm{x}}= 1- \Phi(\frac{k}{\sqrt{2} \sigma})$.
\end{proof}

\begin{corollary}[Theorem~\ref{thm:1}]
\label{app:cor:1}
Let $\bar{f}_{\btheta}$ be a \emph{logistic regression} classifier. Its 
expected model sensitivity is upper bounded as:
\begin{equation}
\label{app:eq:8}
 \mathbb{E}\left[ \| \optimal{\btheta} - \tilde{\btheta} \| \right] 
    \leq 
    \frac{1}{m\lambda} \left[ \sum_{\bm{x} \in \bar{D}} \fp{\bm{x}} \| \bm{x} \| \right]. 
 \end{equation}
\end{corollary}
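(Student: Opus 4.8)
The plan is to specialize Theorem~\ref{thm:1} to logistic regression by exhibiting the explicit decomposition of the logistic loss and reading off the two sample-dependent quantities, the constant $c$ and the maximal gradient norm distortion $g_{\bm{x}}$, that appear on the right-hand side of that theorem. Since Theorem~\ref{thm:1} already delivers the bound for \emph{any} convex, decomposable loss, the corollary reduces to a verification that the binary cross-entropy loss fits the template of Definition~\ref{def:1} together with the evaluation of $|c|$ and $\|g_{\bm{x}}\|$ in this concrete case.

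First I would write the logistic loss with logit $h_{\btheta}(\bm{x}) = \btheta^{T}\bm{x}$ and rearrange the cross-entropy expression into the decomposable form $\ell(\bar{f}_{\btheta}(\bm{x}),y) = z(h_{\btheta}(\bm{x})) + c\, y\, h_{\btheta}(\bm{x})$. A short algebraic manipulation shows $z(t) = \log(1 + e^{t})$ and $c = -1$, hence $|c| = 1$; this both confirms that the logistic loss satisfies Definition~\ref{def:1} and pins down the multiplicative constant in Theorem~\ref{thm:1}. Next, because $h_{\btheta}(\bm{x}) = \btheta^{T}\bm{x}$ is linear in $\btheta$, its gradient $\nabla_{\btheta} h_{\btheta}(\bm{x}) = \bm{x}$ is independent of $\btheta$, so $g_{\bm{x}} = \max_{\btheta}\|\nabla_{\btheta} h_{\btheta}(\bm{x})\| = \|\bm{x}\|$. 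Substituting $|c| = 1$ and $\|g_{\bm{x}}\| = \|\bm{x}\|$ into the bound of Theorem~\ref{thm:1} then yields Equation~\eqref{app:eq:8} directly.

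This argument carries essentially no technical obstacle beyond the bookkeeping of the decomposition; the only point requiring care is verifying that the cross-entropy loss can indeed be cast in the exact form $z(h_{\btheta}(\bm{x})) + c\, y\, h_{\btheta}(\bm{x})$, with the entire $\btheta$-dependence confined to $h_{\btheta}$ and to the $y$-linear interaction term, so that $c$ is a genuine constant rather than a function of $\btheta$ or $\bm{x}$. Once that identification is in place, the bound follows immediately from the already-proved Theorem~\ref{thm:1}.
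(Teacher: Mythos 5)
Your proof is correct and follows essentially the same route as the paper: both verify that the binary cross-entropy loss is decomposable and then apply Theorem~\ref{thm:1} with $|c|=1$ and $g_{\bm{x}}=\|\bm{x}\|$. The only difference is a sign convention --- you take $h_{\btheta}(\bm{x})=\btheta^{T}\bm{x}$, $z(t)=\log(1+e^{t})$, $c=-1$, while the paper takes $h_{\btheta}(\bm{x})=-\btheta^{T}\bm{x}$, $z(h)=\log(1+e^{-h})$, $c=1$; these are equivalent and yield the identical bound.
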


\begin{proof}
The loss function $\ell(\bar{f}_{\btheta}(\bm{x}), y)$ of a logistic regression classifier with binary cross entropy loss can be rewritten as follows:
\begin{align}
    \ell\left(\bar{f}_{\btheta}\left(\bm{x}\right), y\right) & = -y \log\left(\frac{1}{1+\exp\left(-\btheta^T \bm{x}\right)}\right) 
    - \left(1-y\right)   \log\left(\frac{\exp\left(-\btheta^T x\right)}{1+\exp\left(-\btheta^T \bm{x}\right)}\right)\\
    & = y\log\left(\exp\left(-\btheta^T \bm{x}\right)\right) - \log\left(\frac{\exp\left(-\btheta^T x\right)}{1+\exp\left(-\btheta^T \bm{x}\right)}\right)\\
    & = y\left(-\btheta^T \bm{x}\right)   - \log\left(\frac{\exp\left(-\btheta^T \bm{x}\right)}{1+\exp\left(-\btheta^T \bm{x}\right)}\right).
\end{align}
Hence, $\ell(\cdot)$ is decomposable by Definition \ref{def:1} with $h_{\btheta}(\bm{x})=-\btheta^T x$, $c=1$ and $z(h) =-\log(\frac{\exp(h)}{1+\exp(h)}) $. 

Applying Theorem \ref{app:thm:1} with $g_{\bm{x}} = \max_{\btheta} \|\nabla_{\btheta} h_{\btheta}(\bm{x}) \| = \max_{\btheta} \| \nabla_{\btheta}  -\btheta^T \bm{x} \| = \| \bm{x}\|$, and $c=1$, gives the intended result. 

\end{proof}


\begin{corollary}[Theorem~\ref{thm:1}]
\label{app:cor:2}
Given the same settings and assumption of Theorem \ref{thm:1}, it follows:
\begin{equation}
\label{app:eq:9}
    \mathbb{E}\left[ \| \optimal{\btheta} - \tilde{\btheta} \|^2 \right] 
    \leq \frac{|c|^2}{m \lambda^2} \left[ \sum_{\bm{x} \in \bar{D}} p^{\leftrightarrow 2}_{\bm{x}} \| g_{\bm{x}}\|^2 \right].
\end{equation}
\end{corollary}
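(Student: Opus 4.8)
The plan is to avoid re-deriving the strong-convexity machinery and instead reuse the deterministic inequality already produced inside the proof of Theorem~\ref{app:thm:1}. That proof shows, before any expectation is taken, that
\begin{equation*}
\lambda \left\| \tilde{\btheta} - \optimal{\btheta} \right\| \leq \frac{|c|}{m} \sum_{i=1}^m |\rho_i|\, \|g_{\bm{x}_i}\|,
\end{equation*}
where $\rho_i = \hat{y}_i - \tilde{y}_i$ and each $|\rho_i|$ is the Bernoulli ``flip'' indicator with $\EE[|\rho_i|] = \fp{\bm{x}_i}$. First I would square both sides and divide through by $\lambda^2$, which already bounds $\|\optimal{\btheta} - \tilde{\btheta}\|^2$ by $\frac{|c|^2}{m^2\lambda^2}\big(\sum_{i=1}^m |\rho_i|\,\|g_{\bm{x}_i}\|\big)^2$.

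The second step is to tame the square of the sum so that the prefactor collapses to the $1/m$ appearing in Equation~\eqref{app:eq:9}. I would apply Cauchy--Schwarz in the form $\big(\sum_{i=1}^m a_i\big)^2 \leq m \sum_{i=1}^m a_i^2$ with $a_i = |\rho_i|\,\|g_{\bm{x}_i}\|$. The factor $m$ this introduces cancels one of the two powers of $m$, leaving $\frac{|c|^2}{m\lambda^2}\sum_{i=1}^m |\rho_i|^2 \|g_{\bm{x}_i}\|^2$, at which point taking the expectation over the Gaussian voting noise and pushing it through the (finite, now linear) sum reduces everything to evaluating the per-sample second moment $\EE[|\rho_i|^2]$.

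The step I expect to require the most care is exactly this second-moment evaluation and its reconciliation with the squared flipping probability $p^{\leftrightarrow 2}_{\bm{x}_i}$ on the right-hand side. Since each indicator is $\{0,1\}$-valued, one must account correctly for the Bernoulli moment, and if one expands the square directly rather than bounding it by Cauchy--Schwarz, the off-diagonal terms $\EE[|\rho_i||\rho_j|]$ must be handled using independence of the per-sample noise (so that they factor as $\fp{\bm{x}_i}\fp{\bm{x}_j}$). Once the right-hand side is in place, specializing to logistic regression via $\|g_{\bm{x}}\| = \|\bm{x}\|$ --- exactly as in Corollary~\ref{app:cor:1} --- yields the input-norm form mentioned just below the statement.
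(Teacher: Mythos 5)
Your first two steps are sound and are in fact more careful than what the paper does, but the step you yourself flagged as delicate is exactly where the argument breaks, and it cannot be repaired within your scheme. After squaring the deterministic inequality and applying Cauchy--Schwarz, you need $\EE\left[|\rho_i|^2\right]$. Since $|\rho_i|$ is $\{0,1\}$-valued, $|\rho_i|^2 = |\rho_i|$, hence $\EE\left[|\rho_i|^2\right] = \fp{\bm{x}_i}$ --- the \emph{first} power, not $\fp{\bm{x}_i}^2$. Your route therefore terminates at
\begin{equation*}
    \mathbb{E}\left[ \| \optimal{\btheta} - \tilde{\btheta} \|^2 \right]
    \leq \frac{|c|^2}{m \lambda^2} \sum_{\bm{x} \in \bar{D}} \fp{\bm{x}} \| g_{\bm{x}}\|^2,
\end{equation*}
which is weaker than Equation \eqref{app:eq:9}, since $\fp{\bm{x}} \geq p^{\leftrightarrow 2}_{\bm{x}}$. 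The independence remark does not rescue this: once Cauchy--Schwarz has been applied, the cross terms are gone and only diagonal second moments remain; if you instead expand the square directly, the off-diagonal terms do factor as $\fp{\bm{x}_i}\fp{\bm{x}_j}$, but the diagonal terms still contribute $\fp{\bm{x}_i}\|g_{\bm{x}_i}\|^2$ with a first power, so the squared-probability form never emerges.

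For comparison, the paper reaches Equation \eqref{app:eq:9} by a different and shorter route: it takes the already-proved conclusion of Theorem \ref{app:thm:1}, squares both sides to assert $\mathbb{E}\left[\|\optimal{\btheta} - \tilde{\btheta}\|^2\right] \leq \frac{c^2}{\lambda^2}\left[\frac{1}{m}\sum_{\bm{x} \in \bar{D}} \fp{\bm{x}}\|g_{\bm{x}}\|\right]^2$, and then applies the sum-of-squares inequality $\left(\frac{1}{m}\sum_i a_i\right)^2 \leq \frac{1}{m}\sum_i a_i^2$; the squared probabilities survive there precisely because the expectation is taken \emph{before} squaring. Note, however, that this first step of the paper is itself not justified: Theorem \ref{app:thm:1} bounds the first moment, and Jensen's inequality gives $(\mathbb{E}[X])^2 \leq \mathbb{E}[X^2]$, which points in the wrong direction for converting a first-moment bound into a second-moment bound. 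So your more rigorous treatment exposes a genuine weakness in the statement itself: what the strong-convexity argument actually supports is the bound with $\fp{\bm{x}}$ in place of $p^{\leftrightarrow 2}_{\bm{x}}$, and you should either prove that weaker form or flag the discrepancy rather than force the stated right-hand side.
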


\begin{proof}
First, by Theorem \ref{app:thm:1} we obtain an upper bound for $ \mathbb{E}\left[ \| \optimal{\btheta} - \tilde{\btheta} \|^2 \right] $ as follows:

\begin{align}
     \mathbb{E}\left[ \| \optimal{\btheta} - \tilde{\btheta} \|^2 \right]  \leq \frac{c^2}{\lambda^2} \left[ \frac{1}{m}\sum_{\bm{x} \in \bar{D}} p^{\leftrightarrow}_{\bm{x}} \| g_{\bm{x}}\| \right]^2.
     \label{eq:second}
\end{align}

Applying the sum of squares inequality on the R.H.S.~of Equation \eqref{eq:second} we obtain:
\begin{align}
     \frac{c^2}{\lambda^2} \left[ \frac{1}{m}\sum_{\bm{x} \in \bar{D}} p^{\leftrightarrow}_{\bm{x}} \| g_{\bm{x}}\| \right]^2 \leq  \frac{c^2}{\lambda^2} \left[ \frac{1}{m}  p^{\leftrightarrow 2}_{\bm{x}} \| g_{\bm{x}}\|^2 \right],
\end{align}
which concludes the proof.

\end{proof}

\begin{theorem}
\label{app:thm:3}
Let $\ell(\cdot)$ be a $\beta_{\bm{x}}$-smooth loss function. 
The excessive risk $R(\bm{x})$ of a sample $\bm{x}$ is upper 
bounded as:
\begin{equation}
\label{app:eq:ER_ub}
R(\bm{x}) \leq \left\| \nabla_{\optimal{\btheta}}  
        \ell\left(\bar{f}_{\optimal{\btheta}}(\bm{x}),y\right) 
        \right\| U_1 + \frac{1}{2} \beta_{\bm{x}} U_2,
\end{equation} 
where, $U_1 = \mathbb{E}\left[ \| \optimal{\btheta} - \tilde{\btheta} \| \right]$ 
and  $U_2 = \mathbb{E}\left[ \| \optimal{\btheta} - \tilde{\btheta} \|^2 \right]$
capture the first and second order statistics of the model sensitivity. 
\end{theorem}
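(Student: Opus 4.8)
The plan is to start from the per-sample specialization of the excessive-risk definition in Equation \eqref{def:excessiver_risk}, namely
\begin{equation}
R(\bm{x}) = \mathbb{E}_{\tilde{\btheta}}\left[ \ell(\bar{f}_{\tilde{\btheta}}(\bm{x}),y) \right] - \ell(\bar{f}_{\optimal{\btheta}}(\bm{x}),y),
\end{equation}
and to control the inner difference $\ell(\bar{f}_{\tilde{\btheta}}(\bm{x}),y) - \ell(\bar{f}_{\optimal{\btheta}}(\bm{x}),y)$ pointwise in $\tilde{\btheta}$ before taking the expectation. The only analytic tool I would invoke is the $\beta_{\bm{x}}$-smoothness of $\ell$ viewed as a function of the parameters $\btheta$: smoothness means the parameter-gradient is $\beta_{\bm{x}}$-Lipschitz, which is equivalent to the standard quadratic upper bound $\ell(\bar{f}_{\btheta'}(\bm{x}),y) \leq \ell(\bar{f}_{\btheta}(\bm{x}),y) + \nabla_{\btheta}\ell(\bar{f}_{\btheta}(\bm{x}),y)^{T}(\btheta' - \btheta) + \tfrac{\beta_{\bm{x}}}{2}\|\btheta' - \btheta\|^2$.

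Next I would instantiate this inequality at the non-private optimum $\btheta = \optimal{\btheta}$ and the private parameters $\btheta' = \tilde{\btheta}$, and rearrange to isolate the difference of interest:
\begin{equation}
\ell(\bar{f}_{\tilde{\btheta}}(\bm{x}),y) - \ell(\bar{f}_{\optimal{\btheta}}(\bm{x}),y) \leq \nabla_{\optimal{\btheta}}\ell(\bar{f}_{\optimal{\btheta}}(\bm{x}),y)^{T}(\tilde{\btheta} - \optimal{\btheta}) + \frac{\beta_{\bm{x}}}{2}\left\| \tilde{\btheta} - \optimal{\btheta} \right\|^2.
\end{equation}
This cleanly splits the bound into a \emph{linear} term and a \emph{quadratic} term in the model-sensitivity displacement $\tilde{\btheta} - \optimal{\btheta}$, which is exactly the decomposition that the statement's $U_1$ and $U_2$ anticipate.

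The final step is to take the expectation over the randomness of $\tilde{\btheta}$. The quadratic term immediately yields $\tfrac{1}{2}\beta_{\bm{x}}\, \mathbb{E}[\|\tilde{\btheta} - \optimal{\btheta}\|^2] = \tfrac{1}{2}\beta_{\bm{x}} U_2$. For the linear term I would apply Cauchy--Schwarz, noting that $\nabla_{\optimal{\btheta}}\ell(\bar{f}_{\optimal{\btheta}}(\bm{x}),y)$ is a deterministic quantity (it is evaluated at the fixed non-private optimum and carries no dependence on the private mechanism), so it can be pulled outside the expectation to give $\|\nabla_{\optimal{\btheta}}\ell(\bar{f}_{\optimal{\btheta}}(\bm{x}),y)\|\, \mathbb{E}[\|\tilde{\btheta} - \optimal{\btheta}\|] = \|\nabla_{\optimal{\btheta}}\ell(\bar{f}_{\optimal{\btheta}}(\bm{x}),y)\| U_1$. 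Summing the two bounds produces the claimed inequality.

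This argument is essentially routine, so the ``main obstacle'' is less a conceptual hurdle than a matter of keeping the expectation bookkeeping honest: the linear term may have either sign, but since we only seek an upper bound the Cauchy--Schwarz step is legitimate, and the determinism of the gradient at $\optimal{\btheta}$ is what lets $U_1$ appear as the first-order statistic rather than some correlated quantity. The remaining care is to treat $\beta_{\bm{x}}$-smoothness as holding along the segment joining $\optimal{\btheta}$ and $\tilde{\btheta}$ so the quadratic bound is valid. Once established, Theorem \ref{thm:1} and Corollary \ref{cor:2} can be substituted for $U_1$ and $U_2$ to convert this into the fully data-dependent bound driven by flipping probabilities and gradient norms.
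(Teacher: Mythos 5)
Your proposal is correct and follows essentially the same route as the paper's proof: apply the quadratic upper bound from $\beta_{\bm{x}}$-smoothness at $\optimal{\btheta}$ and $\tilde{\btheta}$, take the expectation over the privacy noise, and use Cauchy--Schwarz (with the determinism of $\nabla_{\optimal{\btheta}}\ell(\bar{f}_{\optimal{\btheta}}(\bm{x}),y)$) to obtain the $U_1$ and $U_2$ terms. The only cosmetic difference is that you apply Cauchy--Schwarz pointwise before taking expectations, whereas the paper takes expectations first and then bounds the linear term, which yields the identical inequality.
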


\begin{proof}
By $\beta_{\bm{x}}$ smoothness assumption on the loss function  at a sample $\bm{x}$, it follows that:
\begin{align}
    \ell\left( \bar{f}_{\tilde{\btheta}}(\bm{x}),y\right) 
    & 
    \leq 
    \ell\left(\bar{f}_{\optimal{\btheta}}(\bm{x}),y\right) 
    + 
    \nabla_{\optimal{\btheta}}  
    \ell\left(\bar{f}_{\optimal{\btheta}}(\bm{x}),y\right)^T 
    \left(\tilde{\btheta} - \optimal{\btheta}\right) + 
    \frac{\beta_x}{2}\left\| 
    \tilde{\btheta} - \optimal{\btheta}\right\|^2.
\end{align}
By taking the expectation on both sides of the above equation w.r.t.~the randomness of the noise, we obtain:
\begin{align}
    \mathbb{E}\left[
        \ell\left( \bar{f}_{\tilde{\btheta}}(\bm{x}),y\right)\right] 
        & \leq 
        \ell\left(\bar{f}_{\optimal{\btheta}}(\bm{x}),y\right) + 
        \nabla_{\optimal{\btheta}}  
        \ell\left(\bar{f}_{\optimal{\btheta}}(\bm{x}),y\right)^T 
        \mathbb{E}\left[ (\tilde{\btheta} - \optimal{\btheta})\right]
     + 
     \frac{\beta_x}{2} 
     \mathbb{E}\left[ \|\tilde{\btheta} - \optimal{\btheta}\|^2 \right] \\
    & \leq 
    \ell\left(\bar{f}_{\optimal{\btheta}}(\bm{x}),y\right) 
    + 
    \left\| \nabla_{\optimal{\btheta}}  
    \ell\left(\bar{f}_{\optimal{\btheta}}(\bm{x}),y\right) 
    \right\| 
    \mathbb{E}\left[ \| \optimal{\btheta} - \tilde{\btheta} \| \right]  + \frac{1}{2} \beta_x 
    \mathbb{E}\left[\|\tilde{\btheta} - \optimal{\btheta}\|^2\right],
    \label{eq:final_ineq}
\end{align}
where the last inequality is by Cauchy-Schwarz inequality on vectors. Next, by substituting $R(\bm{x}) =\mathbb{E}[\ell(\bar{f}_{\tilde{\btheta}}(\bm{x}),y)] -\ell(\bar{f}_{\optimal{\btheta}}(\bm{x}),y) $,  $U_1 = \mathbb{E}\left[ \| \optimal{\btheta} - \tilde{\btheta} \| \right]$ 
and  $U_2 = \mathbb{E}\left[ \| \optimal{\btheta} - \tilde{\btheta} \|^2 \right]$, with their definitions into Equation \eqref{eq:final_ineq} we obtain the statement in Theorem \ref{app:thm:3}.

\end{proof}

\section{Extended Experimental Analysis}
\label{app:exp_ext}
This section reports detailed information about the experimental setting
as well as additional results conducted on the Income, Bank, Parkinsons and Credit Card datasets. 

\subsection{Setting and Datasets}

\smallskip\noindent\textbf{Computing Infrastructure} 
All of our experiments are performed on a distributed cluster equipped with Intel(R) Xeon(R) Platinum 8260 CPU @ 2.40GHz and 8GB of RAM.

\smallskip\noindent\textbf{Software and Libraries}  
All models and experiments were written in Python 3.7. All neural network classifier models in our paper were implemented in Pytorch 1.5.0. 

The Tensorflow Privacy package was also employed for computing the privacy loss. 


\smallskip\noindent\textbf{Datasets}  
This paper evaluates the fairness analysis of PATE on the following four UCI datasets: \emph{Bank}, \emph{Income}, \emph{Parkinsons} and \emph{Credit card} dataset. 
A descriptions of each dataset is reported as follows:

\begin{enumerate}
    \item \textbf{Income} (Adult) dataset, where the task is to predict if an 
    individual has low or high income, and the group labels are defined by race: 
    \emph{White} vs \emph{Non-White} \cite{UCIdatasets}.

    \item \textbf{Bank} dataset, where the task is to predict if a user subscribes 
    a term deposit or not and the group labels are defined by age: 
    \emph{people whose age is less than vs greater than 60 years old} \cite{UCIdatasets}. 
    
    \item \textbf{Parkinsons} dataset, where the task is to predict if 
    a patient has total UPDRS score that exceeds the median value, 
    and the group labels are defined by gender: \emph{female vs male} \cite{article}.  
       
    \item \textbf{Credit Card} dataset, where the task is to predict if 
    a customer defaults a loan or not. The group labels are defined by gender:
    \emph{female vs male} \cite{creditdataset}. 
\end{enumerate}

Each dataset has been standardized to render its features having zero mean and unit standard deviation. Each dataset was partitioned into three disjoint subsets: private set, public train,  and test set, as follows. 75\% of the dataset was used as private data and the rest for public data. For the public data, 200 samples were randomly selected to train the student model and the rest of the data was used as a test set to evaluate that model.

\smallskip\noindent\textbf{Models' Setting} 

To illustrate the tightness of the upper bound provided in Corollary \ref{app:cor:1}, the paper uses a logistic regression model executed over 1000 runs to estimate the expected model sensitivity 
$\mathbb{E}\left[ \| \optimal{\btheta} - \tilde{\btheta} \| \right]$. 
For all other experiments, the paper uses a neural network with two hidden layers and nonlinear ReLU activations for both the ensemble and student models. 
All reported metrics are an average of 100 repetitions, used to compute the empirical expectations. The batch size for stochastic gradient descent was fixed to 32 and the learning rate to $\eta = 1e-4$.

\subsection{The impact of regularization parameter} 
This section provides further empirical supports regarding impact of the regularization parameter $\lambda$ to the accuracy and fairness trade-off. As shown in Theorem \ref{thm:1}, increasing $\lambda$ reduces the model sensitivity $\mathbb{E}\left[ \| \optimal{\btheta} - \tilde{\btheta} \| \right]$, 
which in turns decreases the  group excessive risk $R(\bar{D}_{\leftarrow a})$ (for any group $a \in \cA$) by Theorem \ref{thm:3}. 
On the other hand, large regularization can negatively impact the model accuracy. Figure \ref{app:fig:lambda_effect} illustrates this discussion. It shows how model sensitivity (left), excessive risk difference between two groups (middle), and utility (right) vary according to $\lambda$. 

\begin{figure}
\centering
\begin{subfigure}[b]{0.485\textwidth}
\includegraphics[width = 1.0\linewidth]{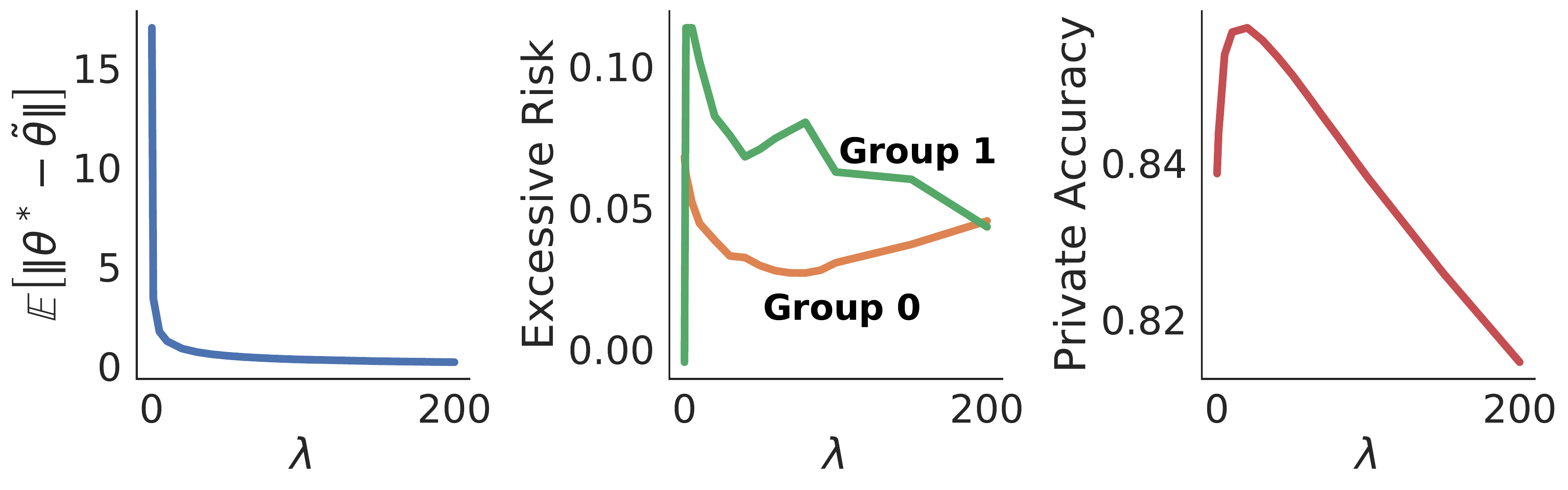}
\caption{Bank dataset}
\end{subfigure}
\begin{subfigure}[b]{0.485\textwidth}
\includegraphics[width = 1.0\linewidth]{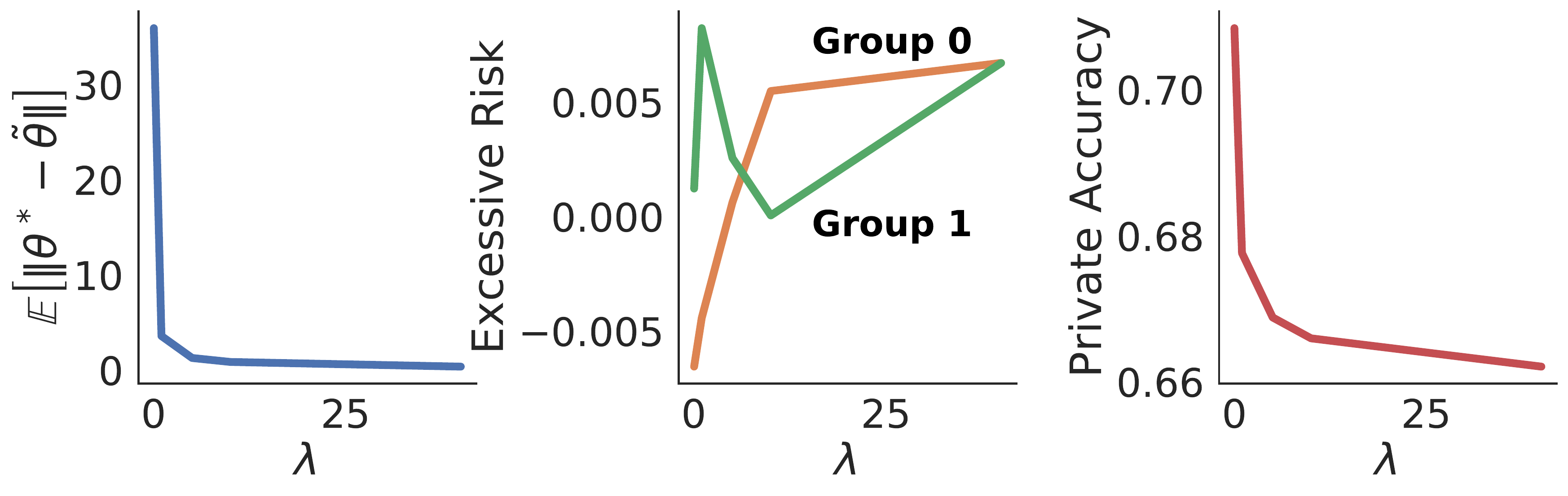}
\caption{Income dataset}
\end{subfigure}
\begin{subfigure}[b]{0.485\textwidth}
\includegraphics[width = 1.0\linewidth]{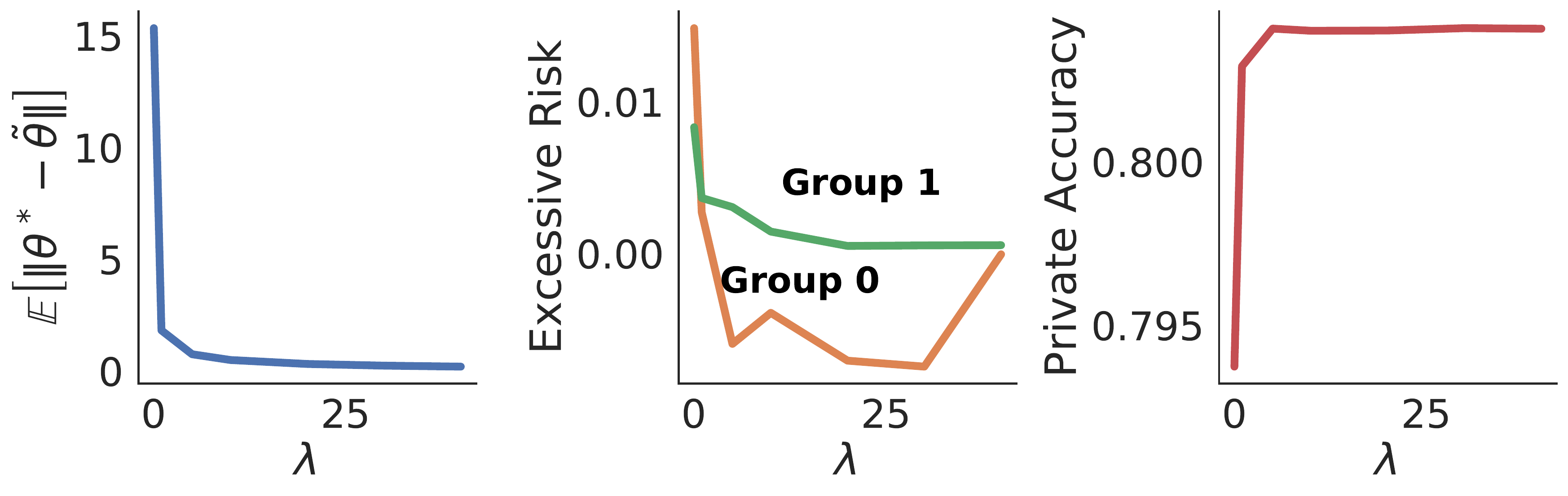}
\caption{Parkinsons dataset}
\end{subfigure}
\caption{Expected model sensitivity (left), empirical risk (middle), and model accuracy (right) as a function of the  regularization. Here for each dataset, number of teacher k =150, $\sigma = 50$.}.
\label{app:fig:lambda_effect}
\end{figure}

\subsection{The impact of teachers ensemble size k} 
This section illustrates the effects of varying the teacher ensemble sizes $k$ with respect to two factors: (1) the flipping probability $\fpx$, and (2) the trade-offs among the model sensitivity $\mathbb{E}\left[ \| \optimal{\btheta} - \tilde{\btheta} \| \right]$ and the model fairness and utilities. 

Recall that Theorem \ref{thm:2} shows that larger $k$ values correspond to smaller flipping probability $\fpx$. This dependency is reported in Figure \ref{fig:flip_prob_all}. Notice how increasing the number of teachers $k$ reduces the flipping probability $\fpx$ on all samples $\bm{x}$. 

\begin{figure}
\centering
\begin{subfigure}[b]{0.3\textwidth}
\includegraphics[width = 1.0\linewidth]{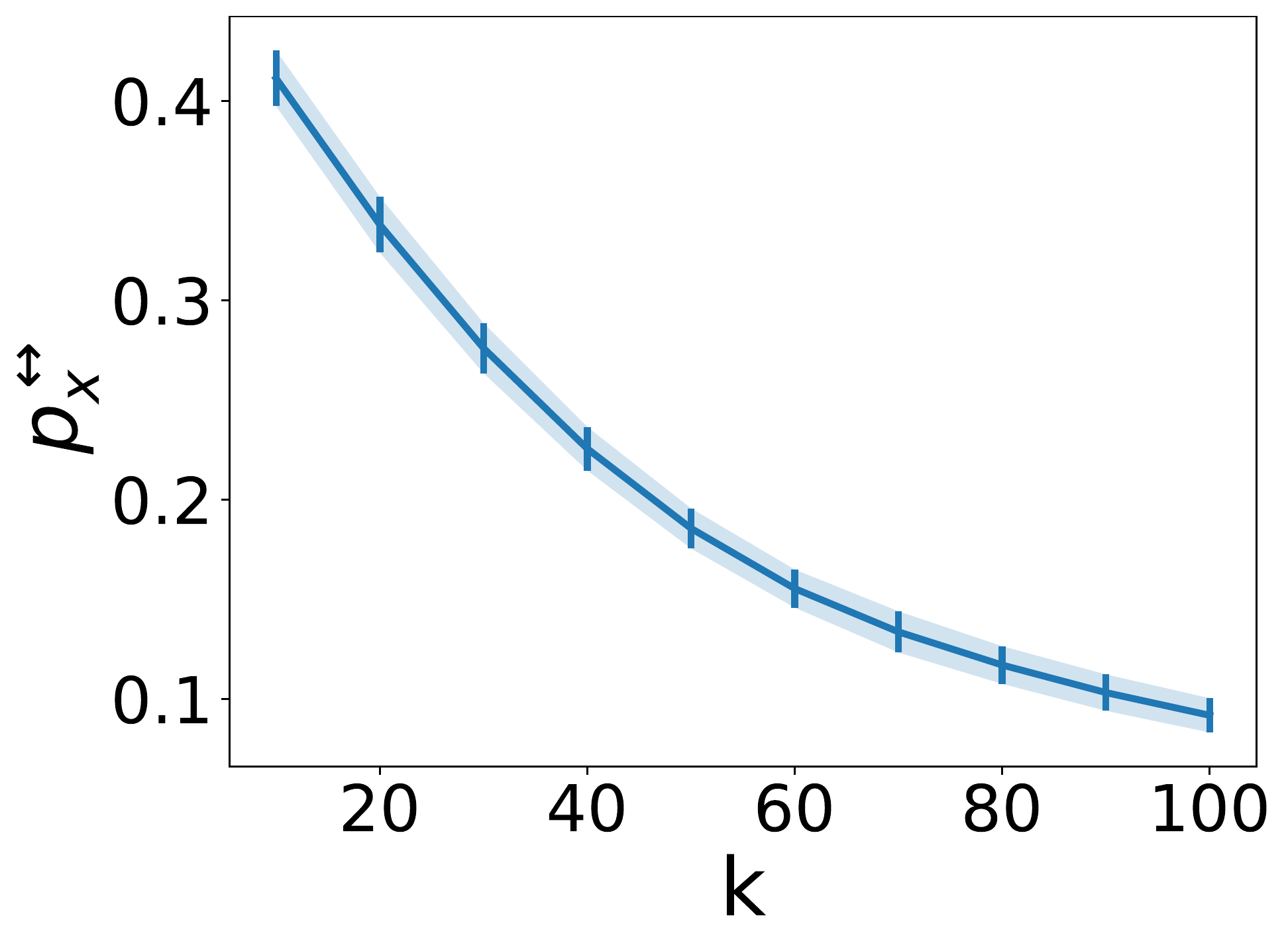}
\caption{Bank dataset}
\end{subfigure}
\begin{subfigure}[b]{0.3\textwidth}
\includegraphics[width = 1.0\linewidth]{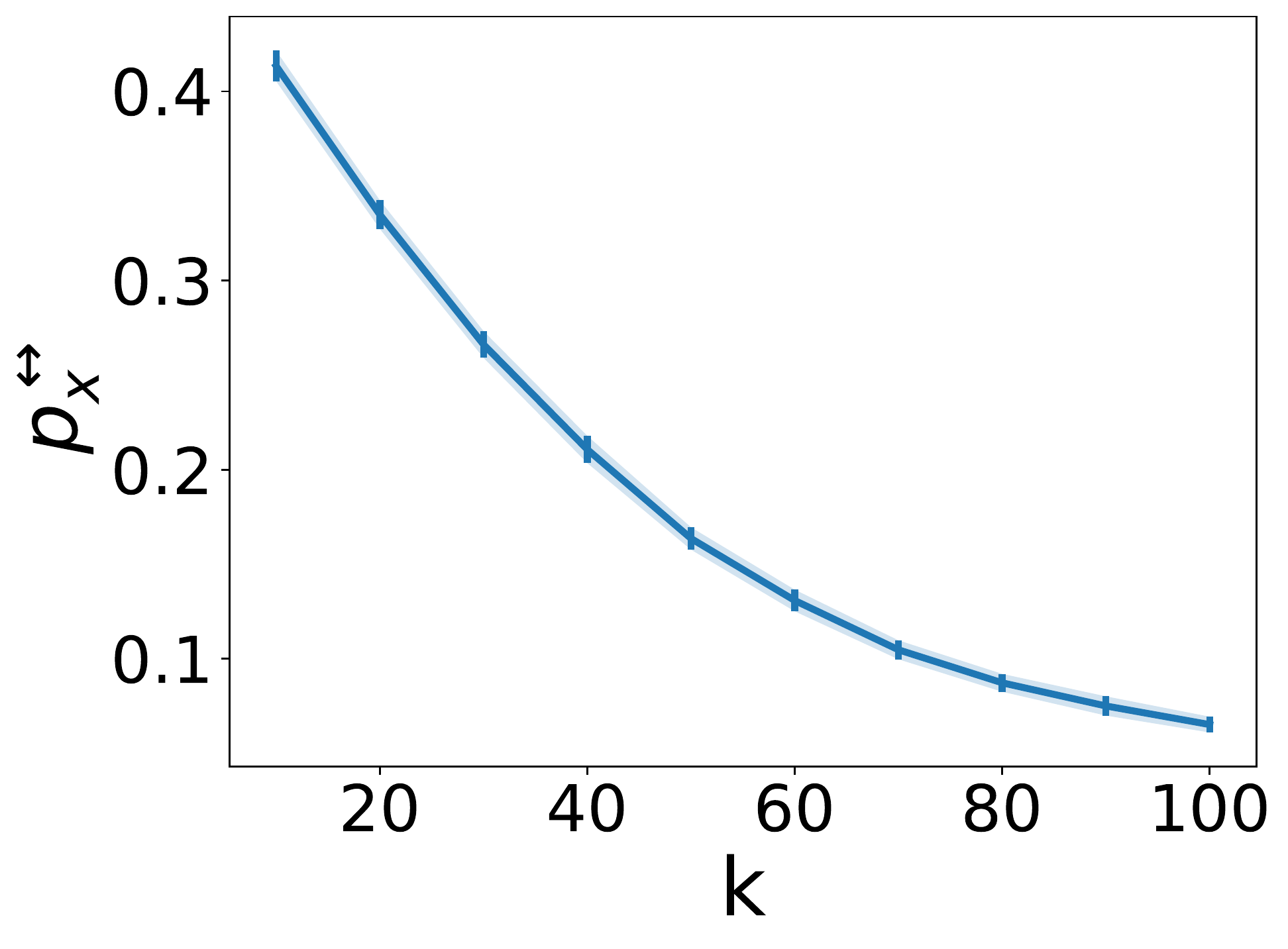}
\caption{Income dataset}
\end{subfigure}
\begin{subfigure}[b]{0.3\textwidth}
\includegraphics[width = 1.0\linewidth]{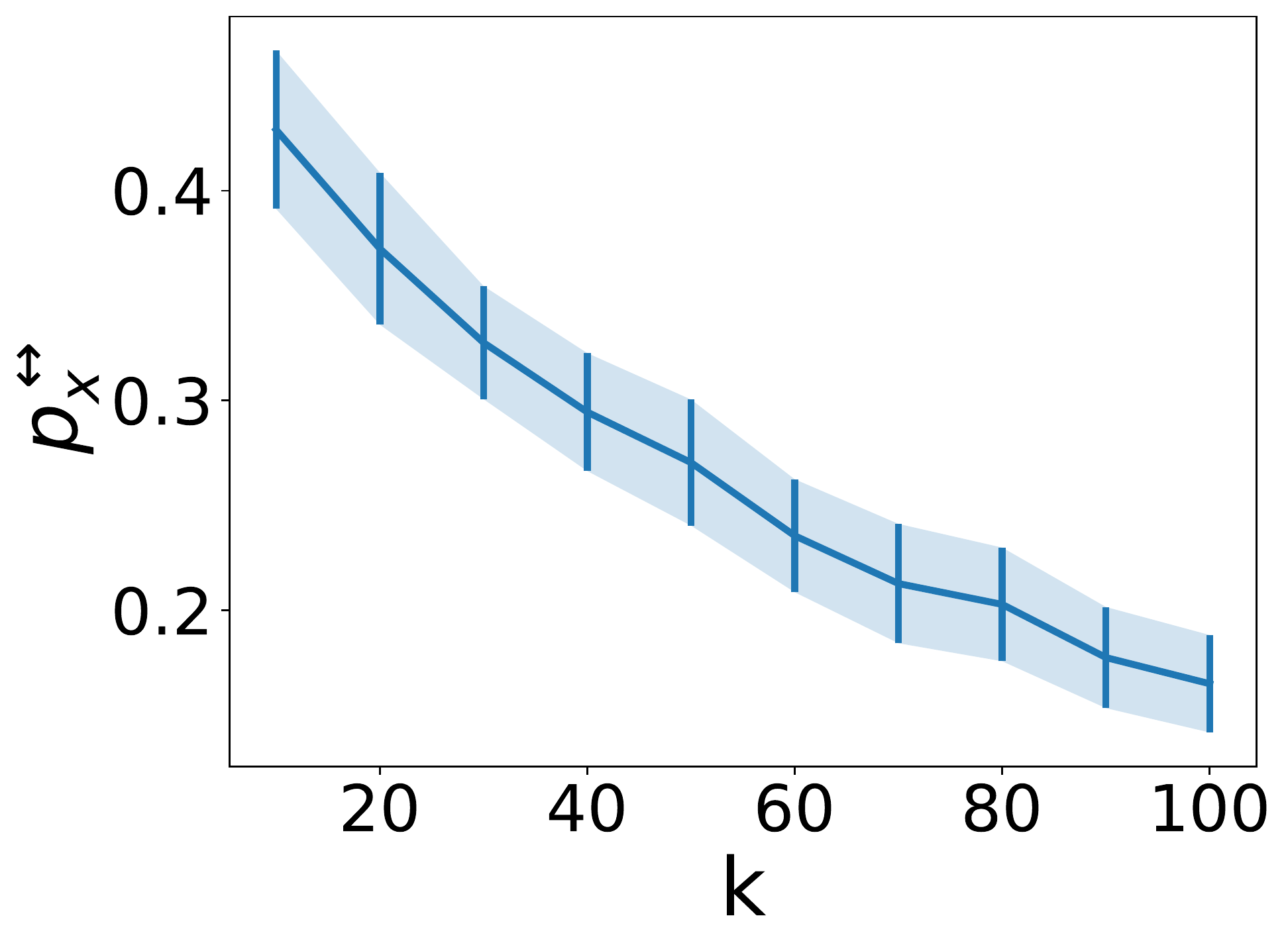}
\caption{Parkinsons dataset}
\end{subfigure}
\caption{ Average flipping probability $\fpx$  for samples $\bm{x} \in \bar{D}$ as a function of the ensemble size $k$. }
\label{fig:flip_prob_all}
\end{figure}

Next, concerning the fairness analysis, we provide additional empirical support on the effects of $k$ to the model sensitivity, the difference between the group excessive risk, and the utility of the PATE models. 
These metrics are summarized in Figure \ref{fig:k_effect}. A similar trend with what observed for the regularization parameter $\lambda$  can be observed here, when varying the ensemble size $k$. 
Additionally, when the values $k$ grow large they produce models with small model sensitivity as well as low accuracy, but the unfairness, measured by the excessive risk difference between two groups, reduces. 
This observation can be explained by looking at Figure \ref{fig:flip_prob_all} and by Theorem \ref{thm:1}: Large $k$ values imply smaller flipping probability, which, in turn, reduce the model sensitivity. 
Notice also that Theorem \ref{thm:3} shows that small model sensitivities can reduce the level of unfairness.

\begin{figure}
\centering
\begin{subfigure}[b]{0.485\textwidth}
\includegraphics[width = 1.0\linewidth]{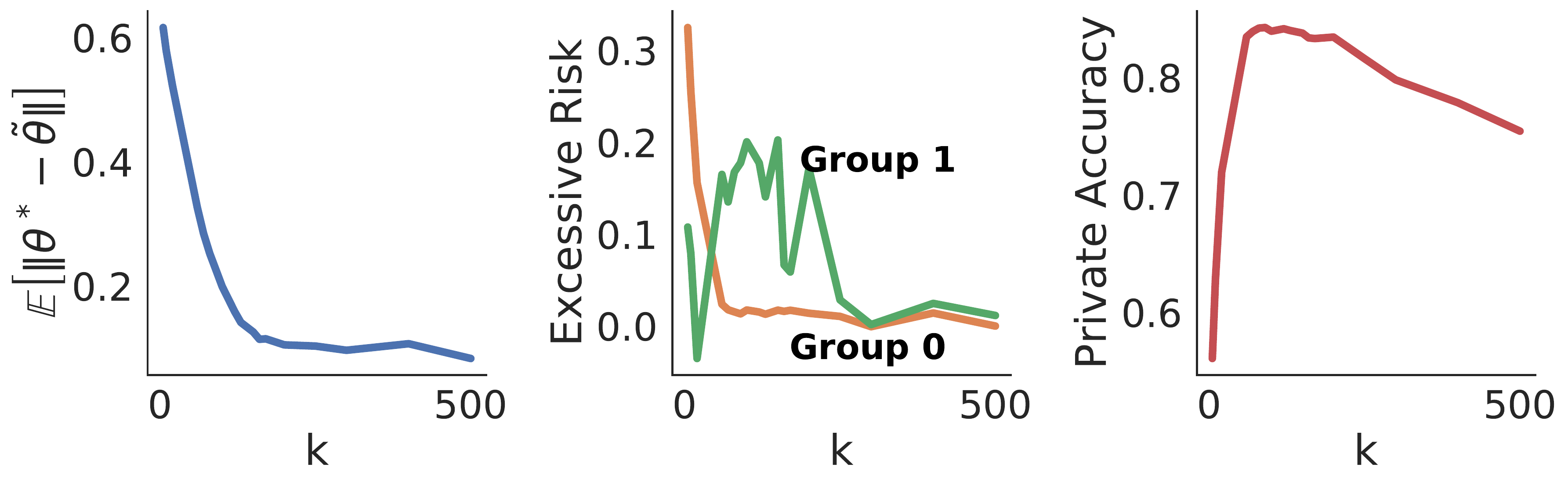}
\caption{Bank dataset}
\end{subfigure}
\begin{subfigure}[b]{0.485\textwidth}
\includegraphics[width = 1.0\linewidth]{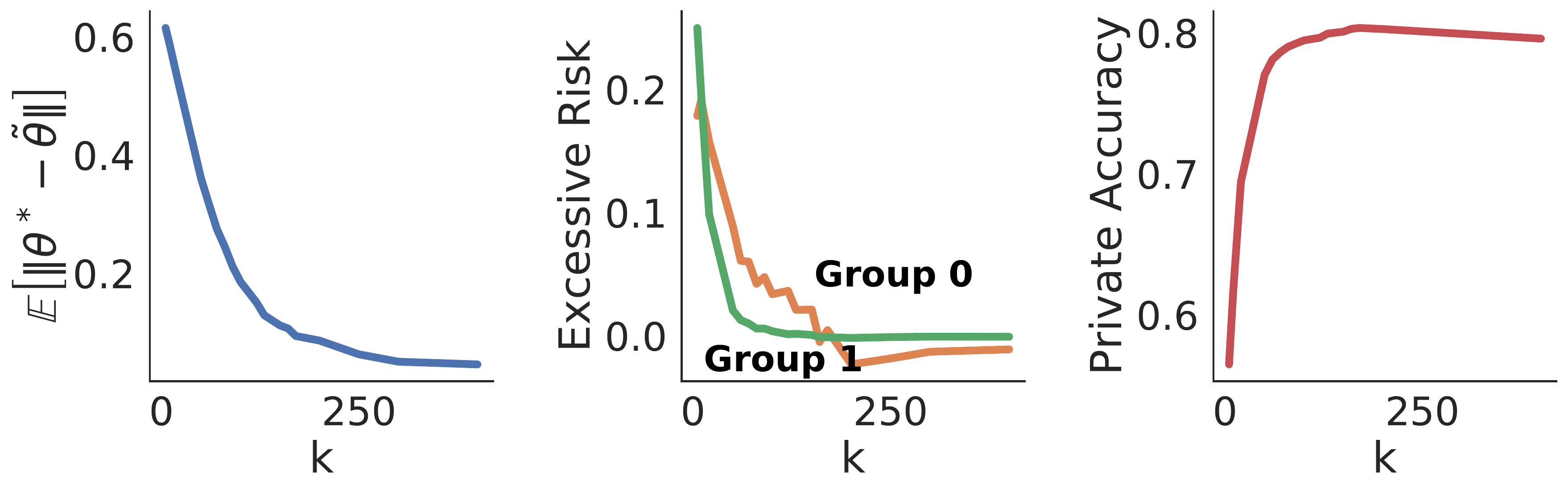}
\caption{Income dataset}
\end{subfigure}
\begin{subfigure}[b]{0.485\textwidth}
\includegraphics[width = 1.0\linewidth]{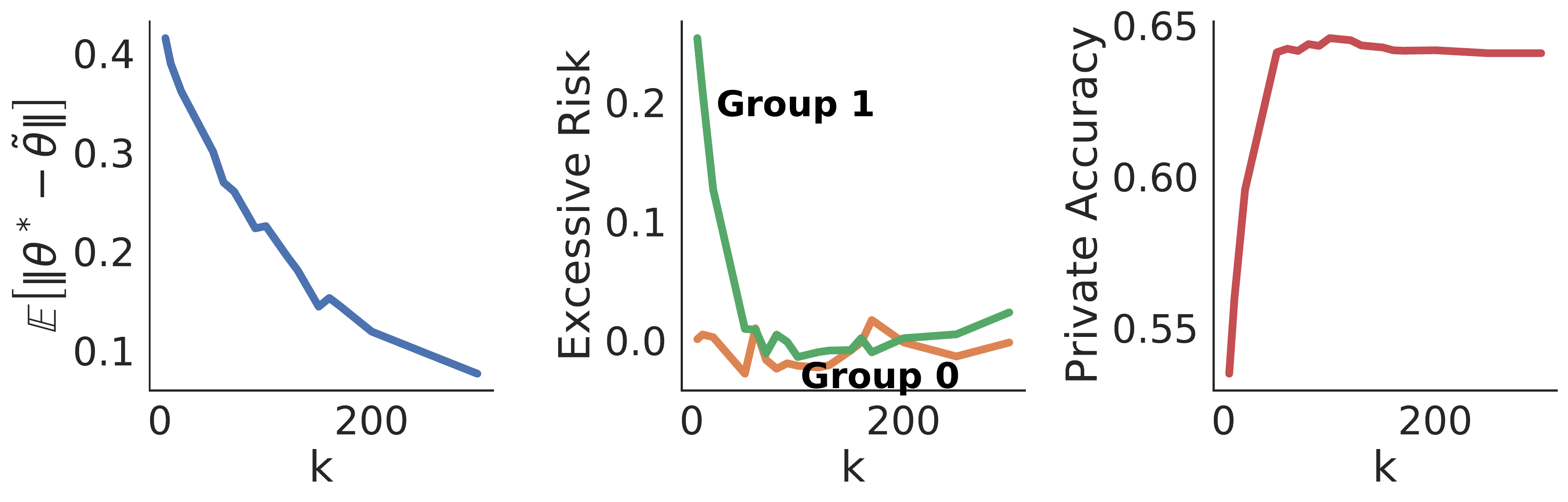}
\caption{Parkinsons dataset}
\end{subfigure}
\caption{Expected model sensitivity (left), empirical risk (middle), 
and model accuracy (right) as a function of the ensemble size. Here  $\lambda = 100$, $\sigma = 50$.}.
\label{fig:k_effect}
\end{figure}

\subsection{The impact of the data input norm}
This sections provides further experimental results regarding the relation between the input norms with (1) the private model sensitivity and (2) the model excessive risk.

Regarding the first relation, Corollary  \ref{cor:1} shows that the larger the input norm $\|\bm{x}\|$ the larger the model sensitivity. To illustrate this claim, for each dataset, the experiments vary the range of the input norm $\| \bm{x} \| $ and report the associated values of the expected model sensitivity. \ref{fig:inputnorm_vs_expected_diff} clearly illustrates a strong, non-decreasing, relation between input norms and the model sensitivity.

On the other hand, large input norms can affect the excessive risk because they directly control the gradient norms, by the analysis performed in Subsection \ref{app:sec:input_norms}. 
By Theorem \ref{thm:3}, the individuals generating large gradient norms can suffer from large excessive risk. Similarly, the individuals associated with large input data norms---which are often observed at the tail of data distribution---are more impacted in terms of accuracy drop, when compared to individuals with smaller input norms. 
These claims are illustrated in Figure \ref{fig:corr_norm_all}, which shows the Spearman correlation between input norms and the associated individual excessive risk of the model. On all datasets, observe the positive relation between the data input norm and the excessive risk.

\begin{figure}
\centering
\begin{subfigure}[b]{0.3\textwidth}
\includegraphics[width = 1.0\linewidth]{impact_input_norm_income.pdf}
\caption{Income dataset}
\end{subfigure} 
\begin{subfigure}[b]{0.3\textwidth}
\includegraphics[width = 1.0\linewidth]{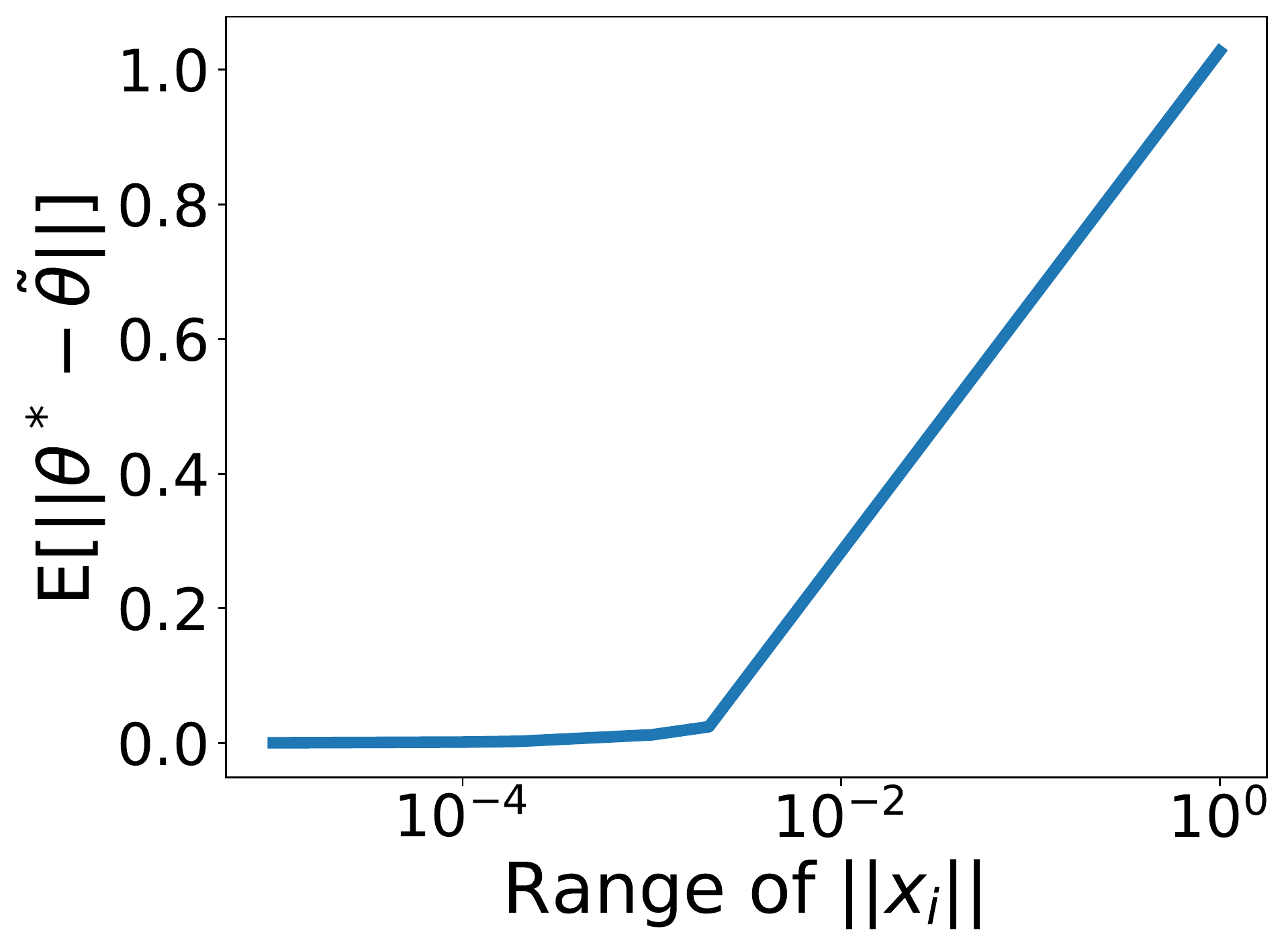}
\caption{Bank dataset}
\end{subfigure}
\begin{subfigure}[b]{0.3\textwidth}
\includegraphics[width = 1.0\linewidth]{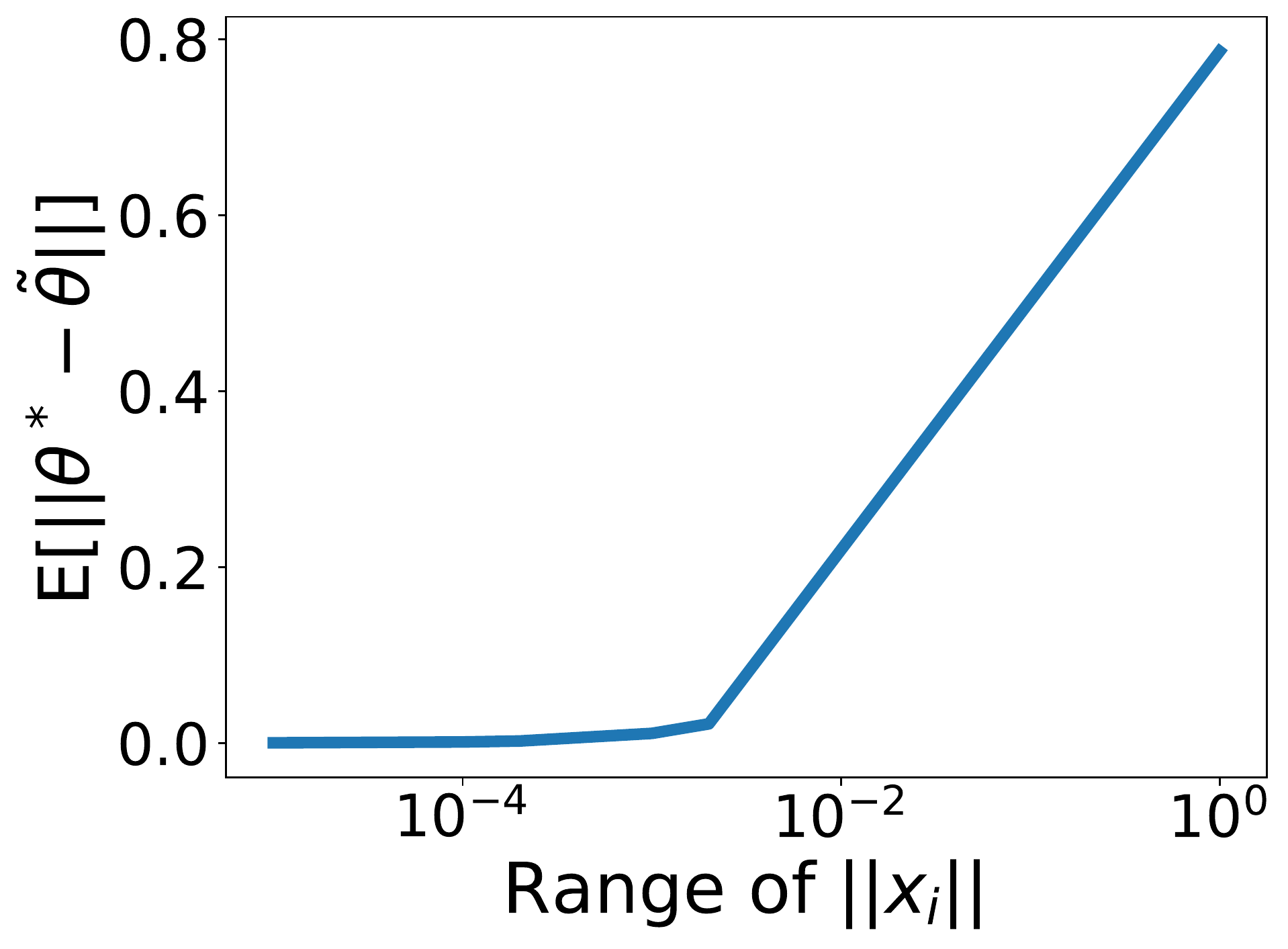}
\caption{Parkinsons dataset}
\end{subfigure}
\caption{Relation between input norm and model sensitivity.}
\label{fig:inputnorm_vs_expected_diff}
\end{figure}

\begin{figure}
\centering
\begin{subfigure}[b]{0.49\textwidth}
\includegraphics[width = 1.0\linewidth]{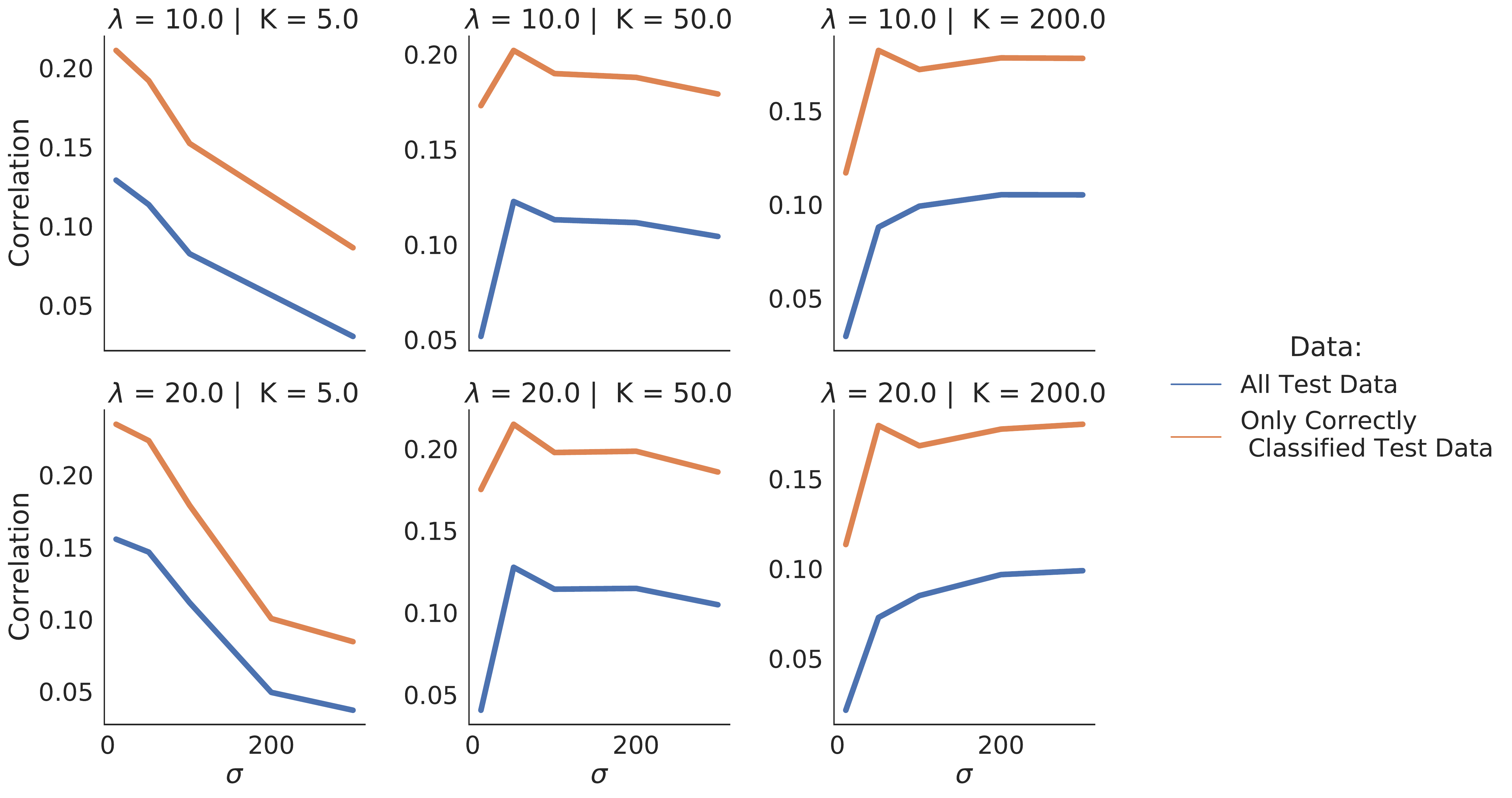}
\caption{Bank dataset}
\end{subfigure} 
\begin{subfigure}[b]{0.49\textwidth}
\includegraphics[width = 1.0\linewidth]{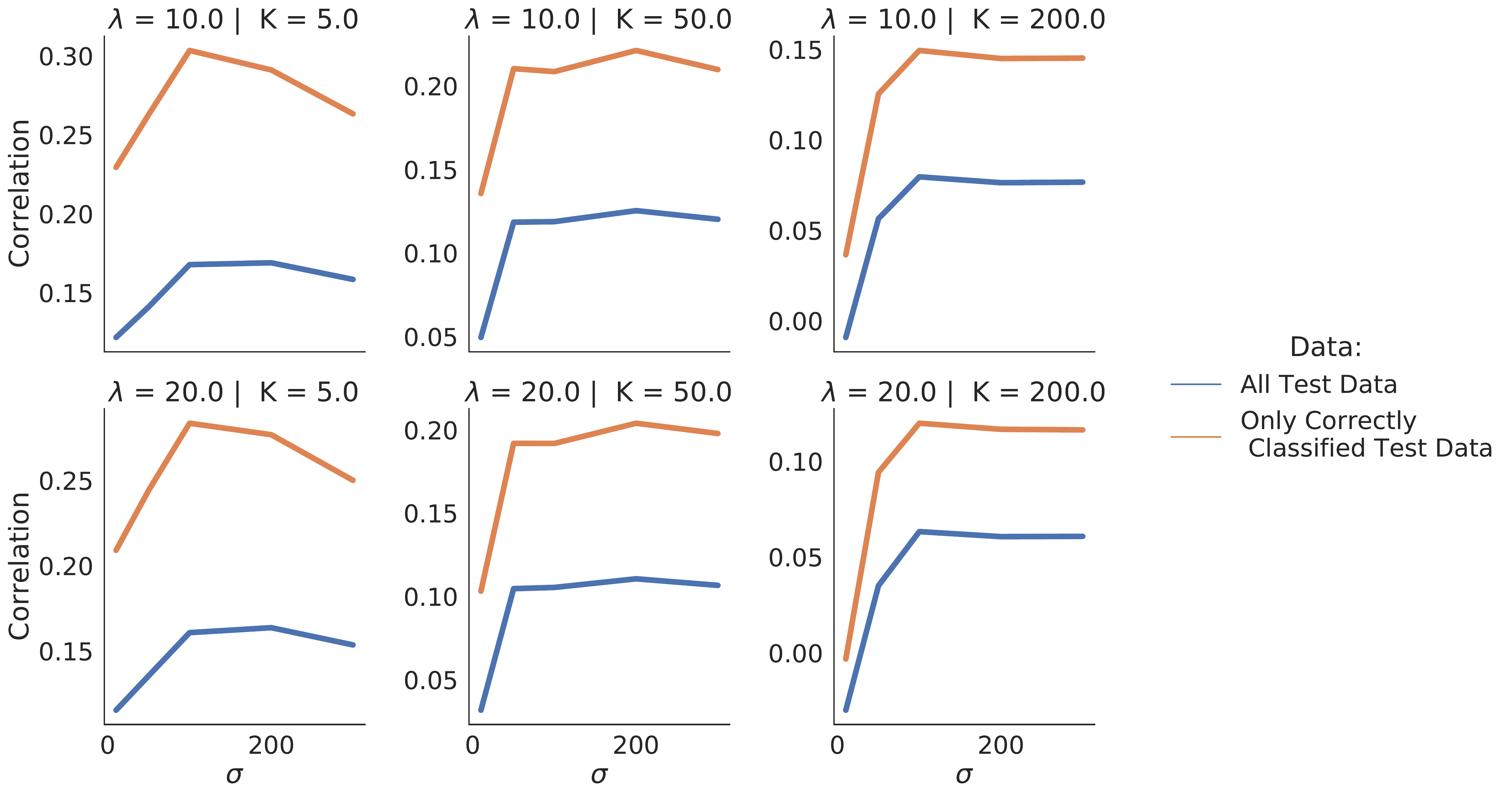}
\caption{Income dataset}
\end{subfigure}
\begin{subfigure}[b]{0.49\textwidth}
\includegraphics[width = 1.0\linewidth]{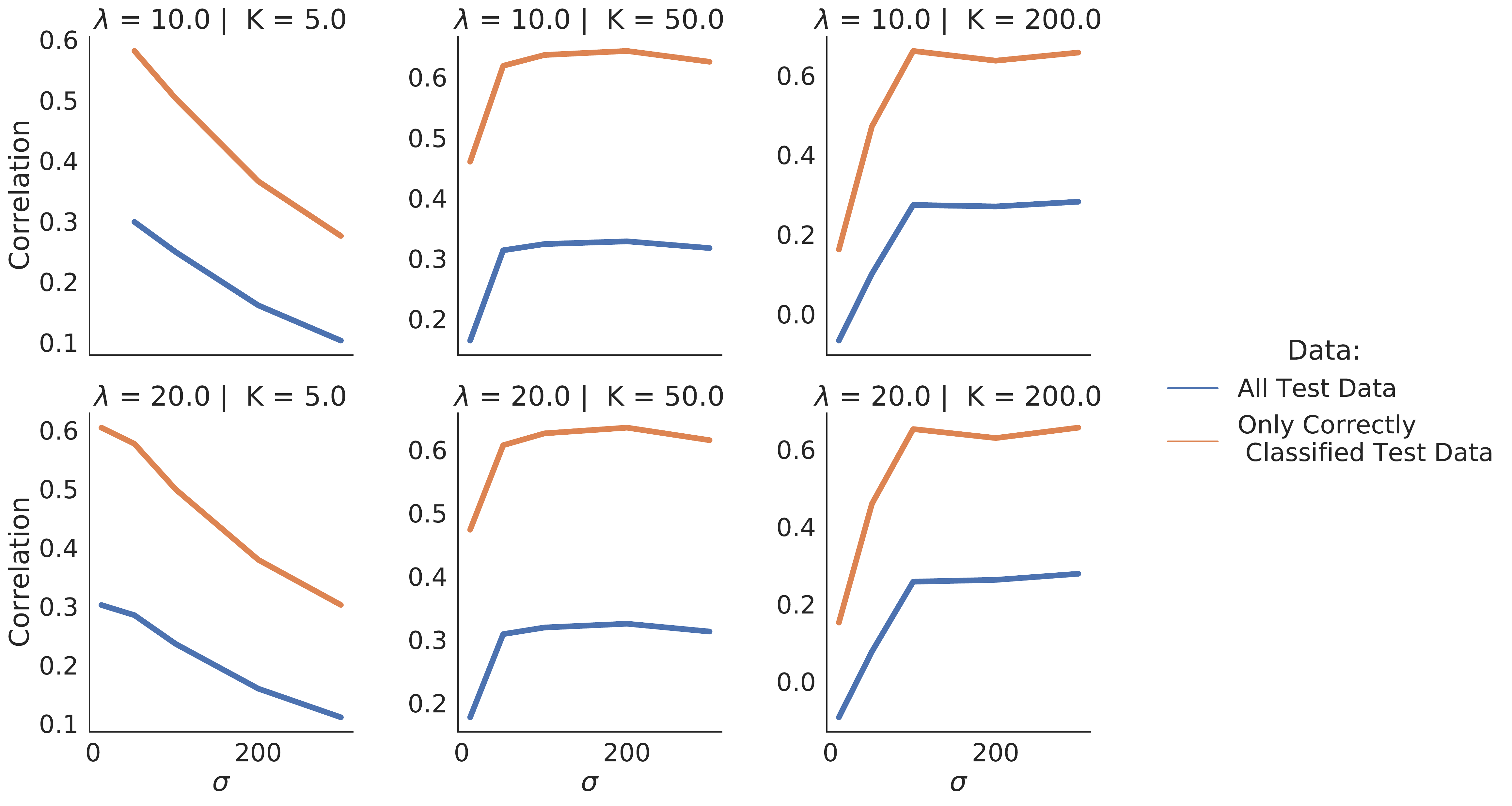}
\caption{Credit card dataset}
\end{subfigure}
\begin{subfigure}[b]{0.49\textwidth}
\includegraphics[width = 1.0\linewidth]{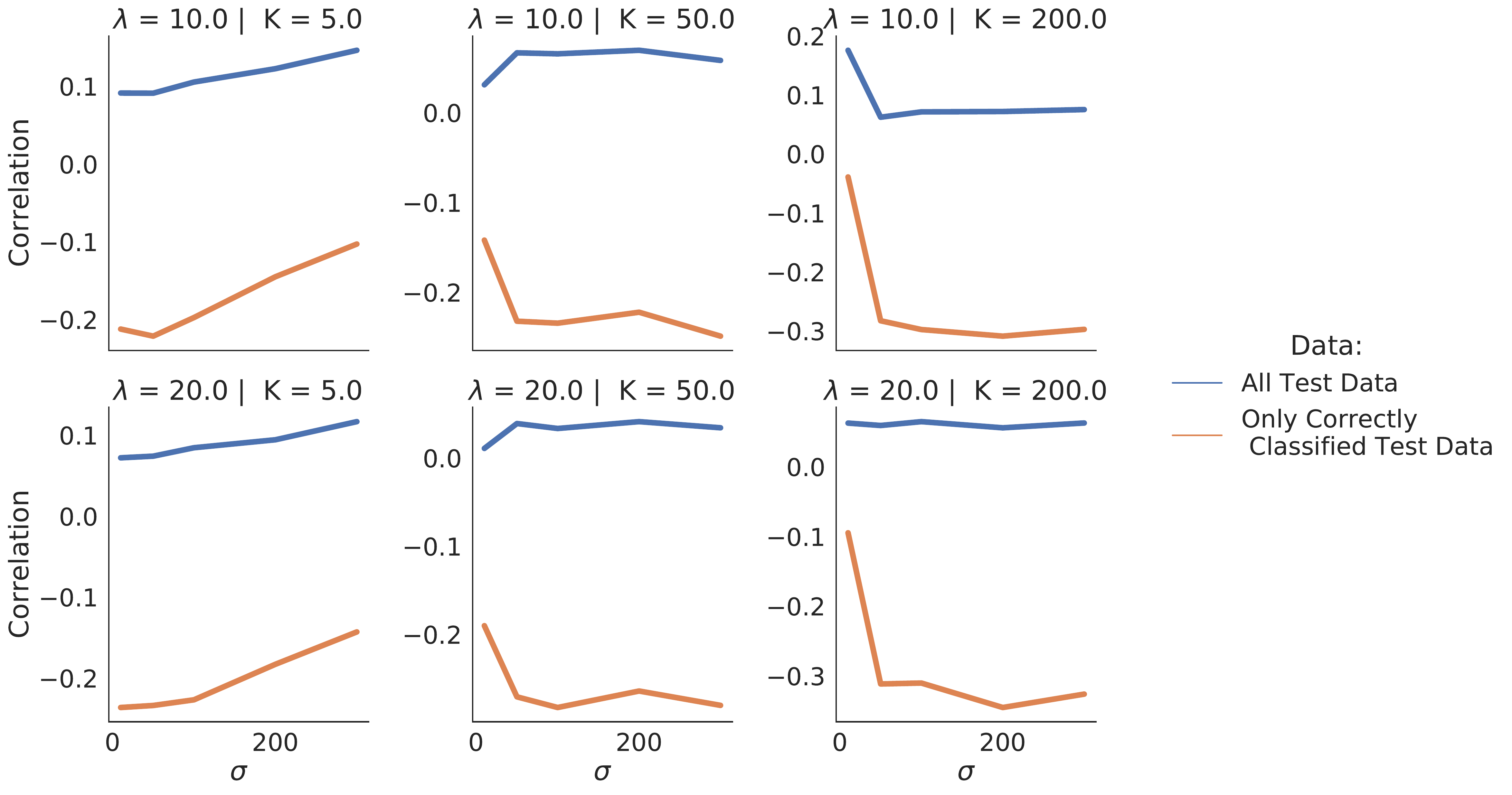}
\caption{Parkinsons dataset}
\end{subfigure}
\caption{Correlation between the excessive risk and input norm on 4 datasets. Here for each dataset, number of teacher $\lambda = 100$, $\sigma = 50, k = 150$.}
\label{fig:corr_norm_all}
\end{figure}

\subsection{Connection between input norm and gradient norm}
\label{app:sec:input_norms}
\begin{figure*}
    \centering
    \includegraphics[width=0.75\linewidth]{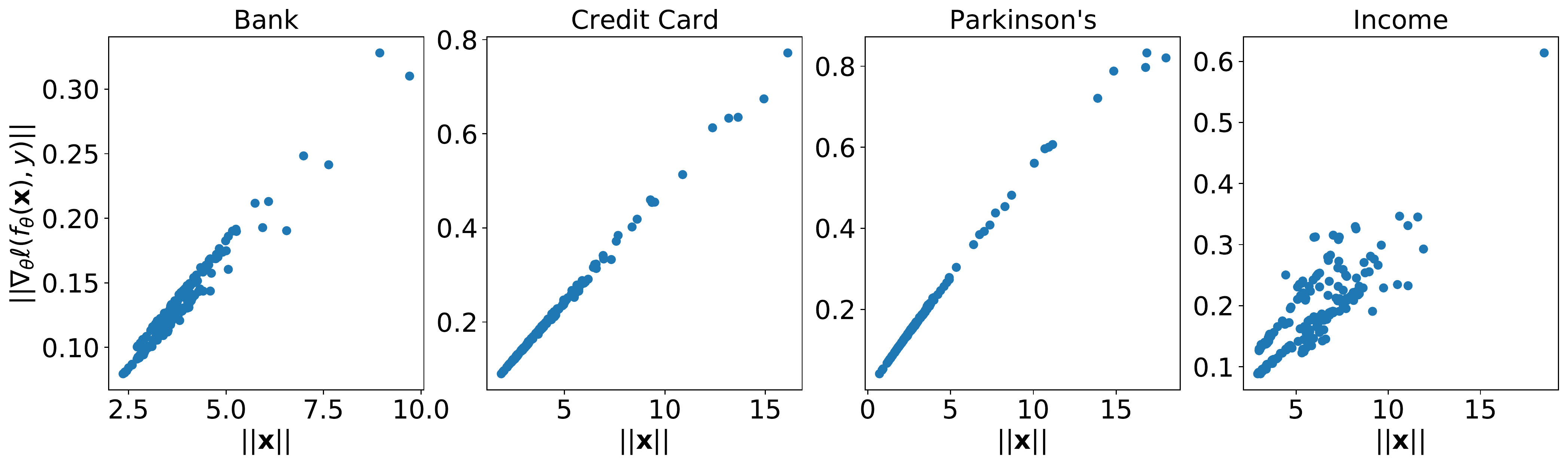}
    \caption{Relation Between Gradient Norm and Input Norm on all datasets.}
    \label{fig:grad_inp_corr_all}
\end{figure*}

Propositions \ref{ex:grad_logreg} and \ref{ex:hessian_logreg} showed the presence of a strong relation between the individual input norms $\| \bm{x}\|$ and their associated gradient norms at the optimal model parameter $\optimal{\btheta}$, $  \| \nabla_{\optimal{\btheta}}  \ell(\bar{f}_{\optimal{\btheta}}(\bm{x}),y)\|$, 
for logistic regression classifiers. 
This subsection extends the analysis to non-linear models. 

In particular, it will show a similar connection between the gradient norms and the input norms for a neural network with a single hidden layer. We start by considering the following settings:
\paragraph{Settings} 

Consider a neural network model $\bar{f}_{\optimal{\btheta}}
 (\bm{x}) \defeq \textsl{softmax} \left(\optimal{\btheta}_1^T \tau(\optimal{\btheta}^T_2 \bm{x}) \right)$ 
 where $\bm{x} = (\bm{x}^i)_{i=1}^d $ is a $d$ dimensional input vector,  $\tau(\cdot)$ is an activation function, 
 the parameters 
 $\optimal{\btheta}_{\!1} \in \RR^{H \times C}, 
 \optimal{\btheta}_{\!2} \in \RR^{d \times H}$, 
 and the cross entropy loss $\ell(\bar{f}_{\optimal{\btheta}}(\bm{x}),y) = -\sum_{c=1}^C y_c \log 
 \bar{f}_{\optimal{\btheta}, c}(\bm{x})$. \\
 Let $\bm{O} = \tau(\optimal{\btheta}^T_2 \bm{x}) \in \RR^H$
 be the vector $(O_1, \ldots, O_H)$ of $H$ hidden nodes of the network. 
 Denote the variables $h_j = \sum_{i=1}^d \optimal{\btheta}_{2,j,i} \bm{x}^i$ as the $j$-th hidden unit 
 before the activation function. 
 Next, denote $\optimal{\btheta}_{1,j,k} \in \RR$ as the weight parameter that connects 
 the $j$-th hidden unit $h_j$ with the $c$-th output unit $\bar{f}_c$ and  
 $ \  \optimal{\btheta}_{2,i,j} \in \RR$ as the weight parameter that connects the $i$-th 
 input unit $\bm{x}^i$ with the $j$-th hidden unit $h_j$.

Given the settings above, we now show the dependency between gradient norms and input norms. First notice that we can decompose the gradients norm of this neural network into two layers as follows:
\begin{equation}
    \| \nabla_{\optimal{\btheta}}\ell(\bar{f}_{\optimal{\btheta}}(\bm{x}),y) \|^2 =  \| \nabla_{\optimal{\btheta}_1}\ell(\bar{f}_{\optimal{\btheta}}(\bm{x}),y) \|^2 + \| \nabla_{\optimal{\btheta}_2}\ell(\bar{f}_{\optimal{\btheta}}(\bm{x}),y) \|^2.
\end{equation}
We will show that $\nabla_{\optimal{\btheta}_2}\ell(\bar{f}_{\optimal{\btheta}}(\bm{x}),y) \| \propto  \|\bm{x}\|.$

Notice that: 
$$
\|\nabla_{\optimal{\btheta}_2}\ell(\bar{f}_{\optimal{\btheta}}(\bm{x}),y) \|^2 = \sum_{i,j} \| \nabla_{\optimal{\btheta}_{2,i,j}}\ell(\bar{f}_{\optimal{\btheta}}(\bm{x}),y) \|^2.
$$

Applying, Equation (14) from \citet{gradient_formula}, it follows that: 
\begin{equation}
   \nabla_{\optimal{\btheta}_{2,i,j}}\ell(\bar{f}_{\optimal{\btheta}}(\bm{x}),y) = 
   \sum_{c=1}^C \left(y_c - \bar{f}_{\optimal{\btheta},c}(\bm{x}) \right) \, 
   \optimal{\btheta}_{1,j,c}\left(O_j(1 - O_j) \right) \bm{x}^i,
\end{equation}
which highlights the dependency of the gradient norm 
$\| \nabla_{\optimal{\btheta}_2}\ell(\bar{f}_{\btheta}(\bm{x}),y) \|$ and the input norm $\| \bm{x}\|$. Figure \ref{fig:grad_inp_corr_all} provides empirical evidence supporting this dependency. It shows a strong positive correlation between input norms and the gradient norms at individual levels on all datasets analyzed.

\subsection{Upper bound of the expected model sensitivity} 

The following provides empirical results for Corollary \ref{cor:1} on four benchmark datasets. As indicated in this corollary, the expected model sensitivity is bounded by $ \frac{1}{m\lambda} \left[ \sum_{\bm{x} \in \bar{D}} \fp{\bm{x}} \| \bm{x} \| \right]$. 
Figure \ref{fig:cor1_bound} illustrates the tightness of this bound by plotting the RHS and the LHS values of Equation \eqref{eq:8} on different datasets. The plots in Figure 
\ref{fig:cor1_bound} use $20$ teachers and regularization parameter $\lambda = 20$ (top)  and $\lambda = 100$ (bottom). 

\begin{figure}[t]
\centering
\begin{subfigure}[b]{0.24\textwidth}
\includegraphics[width = 1.0\linewidth]{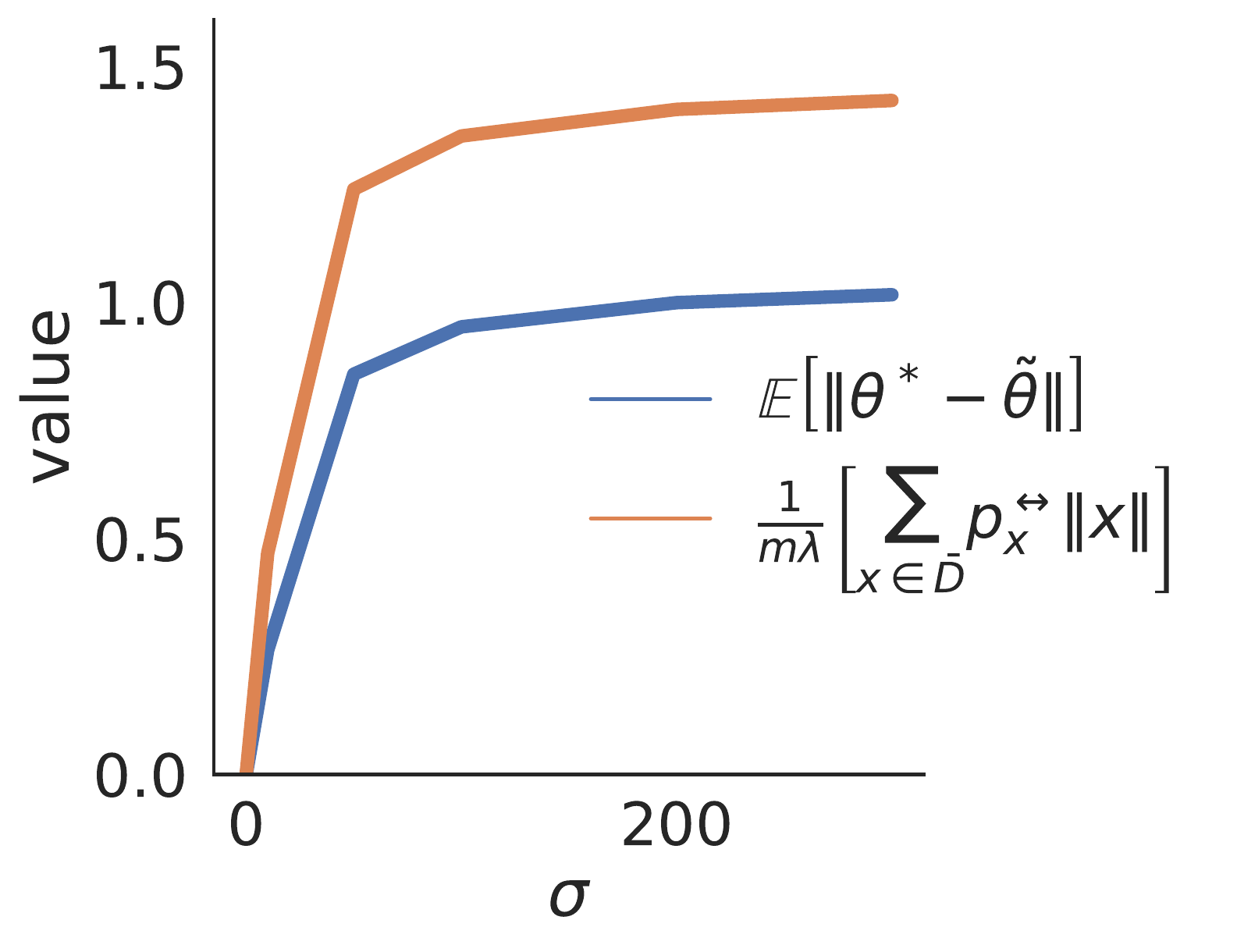}\\
\includegraphics[width = 1.0\linewidth]{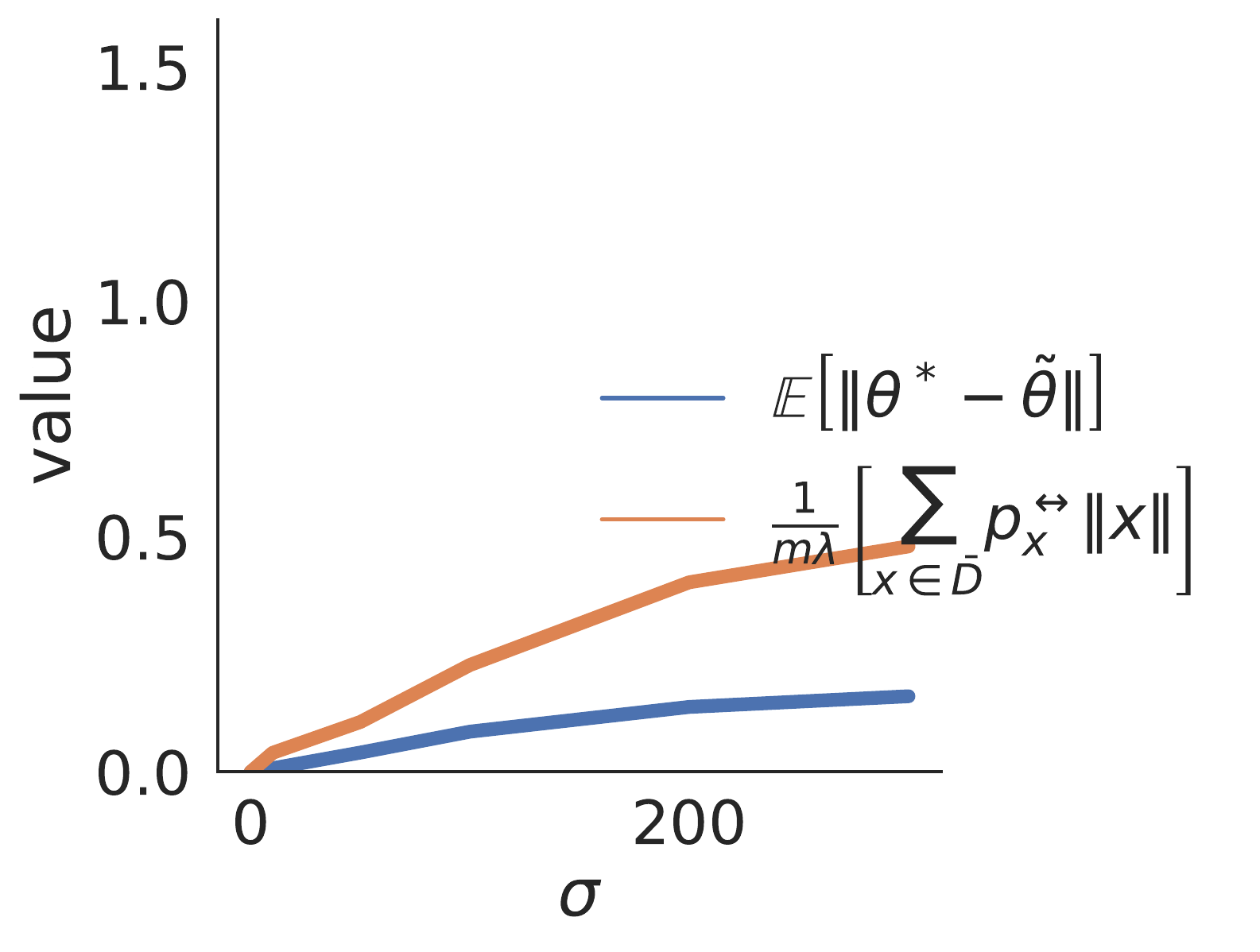}
\caption{Bank dataset}
\end{subfigure}
\begin{subfigure}[b]{0.24\textwidth}
\includegraphics[width = 1.0\linewidth]{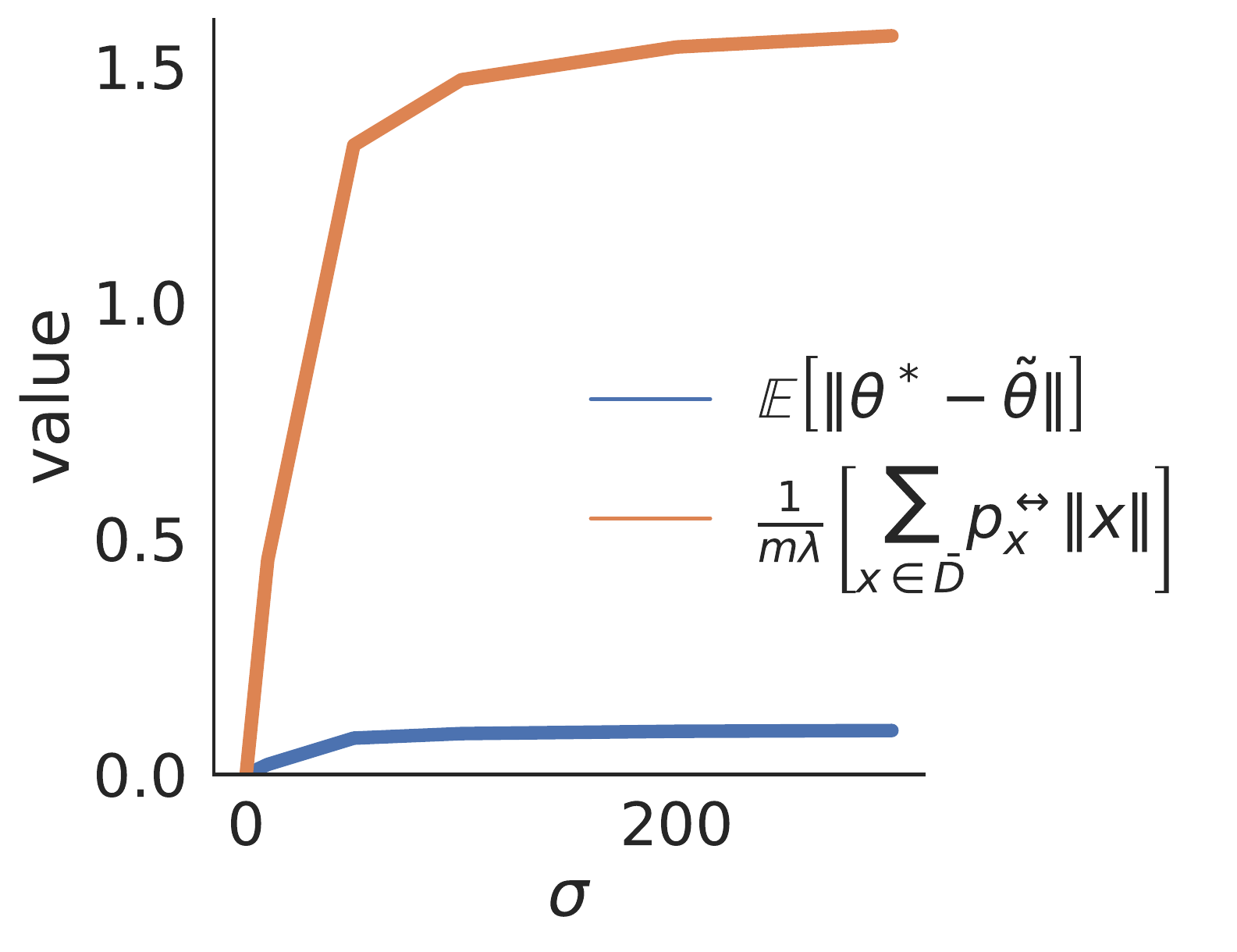}\\
\includegraphics[width = 1.0\linewidth]{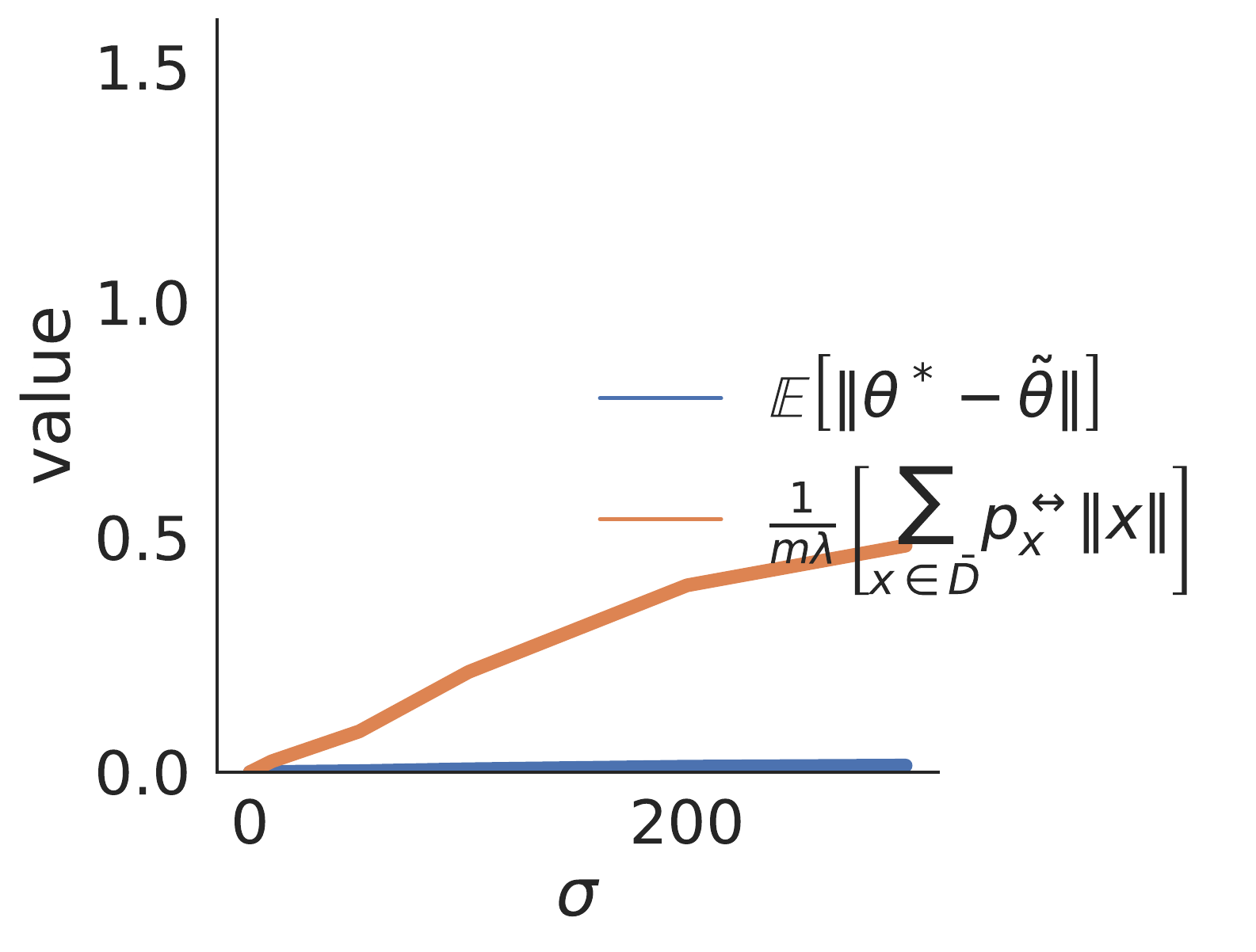}
\caption{Credit card dataset}
\end{subfigure}
\begin{subfigure}[b]{0.24\textwidth}
\includegraphics[width = 1.0\linewidth]{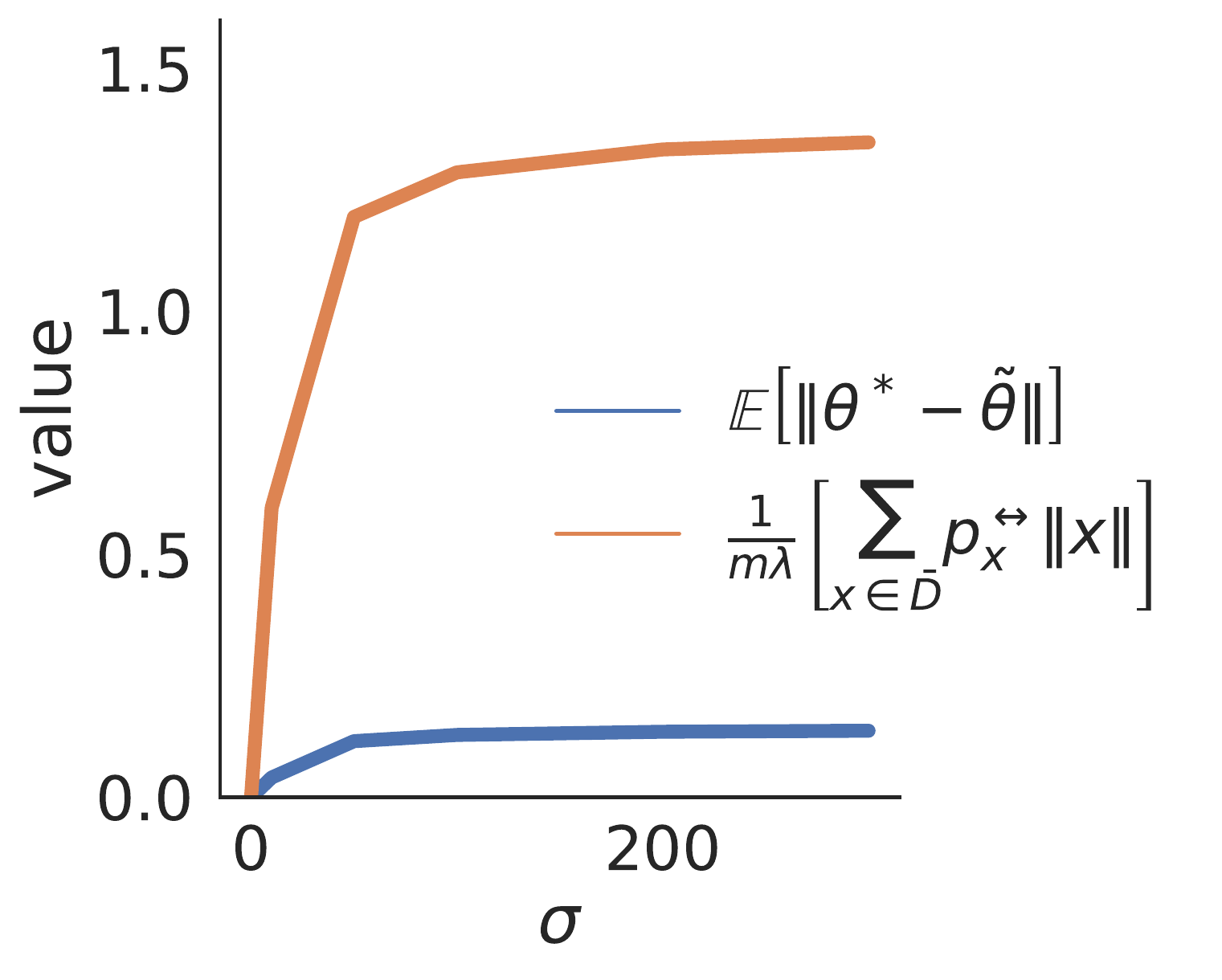}\\
\includegraphics[width = 1.0\linewidth]{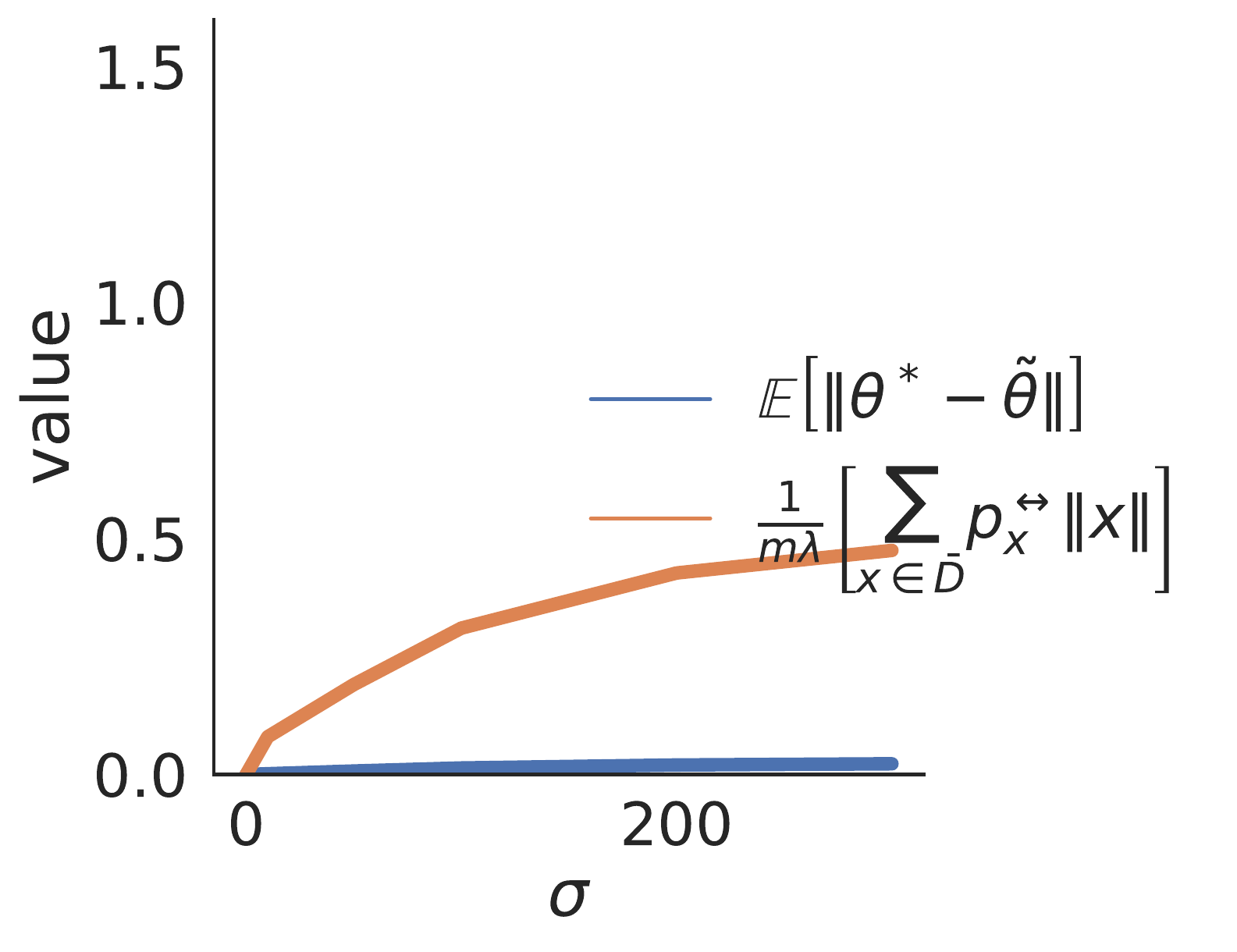}
\caption{Income dataset}
\end{subfigure}
\begin{subfigure}[b]{0.24\textwidth}
\includegraphics[width = 1.0\linewidth]{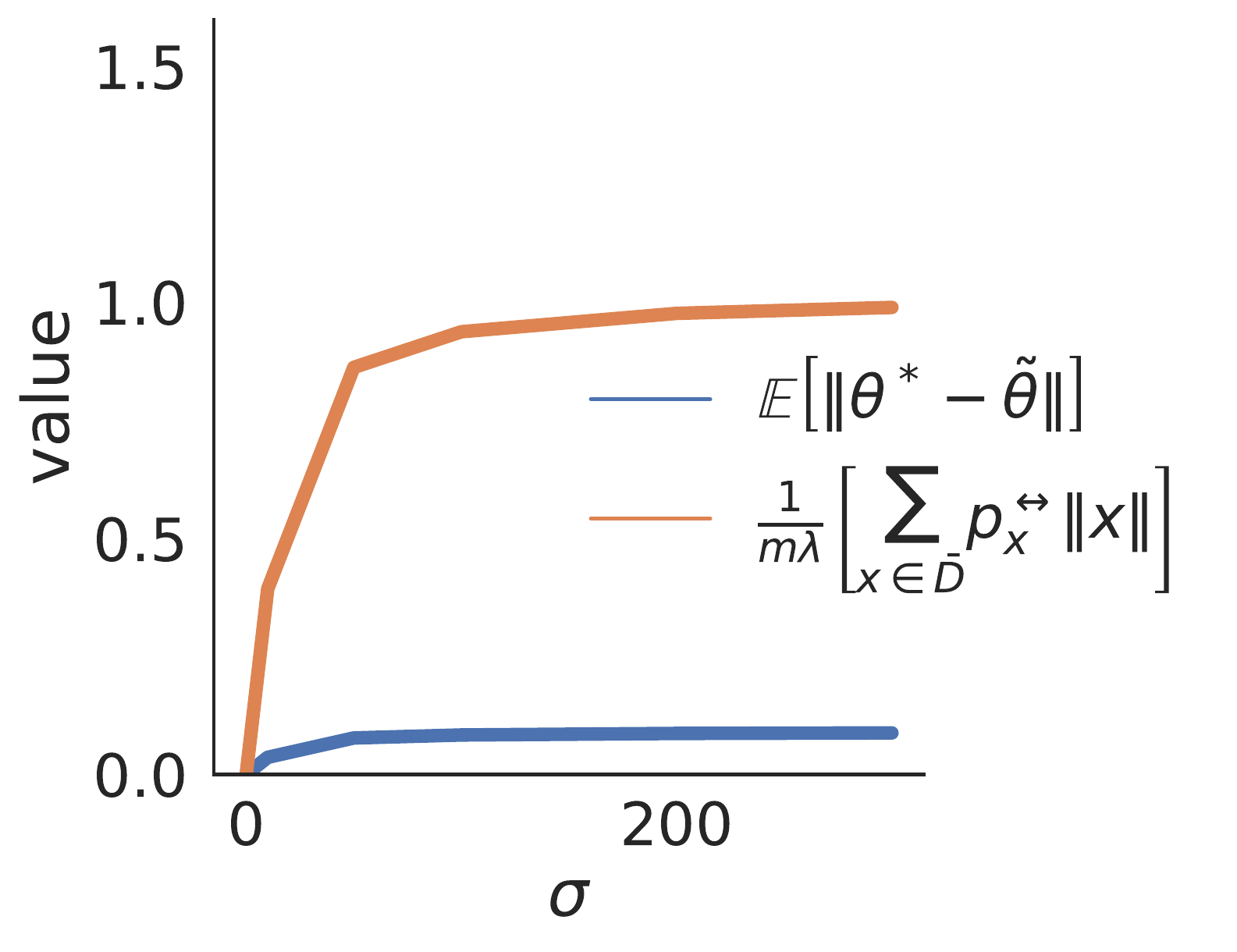}\\
\includegraphics[width = 1.0\linewidth]{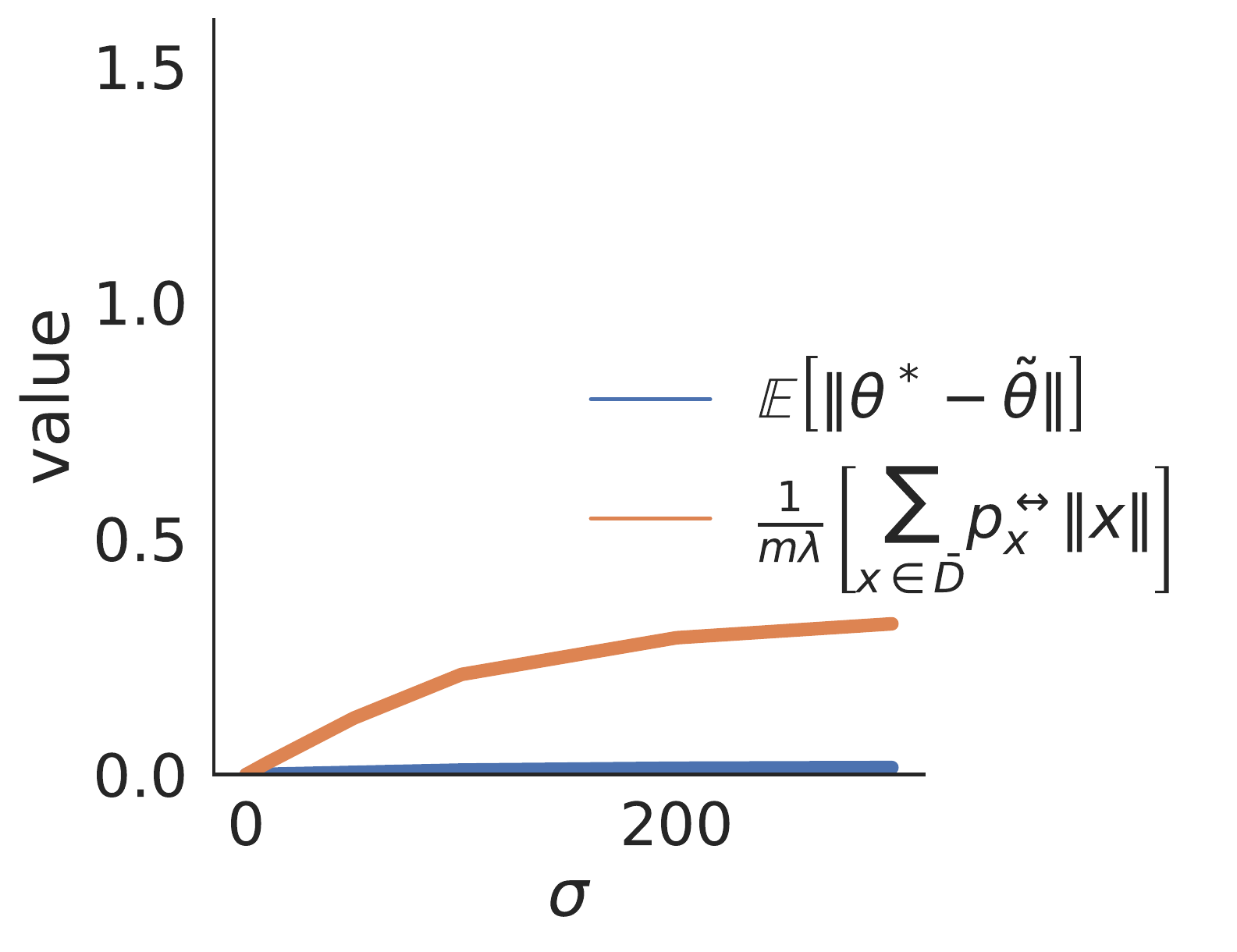}
\caption{Parkinsons dataset}
\end{subfigure}
\caption{Upper bound of the expected model sensitivity on 4 datasets with $k = 20$ and $\lambda = 20$ (top) and $\lambda = 100$ (bottom).}
\label{fig:cor1_bound}
\end{figure}

\subsection{Effectiveness of mitigation solution}

This subsection provides extended empirical results regarding the effectiveness of the proposed mitigation solution, presented in Section \ref{sec:mitigation}. 

It reports a comparison between training PATE with hard and soft labels when $k=20$  (Figure \ref{fig:mitigation_solution_K_20}) and when $k=150$ (Figure \ref{fig:mitigation_solution_K_150}). 
The analysis compares the models learned with hard and soft labels in terms of utility and fairness. 
In each figure, the top subplots show the group excessive risks $R(\bar{D}_{\leftarrow 0})$ and $R(\bar{D}_{\leftarrow 1})$ associated with minority (0) and majority (1) groups while the bottom subplot illustrate the accuracy of the model, at increasing of the privacy loss $\epsilon$. 
The figures clearly show how the models trained using soft labels achieve improved fairness (it reduces the excessive risk differences between the groups) without sacrificing accuracy.

Finally, recall that the mitigation solution does not require the availability of group labels during training. This challenging settings are of importance under the scenario when it is not feasible to collect or use protected features (e.g., under GDPR).

\begin{figure*}
\centering
\begin{subfigure}[b]{0.45\textwidth}
\includegraphics[width=\textwidth, height=120pt]{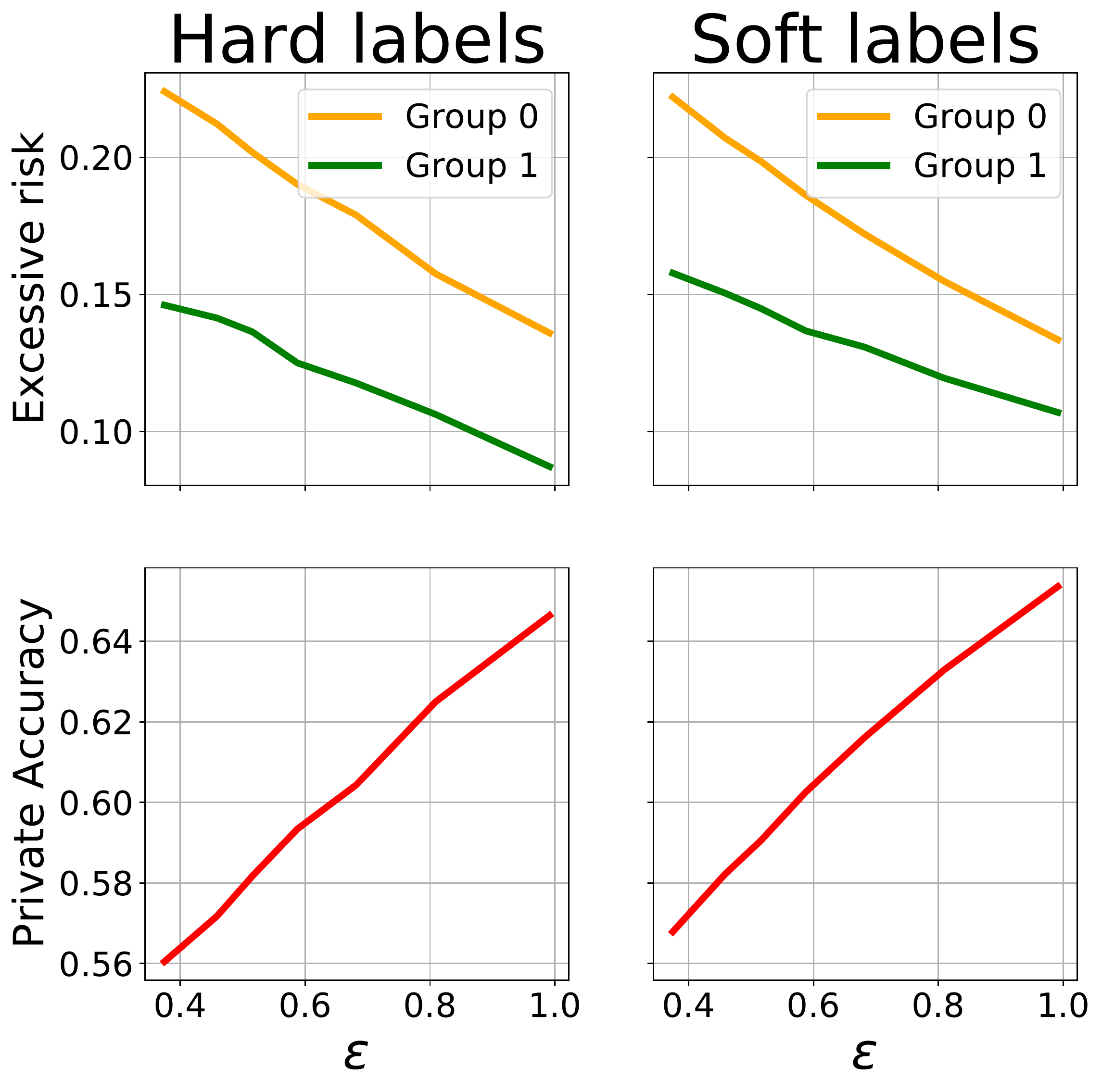}
\caption{}
\end{subfigure}
\begin{subfigure}[b]{0.45\textwidth}
\includegraphics[width=\linewidth, height=120pt]{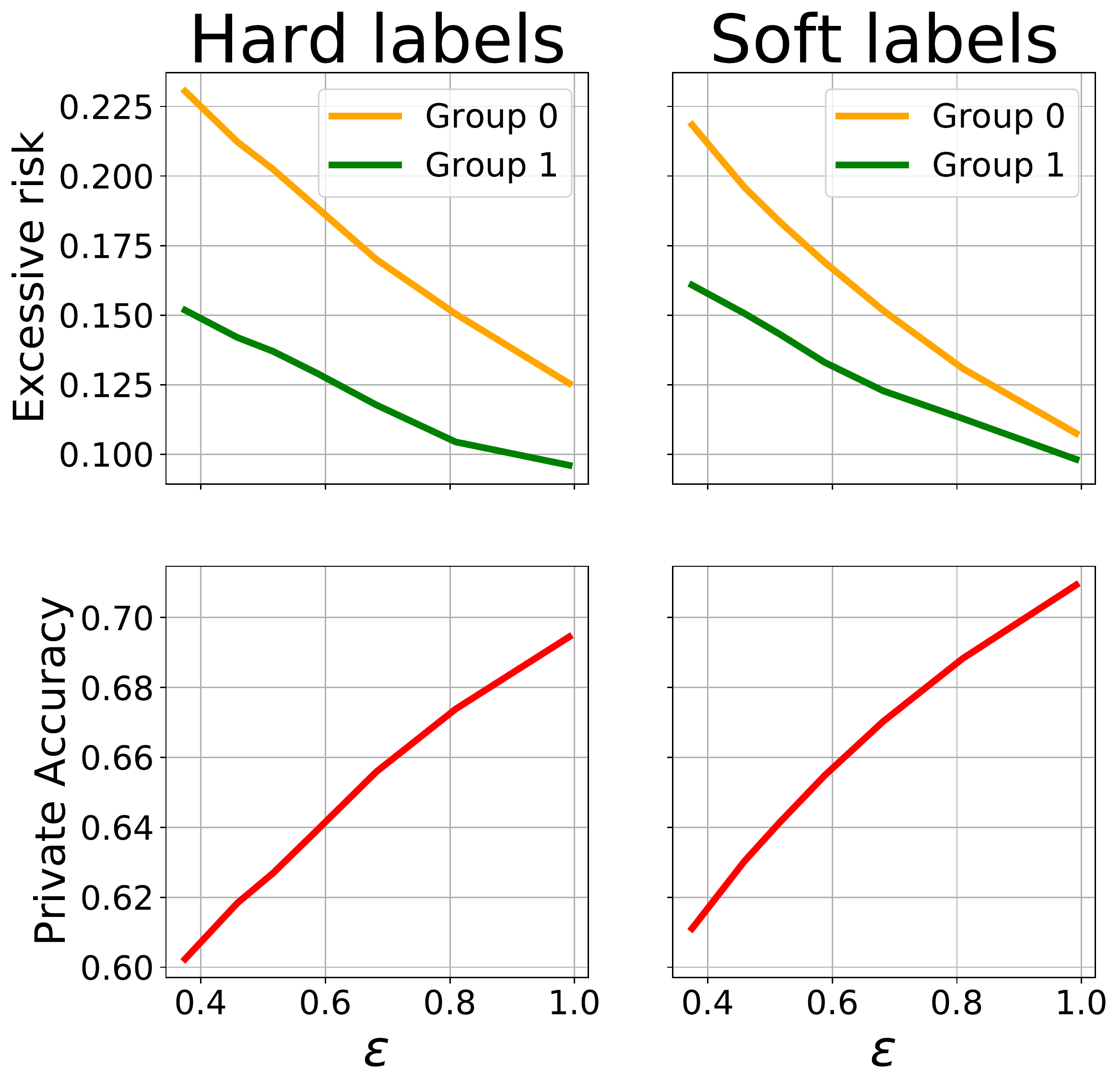}
\caption{}
\end{subfigure}
\begin{subfigure}[b]{0.45\textwidth}
\includegraphics[width=\linewidth, height=120pt]{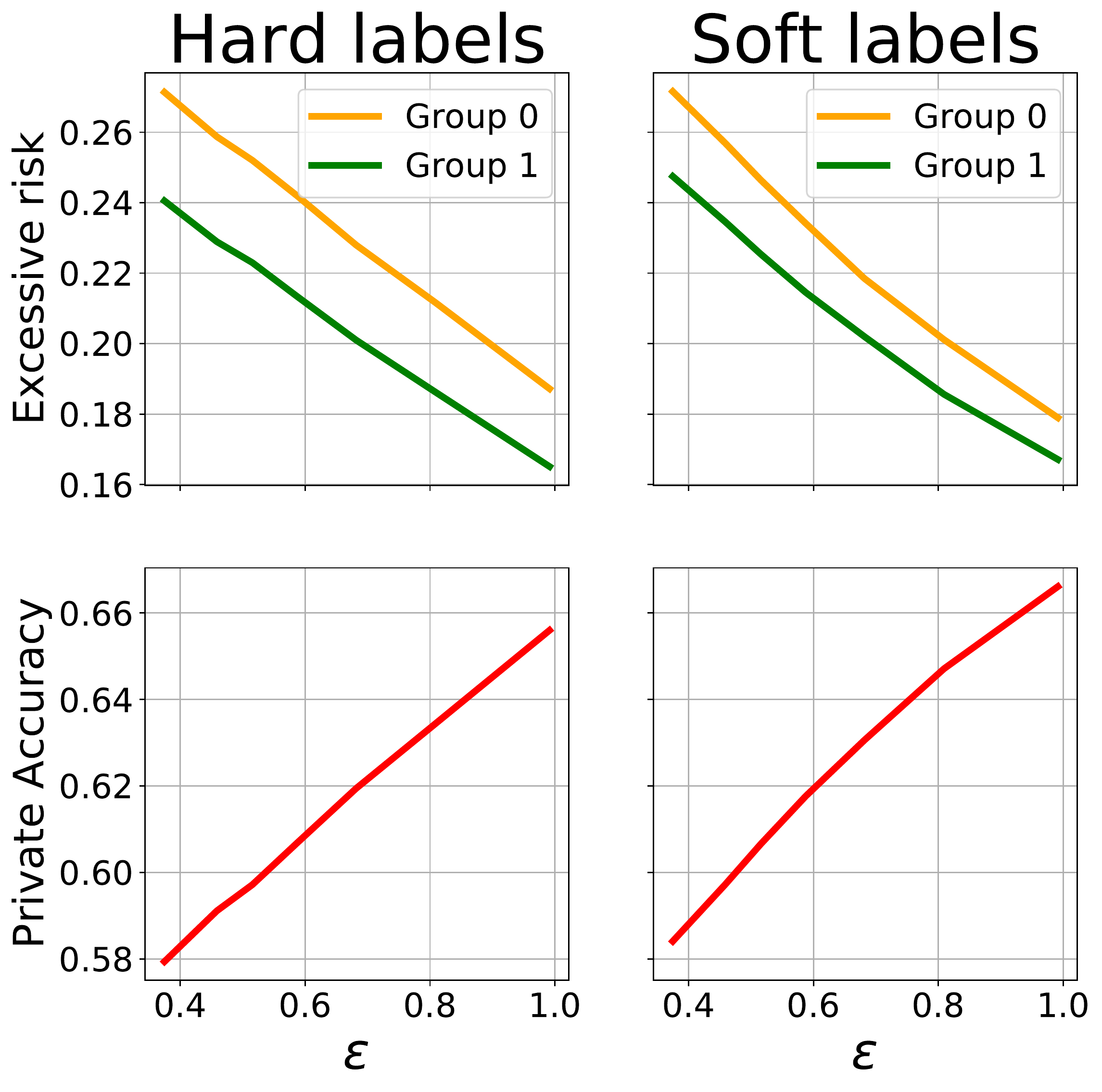}
\caption{}
\end{subfigure}
\begin{subfigure}[b]{0.45\textwidth}
\includegraphics[width=\linewidth, height=120pt]{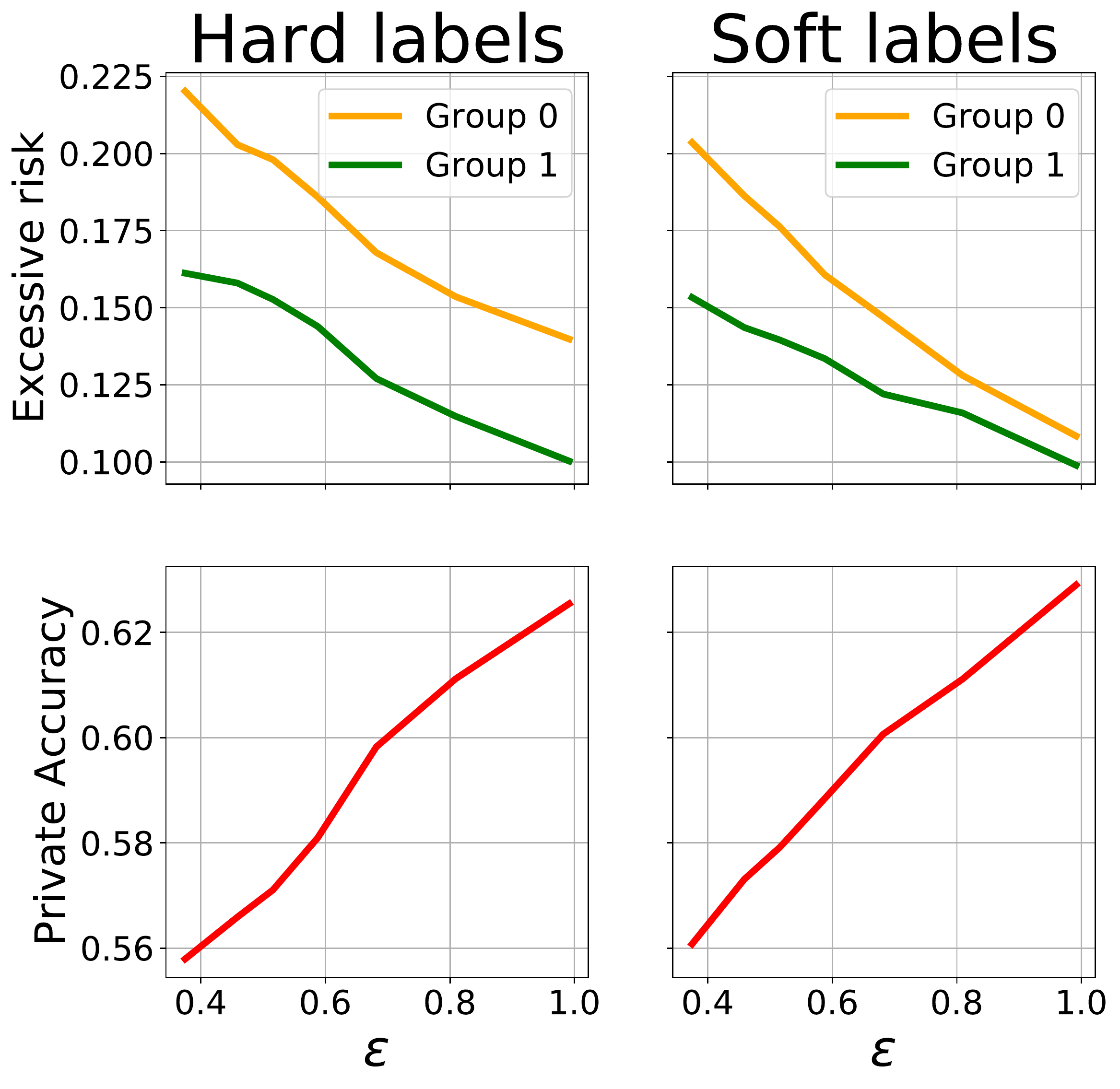}
\caption{}
\end{subfigure}
\caption{Comparison between training privately PATE with hard labels and soft labels in term of fairness (top subfigures) and utility(bottom subfigures) on (a) Bank, (b) Credit card, (c) Income and (d) Parkinsons dataset. Here for each dataset, the number of teachers $k=20$. }
\label{fig:mitigation_solution_K_20}
\end{figure*}

\begin{figure*}
\centering
\begin{subfigure}[b]{0.4\textwidth}
\includegraphics[width=\textwidth, height=4cm]{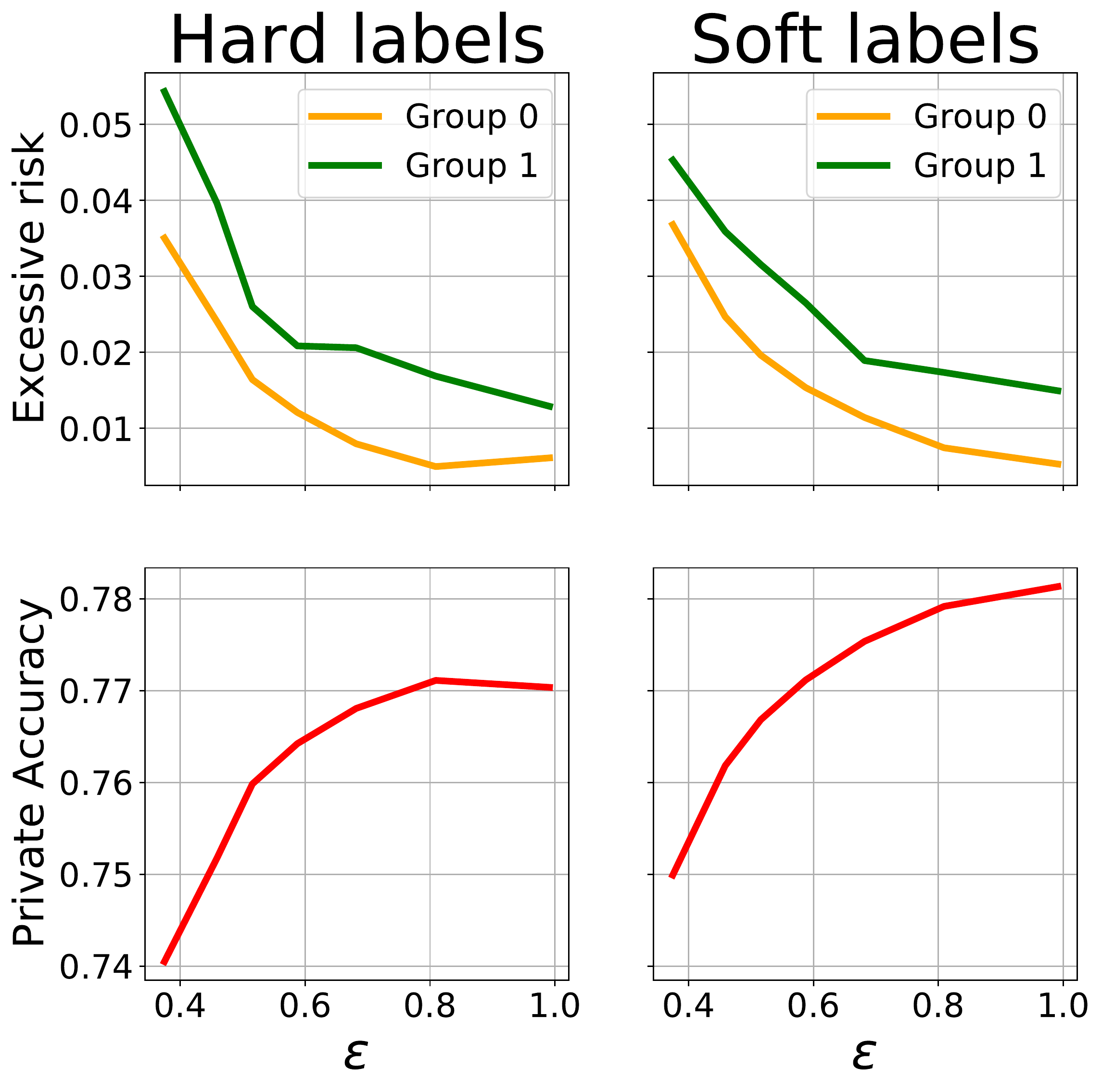}
\caption{}
\end{subfigure}
\begin{subfigure}[b]{0.4\textwidth}
\includegraphics[width=\linewidth, height=4cm]{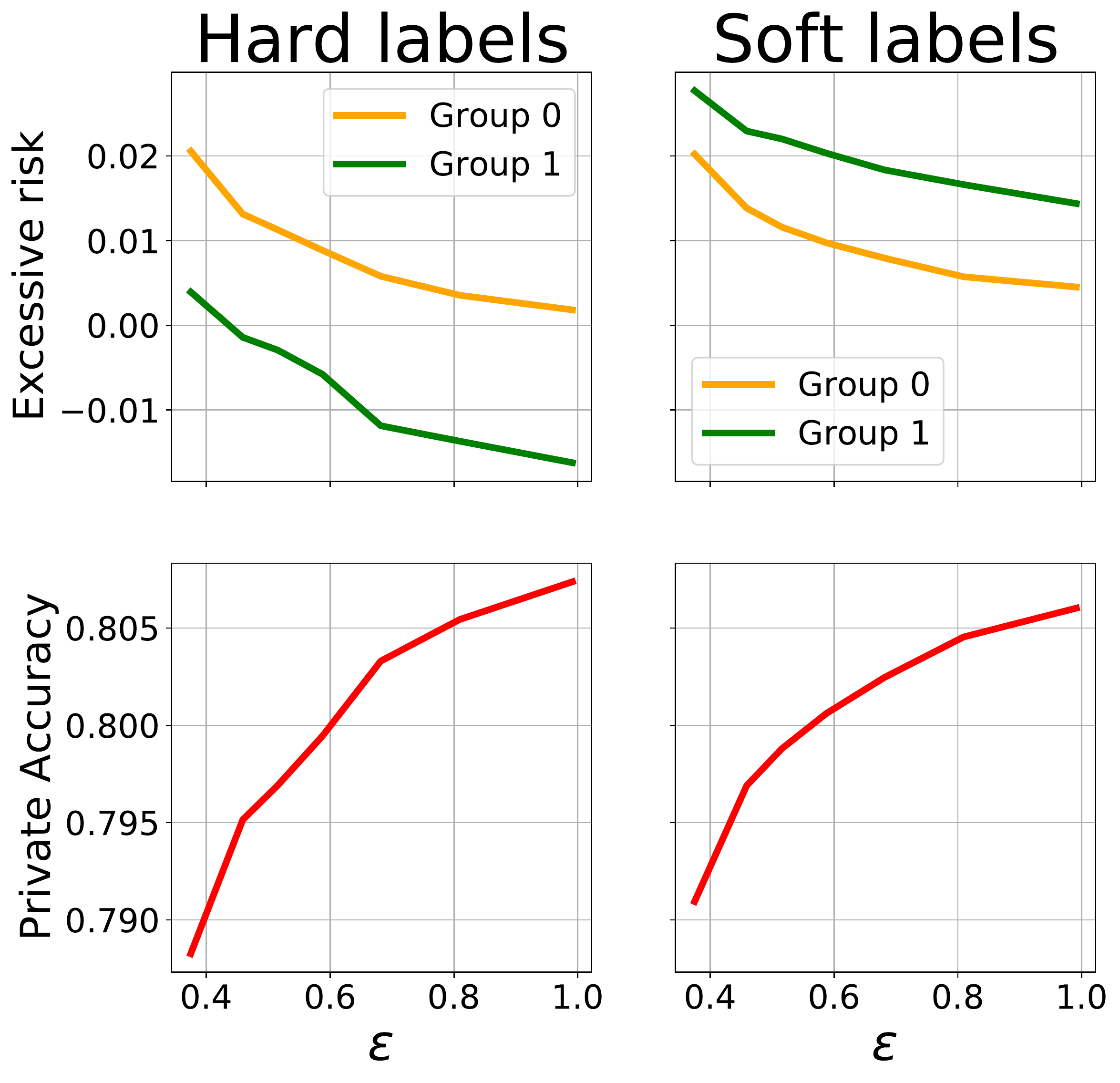}
\caption{}
\end{subfigure}
\begin{subfigure}[b]{0.4\textwidth}
\includegraphics[width=\linewidth, height=4cm]{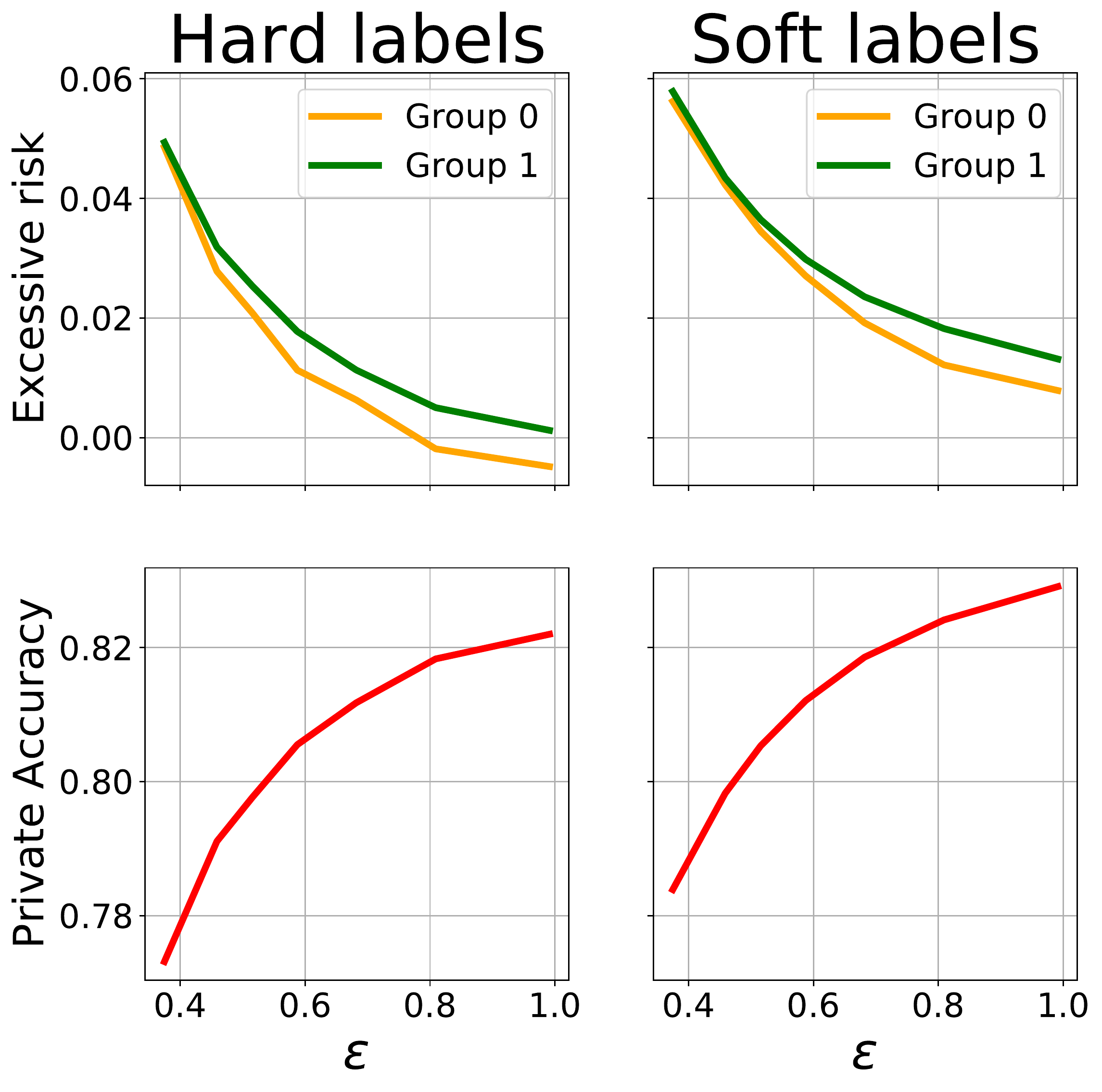}
\caption{}
\end{subfigure}
\begin{subfigure}[b]{0.4\textwidth}
\includegraphics[width=\linewidth, height=4cm]{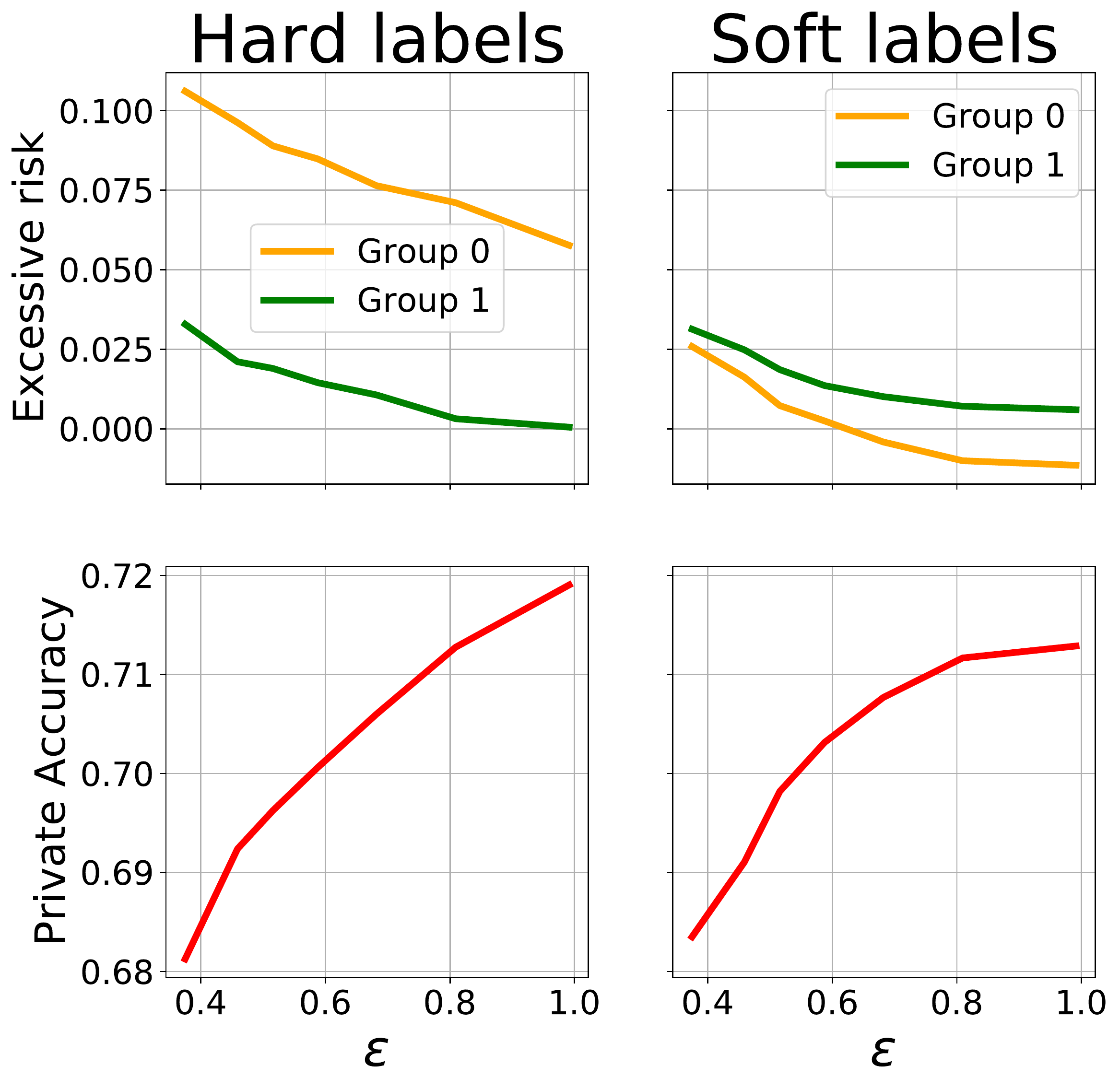}
\caption{}
\end{subfigure}
\caption{Comparison between training privately PATE with hard labels and soft labels in term of fairness (top subfigures) and utility(bottom subfigures) on (a) Bank, (b) Credit card, (c) Income and (d) Parkinsons dataset. Here for each dataset, the number of teachers $k=150$. }
\label{fig:mitigation_solution_K_150}
\end{figure*}

\end{document}